\documentclass{article}

 \usepackage[preprint]{neurips_2026}

\usepackage[utf8]{inputenc} 
\usepackage[T1]{fontenc}    
\usepackage{hyperref}       
\usepackage{url}            
\usepackage{booktabs}       
\usepackage{amsfonts}       
\usepackage{nicefrac}       
\usepackage{microtype}      
\usepackage{xcolor}         

\usepackage{graphicx} 
\usepackage{amsmath,amssymb,amsthm,amsfonts, algorithmicx, algpseudocode, algorithm}
\usepackage{geometry} 
\usepackage{bm} 
\usepackage{mathrsfs}
\usepackage{textcomp}
\usepackage{tcolorbox}
\usepackage{float,xcolor}
\usepackage{indentfirst,color}
\usepackage{fancyhdr}
\usepackage{epstopdf}
\usepackage{hyperref}
\usepackage{amsopn}
\usepackage{amstext}
\usepackage{amscd}
\usepackage{latexsym} 
\usepackage{natbib}
\usepackage{subcaption}
\usepackage{tikz}
\usepackage{enumitem}

\usetikzlibrary{arrows.meta, positioning, calc}

\newtheorem{thm}{Theorem}[section]

\newtheorem{prop}{Proposition}[section]

\theoremstyle{definition}

\newtheorem{lemma}{Lemma}[section]

\newtheorem{asp}{Assumption}

\title{Transformers Provably Implement In-Context Reinforcement Learning with Policy Improvement}

\author{Haodong Liang, Lifeng Lai\\
Department of Electrical and Computer Engineering\\
University of California, Davis\\
\texttt{\{hdliang,lflai\}@ucdavis.edu} 
}

\begin{document}

\maketitle

\begin{abstract}
    We investigate the ability of transformers to perform in-context reinforcement learning (ICRL), where a model must infer and execute learning algorithms from trajectory data without parameter updates. We show that a linear self-attention transformer block can provably implement policy-improvement methods, including semi-gradient SARSA and actor-critic, via explicit parameter constructions. Beyond existence, we design a teacher-mimicking training procedure, analyze its gradient-flow dynamics, and establish the first convergence guarantee in the ICRL literature: under suitable richness conditions on the training MDP distribution, gradient flow converges locally and exponentially to an optimal parameter manifold corresponding to the desired RL update. Empirically, training transformers on randomly generated tabular MDPs confirms these predictions: the learned models recover the parameter structure of our explicit constructions and, when deployed on unseen MDPs, deliver strong in-context control performance. Together, these results illuminate how transformer architectures internalize and execute classical reinforcement learning algorithms in context, bridging mechanistic understanding and training dynamics in ICRL.
\end{abstract}

\section{Introduction}
In-context learning (ICL) refers to a model's ability to learn and perform a new task by conditioning on examples provided in its input, without any parameter updates \citep{brown2020language,garg2023transformers,dong2024surveyincontextlearning}. The transformer architecture \citep{vaswani} has empirically demonstrated remarkable in-context learning capabilities across a wide range of tasks, among which Large Language Models (LLMs) are the most prominent example \citep{brown2020language,chowdhery2023palm,achiam2023gpt4}. 

This phenomenon has motivated substantial interest in understanding the mechanisms that enable transformers to learn in-context. A growing body of theoretical work has sought to understand this by asking: \emph{which algorithms can a transformer execute in context, and why can it implement them?}

For supervised learning problems, this question now has a fairly complete answer: trained transformers can be shown to implement variants of gradient descent and closed-form least squares on the in-context examples, with explicit constructions and, in special cases, end-to-end training-dynamics guarantees \citep{garg2023transformers,vonoswald2023transformers,akyurek2023what,ahn2023transformers,mahankali2024one,bai2023transformers,zhang2024trained,liang2025transformers}.

Beyond the supervised learning scenario, recent works have begun to extend this lens to in-context reinforcement learning (ICRL)\citep{laskin2023algorithm,lin2024transformers,moeini2025surveyincontextreinforcementlearning}, where an agent is given a trajectory in its context and must internally execute a policy evaluation or policy improvement step — core operations in the reinforcement learning framework \citep{sutton2018reinforcement}. A central question in this line of work is:
\begin{center}
\emph{Can a transformer block implement -- and be trained to implement \\ standard RL algorithms such as temporal-difference (TD) methods, SARSA, or actor-critic?} 
\end{center}
While these works, including \citet{laskin2023algorithm}, \citet{lee2023supervised}, \citet{brooks2023large}, and \citet{raparthy2024generalization}, primarily focus on empirical experiments and demonstrate strong performance in various ICRL settings, they lack theoretical guarantees on the underlying mechanisms. Toward a theoretical answer, \citet{TCLTD} take a first step by showing that a linear-attention block admits a closed-form structure that implements policy evaluation via TD learning. Two questions, however, remain open. First, their analysis is restricted to policy evaluation, leaving open whether a transformer can also implement \emph{policy-improvement} algorithms -- a component that is no less central to RL. Second, existing analyses establish only the existence of an \emph{invariant} set: once the parameters lie in the set, the expected update keeps them there. They do not show whether the training actually drives the parameters into this set in the first place, nor identify the conditions under which it does.

In this work, we close both gaps: we move from policy evaluation to \emph{policy improvement} by showing that a single linear-attention block can implement classical on-policy control updates -- SARSA and actor-critic -- and we move from invariance to \emph{convergence} by identifying sufficient conditions under which gradient training provably drives the transformer parameters into the theoretical manifold that exactly realizes the target update.

Our existence theorem is similar in form to \citet{TCLTD} in that both exhibit a closed-form weight configuration $(\mathbf{P}^\star,\mathbf{V}^\star)$ for a single linear-attention block, but there are substantial differences. In \citet{TCLTD}, their transformer outputs only the value estimate at the query sample, so the construction is tailored to a fixed policy evaluation problem and cannot be reused to drive further interaction. Our construction instead directly outputs the updated parameters themselves -- the critic parameter $\mathbf{w}$ for SARSA, and both the critic $\mathbf{w}$ and the actor $\boldsymbol{\lambda}$ for actor-critic -- which can then be utilized to update policy, sample new trajectories, and close the in-context policy-improvement loop.

On the training-dynamics side, the optimum is no longer a single point but a continuous manifold of equivalent constructions, so global strong-convexity arguments do not apply directly; we instead establish a \emph{local} Polyak--\L{}ojasiewicz inequality on a tubular neighborhood of the manifold which, together with explicit richness conditions on the training-MDP distribution, yields exponential convergence of the gradient flow to a single point on the manifold.

Our contributions are summarized as follows:
\begin{itemize}
    \item \textbf{Existence Proof.} We construct explicit parameters $\boldsymbol{\theta}^\star=(\mathbf{P}^\star,\mathbf{V}^\star)$ for a single-head linear self-attention block such that $\textsf{TF}_{\boldsymbol{\theta}^\star}$ exactly realizes the batch SARSA semi-gradient update (Theorem~\ref{thm: semi-sarsa tf}) and the batch actor-critic update (Theorem~\ref{thm: policy gradient tf}).
    \item \textbf{Local convergence.} Under explicit richness assumptions on the training-MDP distribution (Assumption~\ref{asp:training-MDP assumptions}), we prove that by mimicking the teacher updates, the gradient flow on the population loss converges locally to a single point on the manifold (Theorem~\ref{thm:local-convergence-manifold}). To our knowledge this is the first convergence guarantee in the ICRL literature.
    \item \textbf{Empirical validation.} We design ICRL training algorithms (Algorithm \ref{alg: training transformer}, Algorithm \ref{alg: training transformer - actor critic}) for both the SARSA and actor-critic settings, and show that the resulting trained transformers recover the theoretical block structure of the attention parameters $(\mathbf{P}^\star,\mathbf{V}^\star)$ (Section~\ref{subsec: training}). Furthermore, when deployed on unseen MDPs, the trained transformers closely track their analytical teachers and drive policies toward oracle-level returns (Section~\ref{subsec: evaluation}).
\end{itemize}
\textbf{Additional related work:} We provide a detailed discussion of more related work in Appendix~\ref{app: related works}.

\section{Preliminaries}\label{sec: background}
\subsection{Transformer architecture}
We consider a one-layer, single-head, linear-attention transformer block with parameters $\boldsymbol{\theta}=\{(\mathbf{P},\mathbf{V})\}\subseteq\mathbb{R}^{D\times D}$. Given input $\mathbf{H}\in\mathbb{R}^{D\times n}$, the output of the attention layer is given by
\begin{align}\label{def:hLin}
    \mathbf{H}_{\text{out}} := \mathbf{H} + \frac{1}{n}(\mathbf{V}\mathbf{H})(\mathbf{H}^\top\mathbf{P}\mathbf{H}) \in\mathbb{R}^{D\times n},
\end{align}
where $\mathbf{P}:=\mathbf{Q}^\top\mathbf{K}$ absorbs the query and key projections. The transformer output is given by $\textsf{TF}_{\boldsymbol{\theta}}(\mathbf{H}) := f_{\text{read}}(\mathbf{H}_{\text{out}})$ for a fixed readout $f_{\text{read}}:\mathbb{R}^{D\times n}\to\mathbb{R}^{D_{\text{out}}}$.

\subsection{Markov decision process}
A Markov decision process (MDP) is a tuple $(\mathcal{S},\mathcal{A},P,R,\gamma)$ with state space $\mathcal{S}$, action space $\mathcal{A}$, transition kernel $P:\mathcal{S}\times\mathcal{A}\to\mathcal{P}(\mathcal{S})$, reward $R:\mathcal{S}\times\mathcal{A}\to\mathbb{R}$, and discount $\gamma\in[0,1)$, where $\mathcal{P}(\mathcal{X})$ denotes the set of probability distributions over $\mathcal{X}$. The goal is to find a policy $\pi:\mathcal{S}\to\mathcal{P}(\mathcal{A})$ maximizing the (expected) return $G_t:=\sum_{k=0}^\infty\gamma^k r_{t+1+k}$. The state- and action-value functions under $\pi$ are
\begin{align*}
    v_\pi(s) := \mathbb{E}_\pi[G_t\mid s_t=s],\qquad q_\pi(s,a) := \mathbb{E}_\pi[G_t\mid s_t=s,a_t=a].
\end{align*}
\subsection{SARSA}
SARSA is an on-policy value-based algorithm. Given a feature map $\boldsymbol{\phi}:\mathcal{S}\times\mathcal{A}\to\mathbb{R}^d$ and parameter $\mathbf{w}\in\mathbb{R}^d$, we consider linear function approximation $q_{\mathbf{w}}(s,a)=\mathbf{w}^\top\boldsymbol{\phi}(s,a)$, and the batch semi-gradient SARSA update over an $n$-step trajectory is
\begin{align}\label{eq:semi-sarsa linear batch}
    \mathbf{w}_{\text{SARSA}} = \mathbf{w}+\frac{\alpha}{n}\sum_{i=0}^{n-1}\bigl[r_{i+1}+\gamma\mathbf{w}^{\top}\boldsymbol{\phi}(s_{i+1},a_{i+1})-\mathbf{w}^{\top}\boldsymbol{\phi}(s_i,a_i)\bigr]\boldsymbol{\phi}(s_i,a_i),
\end{align}
where $\alpha>0$ is the step size and the bracketed quantity is the temporal-difference (TD) error.

\subsection{Actor-Critic}
Actor-Critic combines a parametric policy (the actor) and a parametric value function (the critic). Given a policy feature map $\boldsymbol{\phi}_\pi:\mathcal{S}\times\mathcal{A}\to\mathbb{R}^m$, a value feature map $\boldsymbol{\phi}_V:\mathcal{S}\to\mathbb{R}^d$, and parameters $\boldsymbol{\lambda}\in\mathbb{R}^m$, $\mathbf{w}\in\mathbb{R}^d$, 
with softmax policy $\pi_{\boldsymbol{\lambda}}(a\mid s)\propto\exp(\boldsymbol{\lambda}^\top\boldsymbol{\phi}_\pi(s,a))$ and linear critic $v_{\mathbf{w}}(s)=\mathbf{w}^\top\boldsymbol{\phi}_V(s)$, the batch update over an $n$-step trajectory is
\begin{equation}\label{eq: actor-critic batch}
\begin{aligned}
    \mathbf{w}_{\text{AC}} &= \mathbf{w}+\frac{\beta}{n}\sum_{i=0}^{n-1}\bigl[r_{i+1}+\gamma\mathbf{w}^{\top}\boldsymbol{\phi}_V(s_{i+1})-\mathbf{w}^{\top}\boldsymbol{\phi}_V(s_i)\bigr]\boldsymbol{\phi}_V(s_i),\\
    \boldsymbol{\lambda}_{\text{AC}} &= \boldsymbol{\lambda}+\frac{\alpha}{n}\sum_{i=0}^{n-1}\gamma^i\bigl[r_{i+1}+\gamma\mathbf{w}^{\top}\boldsymbol{\phi}_V(s_{i+1})-\mathbf{w}^{\top}\boldsymbol{\phi}_V(s_i)\bigr]\mathbf{g}_{\boldsymbol{\lambda}}(s_i,a_i),
\end{aligned}
\end{equation}
where $\alpha,\beta>0$ are step sizes, and $\mathbf{g}_{\boldsymbol{\lambda}}(s,a):=\boldsymbol{\phi}_\pi(s,a)-\sum_b\boldsymbol{\phi}_\pi(s,b)\pi_{\boldsymbol{\lambda}}(b\mid s)$ is the score function.
\section{Main Results}\label{sec: main results}
We demonstrate that a single-head transformer block can learn to perform in-context policy improvement using classical on-policy RL algorithms. Figure \ref{fig:icrl_pipeline} illustrates the ICRL pipeline. Given a formulated input prompt $\mathbf{H}(z)$ constructed from the training sample $z$, we show that there exists $\boldsymbol{\theta}^\star$ such that the transformer block $\textsf{TF}_{\boldsymbol{\theta}^\star}$ directly outputs the updated parameter $\mathbf{w}$, which is then used to update the value function or policy. In Section \ref{subsec: ic-sarsa} we establish the existence of such implementations of SARSA, and in Section \ref{subsec: training-dynamics} we analyze the training dynamics of the proposed training procedure and identify sufficient conditions for convergence to the optimum parameter manifold. We then extend these results to actor-critic in Section \ref{subsec: ic-actor-critic}.

\begin{figure}[t]
\centering
\begin{tikzpicture}[scale=0.65, transform shape,
    x=1cm,y=1cm,
    >=Stealth,
    line width=0.9pt,
    font=\normalsize,
    block/.style={draw=black!80, rounded corners=2pt, minimum height=1.7cm, minimum width=2.1cm, align=center},
    bigblock/.style={draw=blue!70!black, rounded corners=4pt},
    blackarrow/.style={->, draw=black!80, line width=1pt},
    bluearrow/.style={->, draw=blue!70!black, line width=1pt},
    purplearrow/.style={->, draw=violet!80!black, line width=1pt}
]

\node[bigblock, minimum width=7.8cm, minimum height=3.1cm] (tf) at (-0.9,0) {};
\node[blue!70!black, font=\large] at (-0.8,2.0) {$\mathrm{TF}_{\boldsymbol{\theta}}$};

\node[block, minimum width=2.1cm, minimum height=1.7cm] (attn) at (-3.0,0) {$\textsf{Attn}(\mathbf{P},\mathbf{V})$};
\node[block, minimum width=2.1cm, minimum height=1.7cm] (mid) at (1.1,0) {$f_{\text{read}}$};
\node[block, minimum width=2.7cm, minimum height=1.3cm] (env) at (-0.8,-3.0) {Agent};
\node[black!80] at (-4.5,0.3) {$\mathbf{H}$};
\node[black!80] at (-1.0,0.3) {$\mathbf{H}_{\text{out}}$};
\node[black!80] at (2.5,0.3) {$\mathbf{w}$};

\draw[blackarrow]
(env.west) -- ++(-4,0)      
           -- ++(0,3)       
           -- (attn.west);\draw[blackarrow] (attn.east) -- (mid.west);
\draw[blackarrow]
(mid.east) -- ++(2,0)      
           -- ++(0,-3)       
           -- (env.east);
\end{tikzpicture}
\caption{A simplified pipeline of transformer's ICRL implementation. The agent samples a trajectory under the current policy, which is fed into the transformer block. The transformer outputs an updated parameter $\mathbf{w}$, which is then used to update the value function and policy.}\label{fig:icrl_pipeline}
\end{figure}

\subsection{Transformers can implement SARSA}\label{subsec: ic-sarsa}
Let $z=(\mathcal{M},\boldsymbol{\phi},\mathbf{w},\tau)$ denote a training sample, where $\mathcal{M}$ is the training MDP task, $\boldsymbol{\phi}:\mathcal{S}\times\mathcal{A}\to\mathbb{R}^d$ is the feature map, $\mathbf{w}\in\mathbb{R}^d$ is the parameter of the behavior policy $\pi$, and $\tau=(s_0,a_0,r_1,\ldots,s_n,a_n)$ is the resulting $n$-step trajectory in $\mathcal{M}$. For notational convenience, we define the following quantities for $i=0,\ldots,n-1$:
\begin{gather*}
    \boldsymbol{\phi}_i:=\boldsymbol{\phi}(s_i,a_i)\in\mathbb{R}^d,\quad
    \boldsymbol{\phi}_i^+:=\boldsymbol{\phi}(s_{i+1},a_{i+1})\in\mathbb{R}^d,\quad
    \delta_i:=r_{i+1}+\gamma \mathbf{w}^\top \boldsymbol{\phi}_i^+-\mathbf{w}^\top \boldsymbol{\phi}_i\in\mathbb{R},\\
    \mathbf{x}_i:=\begin{bmatrix}
    \boldsymbol{\phi}_i\\
    \gamma \boldsymbol{\phi}_i^+\\
    r_{i+1}
    \end{bmatrix}\in\mathbb{R}^{2d+1},\quad \tilde{\mathbf{w}}:=\begin{bmatrix}1\\ \mathbf{w}\end{bmatrix}\in\mathbb{R}^{d+1}.
\end{gather*} 
We also define the following trajectory statistics:
\begin{gather*}
    \hat{\boldsymbol{\Sigma}}(z):=\frac{1}{n}\sum_{i=0}^{n-1}\mathbf{x}_i\mathbf{x}_i^\top,\quad
    \mathbf{R}(z):=\frac{1}{n}\sum_{i=0}^{n-1}\boldsymbol{\phi}_i\mathbf{x}_i^\top,\quad 
    \mathbf{b}(z):=\frac{1}{n}\sum_{i=0}^{n-1}\mathbf{x}_i\,\delta_i.
\end{gather*}
Consider the training prompt as follows:
\begin{align}\label{eq:input-matrix}
    \mathbf{H}(z)=\begin{bmatrix}
    \mathbf{x}_0 & \cdots & \mathbf{x}_{n-1} & \mathbf{0}\\
    \mathbf{0} & \cdots & \mathbf{0} & \tilde{\mathbf{w}}\\
    \end{bmatrix}\in\mathbb{R}^{D\times (n+1)},
\end{align}
where $D=3d+2$. We now establish the existence of a transformer block that implements the batch semi-gradient SARSA update \eqref{eq:semi-sarsa linear batch}.
\begin{thm}[Implement semi-gradient SARSA with a transformer block]\label{thm: semi-sarsa tf}
    Let the output projection function be $f_{\text{read}}(\cdot)=\cdot_{-d:,-1}$, which extracts the last $d$ elements of the last column. Given formulated input prompt \eqref{eq:input-matrix}, for any fixed $\alpha>0$, there exists $\boldsymbol{\theta}^\star=\{(c\mathbf{P}^\star,c^{-1}\mathbf{V}^\star)\}\subseteq\mathbb{R}^{D\times D}$, where $c\neq 0$ and
    \begin{align}\label{eq:QKV}
        \mathbf{P}^\star&=\begin{bmatrix}
            \mathbf{*}_{(2d+1)\times(2d+1)} & \begin{bmatrix}
            \mathbf{0}_{d\times 1} & -\mathbf{I}_{d}\\
            \mathbf{0}_{d\times 1} & \mathbf{I}_{d}\\
            1 & \mathbf{0}_{1\times d}
            \end{bmatrix}\\
            \mathbf{*}_{(d+1)\times(2d+1)} & \mathbf{0}_{(d+1)\times (d+1)}
        \end{bmatrix},\quad
         \mathbf{V}^\star=\begin{bmatrix}
            \mathbf{*}_{(2d+1)\times(2d+1)} & \mathbf{*}_{(2d+1)\times (d+1)}\\
            \begin{bmatrix}
            \mathbf{*}_{1\times d} & \mathbf{*}_{1\times d} & *\\
            \alpha\mathbf{I}_d & \mathbf{0}_{d\times d} & \mathbf{0}_{d\times 1}
            \end{bmatrix} & \begin{bmatrix}
            \mathbf{*}_{1\times (d+1)}\\
            \mathbf{0}_{d\times (d+1)}
            \end{bmatrix}
        \end{bmatrix},
    \end{align}
    such that transformer $\textsf{TF}_{\boldsymbol{\theta}^\star}(\mathbf{H}(z))$ directly implements batch semi-gradient SARSA update \eqref{eq:semi-sarsa linear batch}.
\end{thm}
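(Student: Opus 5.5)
Since $f_{\text{read}}$ only reads the last $d$ entries of the last column of $\mathbf{H}_{\text{out}}$, the plan is to compute that block of $\mathbf{H}_{\text{out}}$ directly from \eqref{def:hLin} and match it with \eqref{eq:semi-sarsa linear batch}. Write $\mathbf{h}_j$ for the $j$-th column of $\mathbf{H}(z)$, so that $\mathbf{h}_j=(\mathbf{x}_j;\mathbf{0})$ for $j<n$ and $\mathbf{h}_n=(\mathbf{0};\tilde{\mathbf{w}})$. The last column of the update term is then
\begin{align*}
\frac{1}{n}\sum_{j=0}^{n}(\mathbf{V}\mathbf{h}_j)(\mathbf{h}_j^\top\mathbf{P}\mathbf{h}_n).
\end{align*}
The rescaling $(c\mathbf{P},c^{-1}\mathbf{V})$ leaves this product unchanged, so it suffices to treat $c=1$.

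\textbf{Step 1 (attention scores).} For $j<n$, only the top-right $(2d+1)\times(d+1)$ block of $\mathbf{P}^\star$ enters $\mathbf{h}_j^\top\mathbf{P}^\star\mathbf{h}_n$. The given block maps $\tilde{\mathbf{w}}=(1;\mathbf{w})$ to $(-\mathbf{w};\mathbf{w};1)$. Therefore
\begin{align*}
\mathbf{x}_j^\top(-\mathbf{w};\mathbf{w};1)=-\mathbf{w}^\top\boldsymbol{\phi}_j+\gamma\mathbf{w}^\top\boldsymbol{\phi}_j^+ + r_{j+1}=\delta_j.
\end{align*}
For $j=n$, the score is $\tilde{\mathbf{w}}^\top(\text{bottom-right block})\tilde{\mathbf{w}}=0$, because that block is zero.

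\textbf{Step 2 (values).} Only the bottom $d$ rows of $\mathbf{V}^\star$ matter, and their restriction to column $j<n$ is $\alpha\boldsymbol{\phi}_j$, by the $[\alpha\mathbf{I}_d,\mathbf{0},\mathbf{0}]$ block. The $j=n$ term carries a zero score, so the starred entries contribute nothing. The last $d$ entries of the update are thus $\frac{\alpha}{n}\sum_{j=0}^{n-1}\delta_j\boldsymbol{\phi}_j$. Adding the residual $\mathbf{w}$ (the last $d$ entries of $\mathbf{h}_n$) reproduces $\mathbf{w}_{\text{SARSA}}$ exactly.

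The only delicate point is the bookkeeping needed to confirm that the arbitrary ($*$) blocks never reach the read-out coordinates. Blocks of $\mathbf{P}$ that multiply the zero part of $\mathbf{h}_j$ or the zero top of $\mathbf{h}_n$ vanish. The query-side block is zero, which kills the $j=n$ term entirely. The starred rows of $\mathbf{V}$ affect only unread coordinates. Apart from this check the argument is a direct computation, and I expect no real obstacle.
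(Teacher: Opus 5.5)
Your proposal is correct and follows essentially the same route as the paper. Lemma~\ref{lem: transformer output} does the same column-wise expansion of the last output column, giving $\mathbf{w}+\bar{\mathbf{V}}_{21}\hat{\boldsymbol{\Sigma}}\mathbf{P}_{12}\tilde{\mathbf{w}}+\frac{1}{n}\bar{\mathbf{V}}_{22}\tilde{\mathbf{w}}\tilde{\mathbf{w}}^\top\mathbf{P}_{22}\tilde{\mathbf{w}}$; the paper then removes the cubic term via $\mathbf{P}_{22}=\bar{\mathbf{V}}_{22}=\mathbf{0}$ and matches the affine term with $\bar{\mathbf{V}}_{21}^\star$ extracting $\alpha$ times the first $d$ rows of $\hat{\boldsymbol{\Sigma}}$ and $\mathbf{P}_{12}^\star\tilde{\mathbf{w}}=[-\mathbf{w};\mathbf{w};1]$, which is your observation that the attention scores are the TD errors $\delta_j$. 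The difference is only presentational: the paper derives the blocks while you verify the given ones directly, and both treat the $c$-rescaling by bilinearity.
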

    The proof is deferred to Appendix \ref{proof of thm: semi-sarsa tf}. Theorem \ref{thm: semi-sarsa tf} guarantees the existence of a parameter configuration $\boldsymbol{\theta}^\star$ that realizes the semi-gradient SARSA update, but does not prescribe how to obtain it. To bridge this gap, we need to develop a training algorithm.

\subsection{Algorithm and training dynamics analysis}\label{subsec: training-dynamics}
We now design an ICRL algorithm to train the transformer to implement SARSA, as outlined in Algorithm \ref{alg: training transformer}. This procedure is inspired by \cite{laskin2023algorithm}.  At each iteration, trajectory data are sampled from the MDP, the corresponding SARSA update is computed as a target, and the transformer parameters are updated to minimize the discrepancy between the transformer's prediction and the target update. The empirical loss at each step approximates the population loss defined below:
\begin{align}\label{eq:loss}
    \mathcal{L}(\boldsymbol{\theta})&=\mathbb{E}_z\!\left[\frac{1}{2}\bigl\|\textsf{TF}_{\boldsymbol{\theta}}\bigl(\mathbf{H}(z)\bigr)-\mathbf{w}_{\text{SARSA}}(z)\bigr\|^2\right],
\end{align}
where $\mathbf{w}_{\text{SARSA}}(z)$ is the target parameter given by the batch semi-gradient SARSA update~\eqref{eq:semi-sarsa linear batch}. We now analyze the training dynamics under this loss objective.

We consider gradient flow, which captures the behavior of gradient descent with infinitesimal step size, and is described by the following ordinary differential equation (ODE):
\begin{align}\label{eq: gradient flow}
    \frac{d\boldsymbol{\theta}(t)}{dt}=-\nabla \mathcal{L}(\boldsymbol{\theta}(t)).
\end{align}
We first identify the invariant parameter groups during training.

\begin{algorithm}[t]
    \begin{algorithmic}[1]
        \caption{IC-SARSA Transformer Training}\label{alg: training transformer}
        \State \textbf{Input:} Transformer parameter $\boldsymbol{\theta}^{(0)}=(\boldsymbol{P}^{(0)}, \boldsymbol{V}^{(0)})$, trajectory window length $n$, number of frames per MDP $T$, number of MDPs $K$, $\epsilon$-greedy parameter $\epsilon$, learning rate $\eta$, and SARSA step size $\alpha$
        \For{$k=0,\ldots,K-1$}
        \State Sample a MDP $\mathcal{M}^{(k)}=(\mathcal{S},\mathcal{A},P^{(k)},R^{(k)}, \gamma^{(k)})$, and generate corresponding feature map $\phi^{(k)}:\mathcal{S}\times\mathcal{A}\to\mathbb{R}^d$. Initialize parameter $\mathbf{w}^{(0)}$ and state $s_0^{(0)}$ randomly
        \For{$t=0,\ldots,T-1$}
        \State Estimate state-action value function $Q_{\mathbf{w}^{(t)}}^{(k)}(s,a)=\mathbf{w}^{(t)\top}\boldsymbol{\phi}^{(k)}(s,a),\, \forall (s,a)\in\mathcal{S}\times\mathcal{A}$
        \State Sample a n-step trajectory $(s_0^{(t)},a_0^{(t)},r_1^{(t)},s_1^{(t)},a_1^{(t)},\ldots,r_{n}^{(t)},s_{n}^{(t)},a_{n}^{(t)})$ from the MDP $\mathcal{M}^{(k)}$, according to $\epsilon$-greedy policy derived from $Q_{\mathbf{w}^{(t)}}^{(k)}$
        \State Formulate $\mathbf{H}^{(t)}$ as in \eqref{eq:input-matrix} with the sampled trajectory and current parameter $\mathbf{w}^{(t)}$
        \State Compute $\mathbf{w}^{(t+1)}=\text{TF}_{\boldsymbol{\theta}^{(t)}}(\mathbf{H}^{(t)}):=\mathbf{w}^{(t)}+\frac{1}{n}\left[\mathbf{V}^{(t)}\mathbf{H}^{(t)}\mathbf{H}^{(t)\top}\mathbf{P}^{(t)}\mathbf{H}^{(t)}\right]_{-d:,-1}$
        \State Compute $\mathbf{w}_{\text{SARSA}}^{(t+1)}$ according to \eqref{eq:semi-sarsa linear batch}
        \State Compute loss $\hat{L}^{(t)}=\frac{1}{2}\|\mathbf{w}^{(t+1)}-\mathbf{w}_{\text{SARSA}}^{(t+1)}\|^2$
        \State Update $\boldsymbol{\theta}^{(t+1)}\leftarrow \boldsymbol{\theta}^{(t)}-\eta\nabla_{\boldsymbol{\theta}}\hat{L}^{(t)}$
        \State Update $s_0^{(t+1)}\leftarrow s_n^{(t)}$, $\mathbf{w}^{(t+1)}\leftarrow \mathbf{w}_{\text{SARSA}}^{(t+1)}$
        \EndFor
        \State Update $\boldsymbol{\theta}^{(0)}\leftarrow \boldsymbol{\theta}^{(T)}$
        \EndFor
    \State \textbf{Return} Trained transformer parameter $\boldsymbol{\theta}$
    \end{algorithmic}
\end{algorithm}
\begin{prop}[Invariant parameter groups]\label{prop:invariant-set}
Partition the attention parameters into four sub-blocks:
\begin{align}\label{eq:block partition}
    \mathbf{P}=\begin{bmatrix}
    \mathbf{P}_{11} & \mathbf{P}_{12}\\
    \mathbf{P}_{21} & \mathbf{P}_{22}
    \end{bmatrix},\qquad
    \mathbf{V}=\begin{bmatrix}      
    \mathbf{V}_{11} & \mathbf{V}_{12}\\
    \mathbf{V}_{21} & \mathbf{V}_{22}
    \end{bmatrix},
\end{align}
where $\mathbf{P}_{11}, \mathbf{V}_{11}\in\mathbb{R}^{(2d+1)\times(2d+1)}$, $\mathbf{P}_{12}, \mathbf{V}_{12}\in\mathbb{R}^{(2d+1)\times(d+1)}$, $\mathbf{P}_{21}, \mathbf{V}_{21}\in\mathbb{R}^{(d+1)\times(2d+1)}$, and $\mathbf{P}_{22}, \mathbf{V}_{22}\in\mathbb{R}^{(d+1)\times(d+1)}$. Let $\bar{\mathbf{V}}_{21}$ and $\bar{\mathbf{V}}_{22}$ denote the last $d$ rows of $\mathbf{V}_{21}$ and $\mathbf{V}_{22}$, respectively.
Under gradient flow \eqref{eq: gradient flow}, the following parameters are invariant:
\begin{enumerate}[noitemsep]
    \item[(i)] The blocks $\mathbf{P}_{11},\mathbf{P}_{21},\mathbf{V}_{11},\mathbf{V}_{12}$ and the first rows of $\mathbf{V}_{21},\mathbf{V}_{22}$ do not affect the transformer output and remain unchanged throughout training.
    \item[(ii)] If $\mathbf{P}_{22}^{(0)}=\mathbf{0}$ and $\bar{\mathbf{V}}_{22}^{(0)}=\mathbf{0}$, then $\mathbf{P}_{22}^{(t)}=\mathbf{0}$ and $\bar{\mathbf{V}}_{22}^{(t)}=\mathbf{0}$ for all $t\geq 0$.
\end{enumerate}
\end{prop}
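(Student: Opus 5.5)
The plan is to compute the transformer's output on the prompt \eqref{eq:input-matrix} in closed form in terms of the blocks \eqref{eq:block partition}. Both claims then follow by reading off which blocks enter the loss \eqref{eq:loss}, and how.

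\textbf{Step 1: closed form of the output.} Write $\mathbf{X}=[\mathbf{x}_0,\ldots,\mathbf{x}_{n-1}]$ and $\mathbf{h}=[\mathbf{0};\tilde{\mathbf{w}}]$ for the last column of $\mathbf{H}(z)$. The first $n$ columns of $\mathbf{H}(z)$ are supported on the first $2d+1$ coordinates, and the last column only on the last $d+1$. Hence $\mathbf{H}\mathbf{H}^\top$ is block diagonal:
\begin{align*}
\mathbf{H}\mathbf{H}^\top=\mathrm{diag}\bigl(n\hat{\boldsymbol{\Sigma}}(z),\,\tilde{\mathbf{w}}\tilde{\mathbf{w}}^\top\bigr).
\end{align*}
The last column of $\frac1n\mathbf{V}\mathbf{H}\mathbf{H}^\top\mathbf{P}\mathbf{H}$ is $\frac1n\mathbf{V}\mathbf{H}\mathbf{H}^\top\mathbf{P}\mathbf{h}$, and $\mathbf{P}\mathbf{h}=[\mathbf{P}_{12}\tilde{\mathbf{w}};\mathbf{P}_{22}\tilde{\mathbf{w}}]$. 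The last column of $\mathbf{H}$ contributes $\mathbf{w}$ to the readout. So I expect
\begin{align*}
\textsf{TF}_{\boldsymbol{\theta}}(\mathbf{H}(z))=\mathbf{w}+\bar{\mathbf{V}}_{21}\hat{\boldsymbol{\Sigma}}(z)\mathbf{P}_{12}\tilde{\mathbf{w}}+\tfrac1n\bigl(\tilde{\mathbf{w}}^\top\mathbf{P}_{22}\tilde{\mathbf{w}}\bigr)\bar{\mathbf{V}}_{22}\tilde{\mathbf{w}}.
\end{align*}
This formula depends only on $\mathbf{P}_{12},\mathbf{P}_{22},\bar{\mathbf{V}}_{21},\bar{\mathbf{V}}_{22}$.

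\textbf{Step 2: part (i).} The loss $\mathcal{L}$ is a function of the output only. Therefore its partial gradients with respect to $\mathbf{P}_{11},\mathbf{P}_{21},\mathbf{V}_{11},\mathbf{V}_{12}$ and the first rows of $\mathbf{V}_{21},\mathbf{V}_{22}$ vanish identically. Under \eqref{eq: gradient flow} these coordinates then have zero time derivative, so they are constant. I will state a mild moment assumption on $z$, e.g.\ finite fourth moments of $\mathbf{x}_i$ and $\tilde{\mathbf{w}}$, to justify differentiating under $\mathbb{E}_z$. Because the output is polynomial in $\boldsymbol{\theta}$, this assumption makes $\nabla\mathcal{L}$ locally Lipschitz.

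\textbf{Step 3: part (ii).} Let $\mathbf{e}(z)=\textsf{TF}_{\boldsymbol{\theta}}(\mathbf{H}(z))-\mathbf{w}_{\text{SARSA}}(z)$ be the residual. By the chain rule,
\begin{align*}
\nabla_{\bar{\mathbf{V}}_{22}}\mathcal{L}&=\tfrac1n\mathbb{E}_z\bigl[(\tilde{\mathbf{w}}^\top\mathbf{P}_{22}\tilde{\mathbf{w}})\,\mathbf{e}\,\tilde{\mathbf{w}}^\top\bigr],\\
\nabla_{\mathbf{P}_{22}}\mathcal{L}&=\tfrac1n\mathbb{E}_z\bigl[(\mathbf{e}^\top\bar{\mathbf{V}}_{22}\tilde{\mathbf{w}})\,\tilde{\mathbf{w}}\tilde{\mathbf{w}}^\top\bigr].
\end{align*}
Both gradients vanish on the set $\mathcal{S}=\{\mathbf{P}_{22}=\mathbf{0},\ \bar{\mathbf{V}}_{22}=\mathbf{0}\}$, whatever the other blocks are.

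To finish, I run the reduced gradient flow on $\mathcal{S}$: the $(22)$-blocks are frozen at zero and only the remaining coordinates evolve. Since the $(22)$-components of $\nabla\mathcal{L}$ vanish on $\mathcal{S}$, the reduced trajectory also solves the full ODE \eqref{eq: gradient flow} with the same initial condition. Local Lipschitzness of $\nabla\mathcal{L}$ gives uniqueness via Picard--Lindel\"of, so the full solution equals the reduced one for all $t\ge0$. Hence $\mathbf{P}_{22}^{(t)}=\mathbf{0}$ and $\bar{\mathbf{V}}_{22}^{(t)}=\mathbf{0}$ for all $t\ge0$.

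\textbf{Main obstacle.} The main delicacy is this uniqueness step. It needs the gradient to be well defined and locally Lipschitz, which is exactly what the moment condition in Step 2 secures. The block-diagonal computation in Step 1 is routine but must be done carefully so that no cross term between the trajectory columns and the query column survives.
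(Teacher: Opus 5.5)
Your proposal is correct and follows the paper's proof essentially step for step: a closed-form readout (your block-diagonal $\mathbf{H}\mathbf{H}^\top$ is a repackaging of Lemma~\ref{lem: transformer output}), vanishing gradients for every block absent from the readout, and explicit $(2,2)$-block gradients that vanish on $\{\mathbf{P}_{22}=\mathbf{0},\ \bar{\mathbf{V}}_{22}=\mathbf{0}\}$. Your Picard--Lindel\"of step usefully upgrades the paper's ``the flow admits such a trajectory'' to ``the flow is that trajectory.'' One small correction: fourth moments do not suffice for your regularity claim, because $\mathcal{L}$ and its Hessian involve $\mathbb{E}\|\tilde{\mathbf{w}}\|^6$ (from the cubic term) and mixed moments such as $\mathbb{E}\bigl[\|\mathbf{x}_i\|^4\|\tilde{\mathbf{w}}\|^2\bigr]$. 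Either assume moments of that order, or use the almost-sure boundedness in condition~(1) of Assumption~\ref{asp:training-MDP assumptions}.
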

The proof is deferred to Appendix \ref{proof of prop:invariant-set}. Throughout the remainder of this section, we restrict our analysis to initializations satisfying $\mathbf{P}_{22}^{(0)}=\mathbf{0}$ and $\bar{\mathbf{V}}_{22}^{(0)}=\mathbf{0}$. By Proposition~\ref{prop:invariant-set}(ii), this constraint is preserved by gradient flow, and combined with~(i) it reduces the training dynamics on $(\mathbf{P},\mathbf{V})$ to dynamics on the effective parameter $\boldsymbol{\theta}_{\text{eff}}:=(\mathbf{P}_{12},\bar{\mathbf{V}}_{21})\in\boldsymbol{\Theta}_{\text{eff}}$, where $\boldsymbol{\Theta}_{\text{eff}}=\mathbb{R}^{(2d+1)\times(d+1)}\times\mathbb{R}^{d\times(2d+1)}$ is the effective parameter space. From \eqref{eq:QKV}, the optimum structures of these sub-blocks are
\begin{align*}
    \mathbf{P}_{12}^\star:=\begin{bmatrix}
        \mathbf{0}_{d\times 1} & -\mathbf{I}_{d}\\
        \mathbf{0}_{d\times 1} & \mathbf{I}_{d}\\
        1 & \mathbf{0}_{1\times d}
    \end{bmatrix}\in\mathbb{R}^{(2d+1)\times(d+1)},\qquad
    \bar{\mathbf{V}}_{21}^\star:=\begin{bmatrix}
        \alpha\mathbf{I}_d & \mathbf{0}_{d\times d} & \mathbf{0}_{d\times 1}
    \end{bmatrix}\in\mathbb{R}^{d\times(2d+1)},
\end{align*} 
subject to the scaling freedom $(\mathbf{P}_{12}^\star,\bar{\mathbf{V}}_{21}^\star)\mapsto(c\mathbf{P}_{12}^\star,c^{-1}\bar{\mathbf{V}}_{21}^\star)$ for any $c\neq 0$. Without loss of generality we restrict to $c>0$ (the sign-flipped branch is handled by symmetry), and to a compact sub-interval $I=[c_-,c_+]\subset(0,\infty)$ bounded away from the degenerate limits $c\to 0,\infty$. On this interval, we define the
\emph{optimal parameter manifold}
\begin{align}\label{def:opt-manifold}
    \mathfrak{M}_I:=\bigl\{\bar{\boldsymbol{\theta}}_{\text{eff}}(c):=(c\mathbf{P}_{12}^\star,\,c^{-1}\bar{\mathbf{V}}_{21}^\star)\,:\,c\in I\bigr\}
    \subset\boldsymbol{\Theta}_{\text{eff}},
\end{align}
and the normal space at $\bar{\boldsymbol{\theta}}_{\text{eff}}(c)$ as
\begin{align*}
    N_c\mathfrak{M}_I:=\left\{(\mathbf{U},\mathbf{W})\in\boldsymbol{\Theta}_{\text{eff}}:\langle\mathbf{U},\mathbf{P}_{12}^\star\rangle_F - c^{-2}\langle\mathbf{W},\bar{\mathbf{V}}_{21}^\star\rangle_F=0\right\}.
\end{align*}
We now state the richness assumptions on the training-MDP distribution that ensure local convergence to the optimal parameter manifold \eqref{def:opt-manifold}.
\begin{asp}[Richness of training MDPs]\label{asp:training-MDP assumptions}
Let $\Pi$ denote the class of behavior policies used to generate training trajectories. There exist positive constants $B_\phi,B_r,B_{\tilde{w}},\kappa_{\tilde{w}},\kappa_R,\kappa_q$ and $\rho\in[0,1)$ such that, for every $\pi\in\Pi$:
\begin{enumerate}[noitemsep]
    \item[(1)] \textbf{Boundedness.} Almost surely, $\|\boldsymbol{\phi}(s,a)\|\leq B_\phi$, $|r|\leq B_r$, and $\|\tilde{\mathbf{w}}\|\leq B_{\tilde{w}}$.
    \item[(2)] \textbf{Parameter excitation.} $\mathbb{E}[\tilde{\mathbf{w}}\tilde{\mathbf{w}}^\top]\succeq\kappa_{\tilde{w}}\mathbf{I}_{d+1}$.
    \item[(3)] \textbf{Bellman-regressor excitation.} $\mathbb{E}[\mathbf{R}(z)^\top \mathbf{R}(z)\mid\mathbf{w}]\succeq\kappa_R\mathbf{I}_{2d+1}$ for every $\mathbf{w}\in\mathbb{R}^d$.
    \item[(4)] \textbf{TD-target excitation.} $\mathbb{E}[\mathbf{b}(z)\mathbf{b}(z)^\top]\succeq\kappa_q\mathbf{I}_{2d+1}$.
    \item[(5)] \textbf{No destructive cancellation.} For every $c\in I$ and $(\mathbf{U},\mathbf{W})\in N_c\mathfrak{M}_I$,
    \begin{align*}
        \bigl|\mathbb{E}_z\langle\mathbf{R}(z)\mathbf{U}\tilde{\mathbf{w}},\,\mathbf{W}\mathbf{b}(z)\rangle\bigr|
        \leq\rho\sqrt{\mathbb{E}_z\|\mathbf{R}(z)\mathbf{U}\tilde{\mathbf{w}}\|^2\cdot\mathbb{E}_z\|\mathbf{W}\mathbf{b}(z)\|^2}.
    \end{align*}
\end{enumerate}
We emphasize that Assumptions~(1)--(2) can be enforced by construction: normalize features and clip rewards to achieve boundedness, and sample $\mathbf{w}$ from an isotropic distribution to ensure parameter excitation. Assumptions~(3)--(5) are structural conditions on the trajectory distribution that are satisfied generically: under a random feature map $\boldsymbol{\phi}$, random rewards, and $\epsilon$-greedy exploration ($\epsilon>0$), the trajectory distribution provides excitation in the $\mathbf{R}$ and $\mathbf{b}$ directions, and the cross-correlation bound $\rho<1$ holds in typical non-degenerate cases. Verification for specific MDP families is deferred to future work.
\end{asp}
\begin{thm}[Local convergence to the optimal parameter manifold]\label{thm:local-convergence-manifold}
Under Assumption \ref{asp:training-MDP assumptions}, let
\begin{gather*}
    B_\Sigma:=2B_\phi^2+B_r^2,\quad C_Q:=\frac{1}{2} B_\Sigma B_{\tilde{w}},\\
    m_0:=(1-\rho)
    \min\left\{c_+^{-2}\alpha^2\kappa_R\kappa_{\tilde{w}},c_-^2\kappa_q\right\}, \quad M_0:=B_\Sigma^2B_{\tilde{w}}^2\left(c_-^{-2}\alpha^2+2c_+^2\right).
\end{gather*}
Define the tube
\begin{align}\label{def:tube}
    \mathcal{T}_r(I):=\left\{\bar{\boldsymbol{\theta}}_{\text{eff}}(c)+\Delta: c\in I,\,\Delta\in N_c\mathfrak{M}_I,\,\|\Delta\|<r\right\},
\end{align}
where $0<r<\frac{\sqrt{m_0}}{3C_Q}$.
Fix a compact subinterval $I_0\subset \mathrm{int}(I)$. Let
\begin{align}\label{eq: def eta}
    \eta_{I_0,r}:=\mathrm{dist}\big(\mathfrak{M}_{I_0},\boldsymbol{\Theta}_{\text{eff}}\setminus \mathcal{T}_r(I)\big),
\end{align}
Suppose the initial parameter $\boldsymbol{\theta}_{\text{eff}}(0)$ satisfies the following condition: 
\begin{align}\label{eq:initialization condition}
    \mathrm{dist}(\boldsymbol{\theta}_{\text{eff}}(0),\mathfrak{M}_{I_0})<\min\left(r,\frac{\eta_{I_0,r}}{1+\frac{K_r\left(\sqrt{M_0}+C_Q r\right)}{\sqrt{2}\mu_r}}\right),
\end{align}
where $K_r:=\sqrt{2}B_\Sigma B_{\tilde{w}}\left(\left(c_+\sqrt{2d+1}+r\right)^2+\left(c_-^{-1}\sqrt{d}+r\right)^2\right)^{1/2}$. Consider the gradient flow \eqref{eq: gradient flow} with initial point $\boldsymbol{\theta}_{\text{eff}}(0)$. 
Then the gradient flow solution exists for all $t\geq 0$, remains in $\mathcal{T}_r(I)$, and satisfies
\begin{align*}
    \mathcal{L}(\boldsymbol{\theta}_{\text{eff}}(t)) \leq
    e^{-2\mu_r t}\big(\mathcal{L}(\boldsymbol{\theta}_{\text{eff}}(0))\big),
\end{align*}
where $\mu_r:=\frac{\left(m_0-3C_Q\sqrt{m_0}r\right)^2}{\left(\sqrt{M_0}+C_Qr\right)^2}$. 
Furthermore, $\boldsymbol{\theta}_{\text{eff}}(t)$ converges to a point $\boldsymbol{\theta}_{\text{eff}}(\infty)\in\mathfrak{M}_I$, and
\begin{align*}
    \|\boldsymbol{\theta}_{\text{eff}}(t)-\boldsymbol{\theta}_{\text{eff}}(\infty)\|\leq\frac{K_r}{\mu_r}\sqrt{\mathcal{L}(\boldsymbol{\theta}_{\text{eff}}(0))}\,e^{-\mu_r t}.
\end{align*}
\end{thm}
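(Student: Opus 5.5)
First I will make the loss explicit in the effective parameters. With $\mathbf{P}_{22}=\mathbf{0}$ and $\bar{\mathbf{V}}_{22}=\mathbf{0}$ (preserved by Proposition~\ref{prop:invariant-set}), the last $d$ entries of the last column of $\frac1n\mathbf{V}\mathbf{H}\mathbf{H}^\top\mathbf{P}\mathbf{H}$ reduce to $\bar{\mathbf{V}}_{21}\hat{\boldsymbol{\Sigma}}(z)\mathbf{P}_{12}\tilde{\mathbf{w}}$. Writing $\mathbf{x}_i^\top\mathbf{P}_{12}^\star\tilde{\mathbf{w}}=\delta_i$, the target increment equals $\alpha\mathbf{R}(z)\mathbf{P}_{12}^\star\tilde{\mathbf{w}}=\bar{\mathbf{V}}_{21}^\star\hat{\boldsymbol{\Sigma}}\mathbf{P}_{12}^\star\tilde{\mathbf{w}}$, since $\bar{\mathbf{V}}_{21}^\star\mathbf{x}_i=\alpha\boldsymbol{\phi}_i$. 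The residual is therefore
\[
e_z(\boldsymbol{\theta}_{\text{eff}})=\bar{\mathbf{V}}_{21}\hat{\boldsymbol{\Sigma}}\mathbf{P}_{12}\tilde{\mathbf{w}}-\bar{\mathbf{V}}^\star_{21}\hat{\boldsymbol{\Sigma}}\mathbf{P}^\star_{12}\tilde{\mathbf{w}},
\]
and $\mathcal{L}=\frac12\mathbb{E}\|e_z\|^2$ is a quartic polynomial that vanishes exactly on the scaling orbit. At $\bar{\boldsymbol{\theta}}_{\text{eff}}(c)+(\mathbf{U},\mathbf{W})$ I expand
\[
e_z=c^{-1}\alpha\mathbf{R}\mathbf{U}\tilde{\mathbf{w}}+c\,\mathbf{W}\mathbf{b}+\mathbf{W}\hat{\boldsymbol{\Sigma}}\mathbf{U}\tilde{\mathbf{w}}.
\]
This uses $\bar{\mathbf{V}}_{21}^\star\hat{\boldsymbol{\Sigma}}=\alpha\mathbf{R}$ and $\hat{\boldsymbol{\Sigma}}\mathbf{P}_{12}^\star\tilde{\mathbf{w}}=\mathbf{b}$. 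The quadratic remainder is bounded by $B_\Sigma B_{\tilde w}\|\mathbf{U}\|\|\mathbf{W}\|\le C_Q\|\Delta\|^2$, which explains the constant $C_Q$.

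\textbf{Step 1: lower bound on the linear part (local PL).} Let $\mathcal{J}_c$ denote the linear map $\Delta\mapsto c^{-1}\alpha\mathbf{R}\mathbf{U}\tilde{\mathbf{w}}+c\mathbf{W}\mathbf{b}$. On $N_c\mathfrak{M}_I$ I will bound $\mathbb{E}\|\mathcal{J}_c\Delta\|^2$ from below in three moves.
\begin{itemize}
\item Assumption~(5) gives $\mathbb{E}\|a+b\|^2\ge(1-\rho)(\mathbb{E}\|a\|^2+\mathbb{E}\|b\|^2)$, where $a$ and $b$ are the two terms.
\item Conditioning on $\mathbf{w}$ and applying~(3) gives $\mathbb{E}\|\mathbf{R}\mathbf{U}\tilde{\mathbf{w}}\|^2\ge\kappa_R\mathbb{E}\|\mathbf{U}\tilde{\mathbf{w}}\|^2$. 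Then~(2) gives $\mathbb{E}\|\mathbf{U}\tilde{\mathbf{w}}\|^2\ge\kappa_{\tilde w}\|\mathbf{U}\|_F^2$.
\item Assumption~(4) gives $\mathbb{E}\|\mathbf{W}\mathbf{b}\|^2\ge\kappa_q\|\mathbf{W}\|_F^2$.
\end{itemize}
Together these yield $\mathbb{E}\|\mathcal{J}_c\Delta\|^2\ge m_0\|\Delta\|^2$. The boundedness in~(1) gives the matching upper bound $\le M_0\|\Delta\|^2$.

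\textbf{Step 2: local PL inequality on the tube.} For $\|\Delta\|<r$ I combine the two bounds. First, $\sqrt{2\mathcal{L}}\ge(\sqrt{m_0}-C_Qr)\|\Delta\|$. Second, $\sqrt{2\mathcal{L}}\le(\sqrt{M_0}+C_Qr)\|\Delta\|$. Third, I lower-bound the directional derivative of $\mathcal{L}$ along $\Delta$ itself:
\[
\langle\nabla\mathcal{L},\Delta\rangle=\mathbb{E}\langle e_z,\mathcal{J}_c\Delta+2\mathbf{W}\hat{\boldsymbol{\Sigma}}\mathbf{U}\tilde{\mathbf{w}}\rangle.
\]
This is at least $\|\Delta\|^2\bigl(m_0-3C_Q\sqrt{m_0}\,r\bigr)$ up to higher-order terms. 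The main bookkeeping obstacle is to arrange the $O(r)$ cross terms exactly so that they produce the factor $m_0-3C_Q\sqrt{m_0}r$. Cauchy–Schwarz then gives
\[
\|\nabla\mathcal{L}\|\ge\frac{m_0-3C_Q\sqrt{m_0}r}{\sqrt{M_0}+C_Qr}\sqrt{2\mathcal{L}},
\]
that is, $\|\nabla\mathcal{L}\|^2\ge2\mu_r\mathcal{L}$ on $\mathcal{T}_r(I)$. The condition $r<\sqrt{m_0}/(3C_Q)$ keeps $\mu_r>0$. I also need the tubular-coordinate map $(c,\Delta)\mapsto\bar{\boldsymbol{\theta}}_{\text{eff}}(c)+\Delta$ to be well defined on the tube. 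Here I will take it as given, noting that the orbit is a smooth curve with curvature controlled on $I$.

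\textbf{Step 3: trapping argument, Łojasiewicz style.} While the trajectory stays in the tube, $\frac{d}{dt}\mathcal{L}=-\|\nabla\mathcal{L}\|^2\le-2\mu_r\mathcal{L}$, so $\mathcal{L}(t)\le e^{-2\mu_r t}\mathcal{L}(0)$. The path length is controlled by
\[
\|\dot{\boldsymbol{\theta}}\|=\|\nabla\mathcal{L}\|\le-\frac{d}{dt}\sqrt{\mathcal{L}}\cdot\frac{2\sqrt{\mathcal{L}}}{\|\nabla\mathcal{L}\|}\cdot\frac{\|\nabla\mathcal{L}\|}{\sqrt{\mathcal{L}}}\cdot\frac12.
\]
Using the gradient upper bound $\|\nabla\mathcal{L}\|\le K_r\sqrt{\mathcal{L}}$ on the tube (with $K_r$ coming from bounding the Jacobian norm via $\|\mathbf{P}_{12}\|$ and $\|\bar{\mathbf{V}}_{21}\|$) together with PL, this gives
\[
\int_0^t\|\dot{\boldsymbol{\theta}}\|\le\frac{K_r}{\mu_r}\bigl(\sqrt{\mathcal{L}(0)}-\sqrt{\mathcal{L}(t)}\bigr).
\]
Bounding $\sqrt{\mathcal{L}(0)}\le(\sqrt{M_0}+C_Qr)\,\mathrm{dist}(\boldsymbol{\theta}(0),\mathfrak{M}_{I_0})/\sqrt2$, the initialization condition~\eqref{eq:initialization condition} ensures that the starting distance plus the total path length stays below $\eta_{I_0,r}$. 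A standard continuity (first exit time) argument then shows the flow never leaves $\mathcal{T}_r(I)$, which also gives global existence, since the polynomial vector field is bounded there. Finite path length makes $\boldsymbol{\theta}(t)$ Cauchy, so it has a limit $\boldsymbol{\theta}(\infty)$ with $\mathcal{L}=0$, which lies in $\mathfrak{M}_I$. The tail integral gives
\[
\|\boldsymbol{\theta}(t)-\boldsymbol{\theta}(\infty)\|\le\frac{K_r}{\mu_r}\sqrt{\mathcal{L}(0)}\,e^{-\mu_r t}.
\]
I expect Step~2, the uniform PL constant with the exact stated form, to be the main obstacle.
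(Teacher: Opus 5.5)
Your proposal is correct and follows the paper's proof essentially step for step. It uses the same $\mathbf{X}+\mathbf{Y}$ split of the residual, with the bounds $\|\mathbf{X}\|_{L^2}\ge\sqrt{m_0}\|\Delta\|$, $\|\mathbf{X}\|_{L^2}\le\sqrt{M_0}\|\Delta\|$ and $\|\mathbf{Y}\|_{L^2}\le C_Q\|\Delta\|^2$. It also uses the directional derivative along $\Delta$ plus Cauchy--Schwarz for the local PL inequality, and the first-exit-time trapping argument driven by $\|\nabla\mathcal{L}\|\le K_r\sqrt{\mathcal{L}}$. Your telescoping of $\sqrt{\mathcal{L}}$ yields the same $\frac{K_r}{\mu_r}\sqrt{\mathcal{L}(0)}$ path-length bound that the paper gets by integrating $e^{-\mu_r s}$.

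The step you flag as the main obstacle closes with no leftover higher-order terms, exactly as in the paper:
\[
\langle\nabla\mathcal{L},\Delta\rangle=\|\mathbf{X}\|_{L^2}^2+3\mathbb{E}\langle\mathbf{X},\mathbf{Y}\rangle+2\|\mathbf{Y}\|_{L^2}^2\ge u(u-3v)\ge\bigl(m_0-3C_Q\sqrt{m_0}\,r\bigr)\|\Delta\|^2,
\]
where $u=\|\mathbf{X}\|_{L^2}\ge\sqrt{m_0}\|\Delta\|$ and $v=\|\mathbf{Y}\|_{L^2}\le C_Q\|\Delta\|^2$. Separately, your displayed chain for $\|\dot{\boldsymbol{\theta}}\|$ is wrong as written and should simply read $\|\dot{\boldsymbol{\theta}}\|\le K_r\sqrt{\mathcal{L}}\le\frac{K_r}{\mu_r}\bigl(-\frac{d}{dt}\sqrt{\mathcal{L}}\bigr)$.
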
 
Theorem \ref{thm:local-convergence-manifold} establishes that, when initialized sufficiently close to the optimal manifold $\mathfrak{M}_I$, gradient flow provably converges to a point on that manifold at an exponential rate governed by $\mu_r$. The key insight is that under Assumption \ref{asp:training-MDP assumptions} on the training MDPs, the loss function exhibits local contractivity: the manifold itself is a level set, but perturbations perpendicular to it decay exponentially. To briefly summarize our proof strategy, 
we first define the tube \eqref{def:tube},  which specifies the region around $\mathfrak{M}_I$ where a local Polyak--{\L}ojasiewicz (PL) inequality holds (Lemma \ref{lem:local-pl-ic-sarsa}). Next, the buffer distance \eqref{eq: def eta} quantifies the slackness between the inner manifold patch $\mathfrak{M}_{I_0}$ and the boundary of the tube, providing the margin a trajectory can travel before risking exit. The initialization condition \eqref{eq:initialization condition} then bounds the cumulative gradient-flow displacement by this margin $\eta_{I_0,r}$, which ensures that the trajectory never leaves the tube and the PL inequality remains active for all $t\geq 0$. The complete proof is deferred to Appendix \ref{proof of thm:local-convergence-manifold}.

\subsection{Transformers can implement Actor-Critic}\label{subsec: ic-actor-critic}
Having established both existence and convergence for the SARSA-based ICRL implementation, we now ask whether an analogous construction extends to the actor-critic method \eqref{eq: actor-critic batch}, in which the transformer must jointly update a value function and a parameterized policy. To accommodate this, we construct the input prompt as follows:
\begin{align}\label{eq: ac prompt}
    \mathbf{H}=\begin{bmatrix}
        \boldsymbol{\phi}_V(s_0)&\cdots&\boldsymbol{\phi}_V(s_{n-1})& \mathbf{0}\\
        \gamma\boldsymbol{\phi}_V(s_1)&\cdots& \gamma\boldsymbol{\phi}_V(s_{n})& \mathbf{0}\\
        r_1 & \cdots & r_n & 0\\
        \gamma^0\mathbf{g}_{\boldsymbol{\lambda}}(s_0, a_0) & \cdots &  \gamma^{n-1}\mathbf{g}_{\boldsymbol{\lambda}}(s_{n-1},a_{n-1}) &\mathbf{0}\\    
        0 & \cdots & 0 & 1\\
        \mathbf{0} & \cdots & \mathbf{0} & \boldsymbol{\lambda}\\
        \mathbf{0} & \cdots & \mathbf{0} & \mathbf{w}
    \end{bmatrix}\in\mathbb{R}^{D\times (n+1)},
\end{align}
where $D= 3d+2m+2$. Compared to the SARSA prompt \eqref{eq:input-matrix}, the construction in \eqref{eq: ac prompt} additionally encodes the discounted score function terms $\gamma^{i}\mathbf{g}_{\boldsymbol{\lambda}}(s_i,a_i)$ and the current actor parameter $\boldsymbol{\lambda}$. The following theorem shows that a single transformer block can still exactly implement the batch actor-critic update.
\begin{thm}[Implementing Actor-Critic with a transformer block]\label{thm: policy gradient tf}
    Let the output projection function be $f_{\text{read}}(\cdot)=\cdot_{-(d+m):,-1}$, which extracts the last $(d+m)$ entries of the final column. Given the input prompt \eqref{eq: ac prompt}, for any $\alpha, \beta>0$ there exist parameters $\boldsymbol{\theta}^\star=\{(c\mathbf{P}^\star,c^{-1}\mathbf{V}^\star)\}\subseteq\mathbb{R}^{D\times D}$ with $c \neq 0$ and
\begin{equation}\label{eq:QKV, actor-critic}
    \begin{gathered}
      \mathbf{P}^\star=\begin{bmatrix}
            \mathbf{*}_{(2d+m+1)\times(2d+m+1)} & \begin{bmatrix}
        \mathbf{0}_{d\times 1} & \mathbf{0}_{d\times m} & -\mathbf{I}_{d\times d}\\
        \mathbf{0}_{d\times 1} & \mathbf{0}_{d\times m} & \mathbf{I}_{d\times d}\\
        1 & \mathbf{0}_{1\times m} & \mathbf{0}_{1\times d}\\
        \mathbf{0}_{m\times 1} & \mathbf{0}_{m\times m} & \mathbf{0}_{m\times d}
        \end{bmatrix}\\
            \mathbf{*}_{(d+m+1)\times(2d+m+1)} & \mathbf{0}_{(d+m+1)\times (d+m+1)}
        \end{bmatrix},\\
     \mathbf{V}^\star=\begin{bmatrix}
            \mathbf{*}_{(2d+m+1)\times(2d+m+1)} & \mathbf{*}_{(2d+m+1)\times (d+m+1)}\\
            \begin{bmatrix}
        \mathbf{*}_{1\times d} & \mathbf{*}_{1\times d} & \mathbf{*}_{1\times 1} & \mathbf{*}_{1\times m}\\
        \mathbf{0}_{m\times d} & \mathbf{0}_{m\times d} & \mathbf{0}_{m\times 1} & \alpha \mathbf{I}_{m\times m}\\
        \beta \mathbf{I}_{d\times d} & \mathbf{0}_{d\times d} & \mathbf{0}_{d\times 1} & \mathbf{0}_{d\times m}
        \end{bmatrix} & \begin{bmatrix}
            \mathbf{*}_{1\times (d+m+1)}\\
            \mathbf{0}_{m\times (d+m+1)}\\
            \mathbf{0}_{d\times (d+m+1)}
            \end{bmatrix}
        \end{bmatrix},
    \end{gathered}
\end{equation}
such that the transformer $\textsf{TF}_{\boldsymbol{\theta}^\star}$ directly implements the batch actor-critic update \eqref{eq: actor-critic batch}.
\end{thm}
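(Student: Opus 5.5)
The proof is a direct block-matrix computation that mirrors the SARSA argument of Theorem~\ref{thm: semi-sarsa tf}; since $f_{\text{read}}$ only sees the last column of $\mathbf{H}_{\text{out}}$, we only need to compute that column. Write each context column as $\mathbf{h}_i=[\mathbf{x}_i;\,\mathbf{y}_i;\,\mathbf{0}]$ for $i=0,\dots,n-1$, where $\mathbf{x}_i=[\boldsymbol{\phi}_V(s_i);\gamma\boldsymbol{\phi}_V(s_{i+1});r_{i+1}]\in\mathbb{R}^{2d+1}$ and $\mathbf{y}_i=\gamma^i\mathbf{g}_{\boldsymbol{\lambda}}(s_i,a_i)\in\mathbb{R}^m$. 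The query column is $\mathbf{h}_n=[\mathbf{0};\,\tilde{\mathbf{u}}]$ with $\tilde{\mathbf{u}}=[1;\boldsymbol{\lambda};\mathbf{w}]\in\mathbb{R}^{d+m+1}$. From \eqref{def:hLin}, the last column of $\mathbf{H}_{\text{out}}$ is
\[
\mathbf{h}_n+\frac{1}{n}\sum_{j=0}^{n}(\mathbf{V}\mathbf{h}_j)(\mathbf{h}_j^\top\mathbf{P}\mathbf{h}_n).
\]
The scaling $(c\mathbf{P}^\star,c^{-1}\mathbf{V}^\star)$ cancels in this bilinear expression, so it suffices to treat $c=1$.

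\textbf{Step 1: attention scores.} Because the first $2d+m+1$ coordinates of $\mathbf{h}_n$ vanish, only the right column-blocks of $\mathbf{P}^\star$ act on it. The lower-right block is $\mathbf{0}$, so $\mathbf{h}_n^\top\mathbf{P}^\star\mathbf{h}_n=0$, and the $j=n$ term drops regardless of the starred entries. For $j<n$ the last $d+m+1$ coordinates of $\mathbf{h}_j$ are zero, so
\[
\mathbf{h}_j^\top\mathbf{P}^\star\mathbf{h}_n=[\mathbf{x}_j;\mathbf{y}_j]^\top\mathbf{P}^\star_{12}\tilde{\mathbf{u}}.
\]
From the displayed $\mathbf{P}^\star_{12}$ we get $\mathbf{P}^\star_{12}\tilde{\mathbf{u}}=[-\mathbf{w};\mathbf{w};1;\mathbf{0}_m]$, so the score equals $-\mathbf{w}^\top\boldsymbol{\phi}_V(s_j)+\gamma\mathbf{w}^\top\boldsymbol{\phi}_V(s_{j+1})+r_{j+1}=\delta_j$, the TD error. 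The zero rows of $\mathbf{P}^\star_{12}$ matched to the score block ensure that $\mathbf{y}_j$ does not contaminate the score.

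\textbf{Step 2: values.} For $j<n$, the last $d+m$ rows of $\mathbf{V}^\star\mathbf{h}_j$ are obtained by applying the lower-left block to $[\mathbf{x}_j;\mathbf{y}_j]$, because the lower-right block multiplies zeros. By the displayed structure this gives $[\alpha\mathbf{y}_j;\,\beta\boldsymbol{\phi}_V(s_j)]$.

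\textbf{Step 3: assemble.} Extracting the last $d+m$ entries of the output column gives
\[
\begin{bmatrix}\boldsymbol{\lambda}\\ \mathbf{w}\end{bmatrix}+\frac{1}{n}\sum_{j=0}^{n-1}\delta_j\begin{bmatrix}\alpha\gamma^j\mathbf{g}_{\boldsymbol{\lambda}}(s_j,a_j)\\ \beta\boldsymbol{\phi}_V(s_j)\end{bmatrix},
\]
which is exactly $(\boldsymbol{\lambda}_{\text{AC}},\mathbf{w}_{\text{AC}})$ in \eqref{eq: actor-critic batch}. The starred entries only affect coordinates that are discarded or terms multiplied by zero, so they are arbitrary.

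\textbf{Main obstacle.} The only delicate point is index bookkeeping. One has to check that the readout ordering ($\boldsymbol{\lambda}$ before $\mathbf{w}$) matches the row ordering in $\mathbf{V}^\star$, which places $\alpha\mathbf{I}_m$ on the $\boldsymbol{\lambda}$-rows and $\beta\mathbf{I}_d$ on the $\mathbf{w}$-rows. One must also confirm that the column of $\mathbf{P}^\star_{12}$ receiving the leading $1$ of $\tilde{\mathbf{u}}$ picks out $r_{j+1}$. Finally, the discount weights $\gamma^j$ enter through the prompt rather than through the weights, which is what lets a single fixed $\mathbf{V}^\star$ realize the time-dependent actor update.
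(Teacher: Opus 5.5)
Your proposal is correct and follows essentially the same route as the paper. The paper uses the block decomposition of Lemma~\ref{lem: transformer output} to reduce the readout to $\bar{\mathbf{V}}_{21}\hat{\boldsymbol{\Sigma}}\mathbf{P}_{12}\tilde{\mathbf{w}}+\frac{1}{n}\bar{\mathbf{V}}_{22}\tilde{\mathbf{w}}\tilde{\mathbf{w}}^\top\mathbf{P}_{22}\tilde{\mathbf{w}}$ and then matches blocks, which is your token-wise score/value computation summed into $\hat{\boldsymbol{\Sigma}}$; note that you only need $\mathbf{P}_{22}^\star=\mathbf{0}$ to kill the query--query term, while the paper also zeroes $\bar{\mathbf{V}}_{22}^\star$, and both are consistent with the displayed structure.
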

The proof of Theorem \ref{thm: policy gradient tf} is provided in Appendix \ref{app: proof of thm: policy gradient tf}. Similar to SARSA, we design a training procedure to learn the transformer parameters $\boldsymbol{\theta}$ in Algorithm \ref{alg: training transformer - actor critic} in Appendix \ref{app: training transformer}. The convergence analysis follows the analogous arguments as in the SARSA setting and is therefore omitted. Empirical validation of convergence is provided in Section \ref{sec: experiments}.

\section{Experiments}\label{sec: experiments}

We empirically demonstrate that gradient-based training of a single linear self-attention block converges to the optimal parameter manifold characterized in Theorem~\ref{thm: semi-sarsa tf} and Theorem~\ref{thm: policy gradient tf}, using Algorithm \ref{alg: training transformer} and Algorithm \ref{alg: training transformer - actor critic} respectively. Furthermore, we evaluate the control performance of the trained transformers, showing their ability to perform policy improvement on unseen MDPs. Additional details and experiments can be found in Appendix~\ref{app:experiments}.

\begin{figure}[t]
    \centering
    \captionsetup[subfigure]{skip=2pt,margin=1pt}
    \captionsetup{aboveskip=4pt,belowskip=0pt}
    \begin{minipage}{0.6\textwidth}
        \begin{subfigure}{0.49\linewidth}
            \centering
            \includegraphics[width=\linewidth]{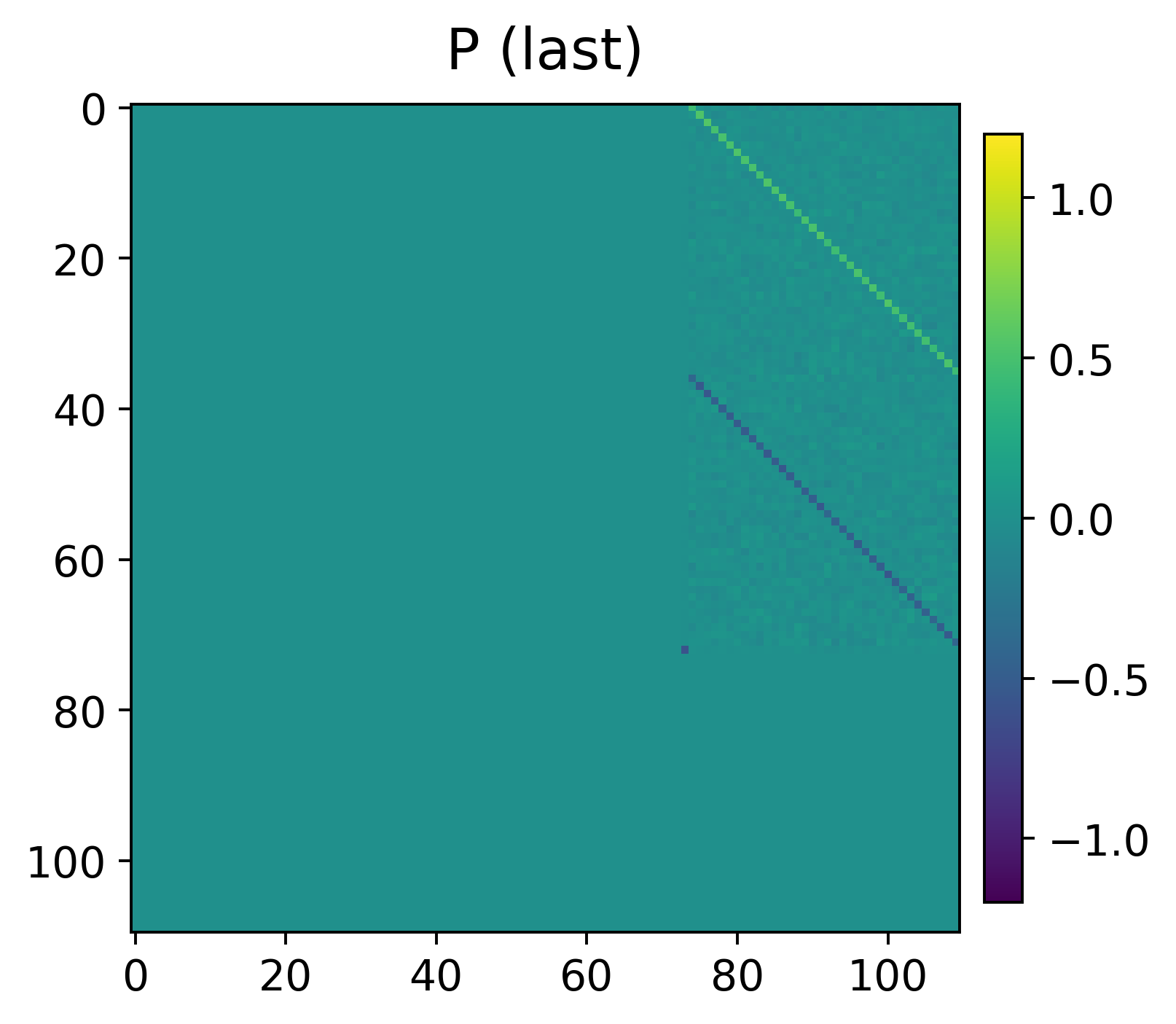}
            \caption{SARSA: learned $\mathbf{P}$.}
            \label{fig: parameter sarsa P}
        \end{subfigure}
        \hfill
        \begin{subfigure}{0.49\linewidth}
            \centering
            \includegraphics[width=\linewidth]{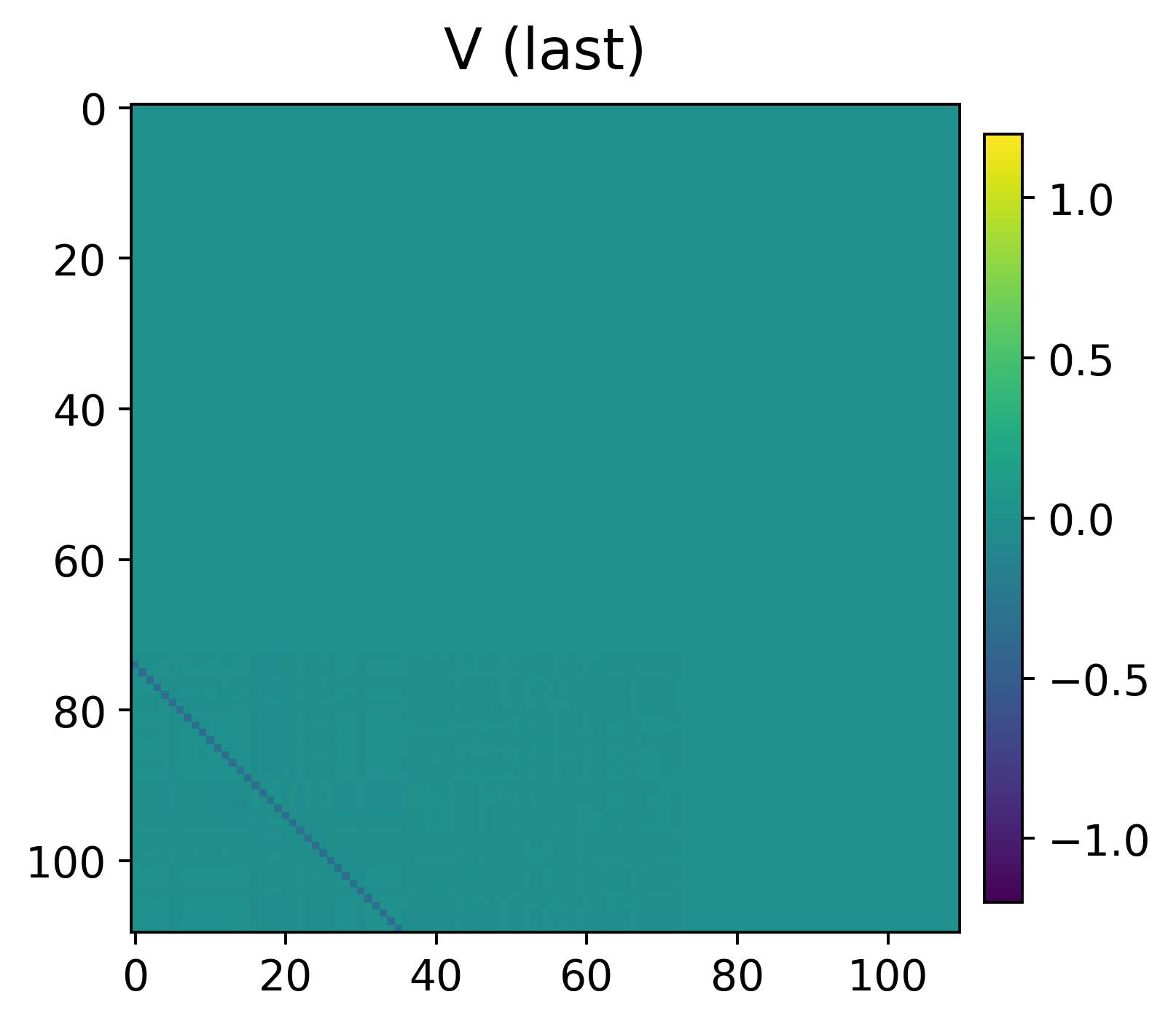}
            \caption{SARSA: learned $\mathbf{V}$.}
            \label{fig: parameter sarsa V}
        \end{subfigure}


        \begin{subfigure}{0.49\linewidth}
            \centering
            \includegraphics[width=\linewidth]{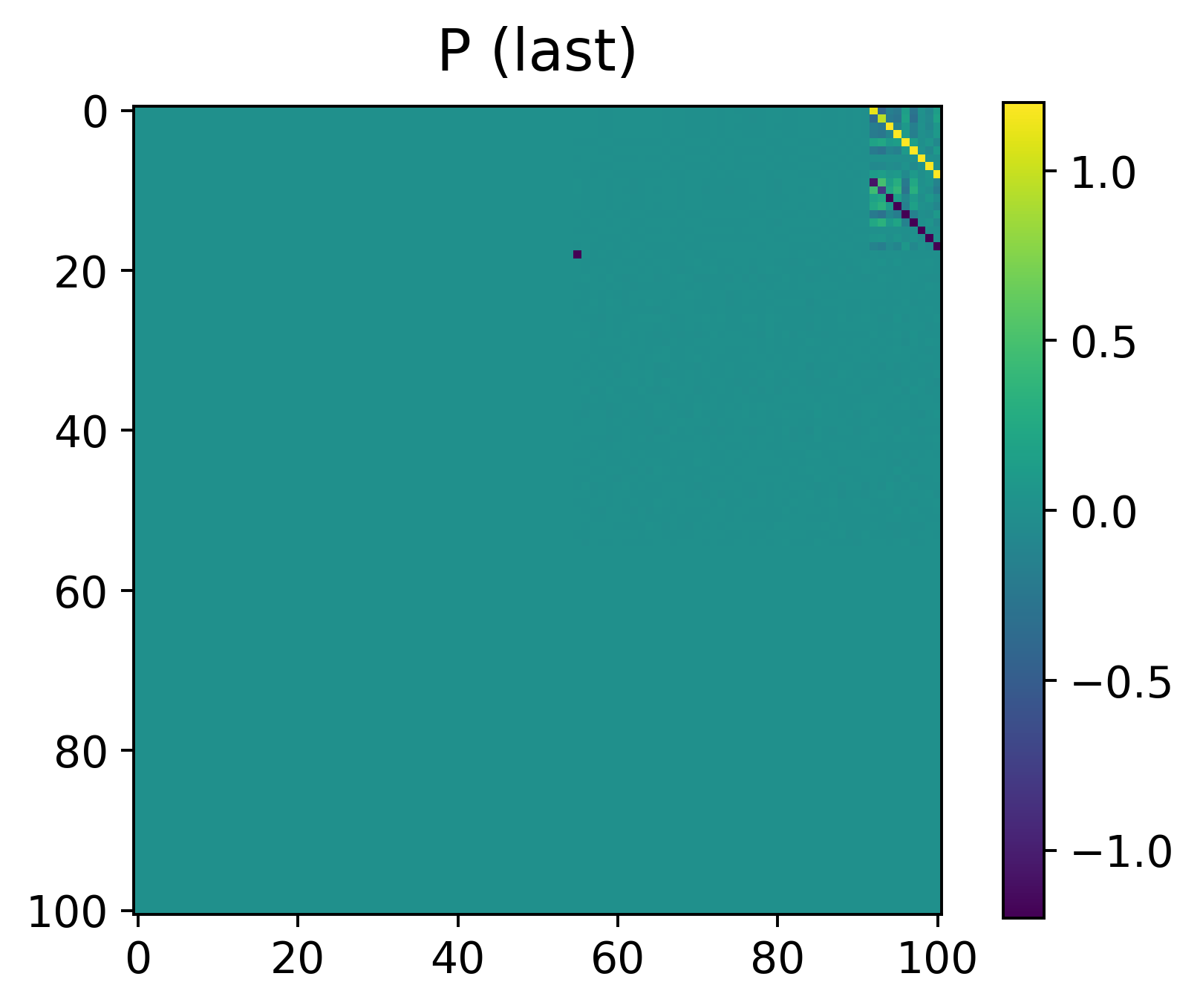}
            \caption{Actor-Critic: learned $\mathbf{P}$.}
            \label{fig: parameter ac P}
        \end{subfigure}
        \hfill
        \begin{subfigure}{0.49\linewidth}
            \centering
            \includegraphics[width=\linewidth]{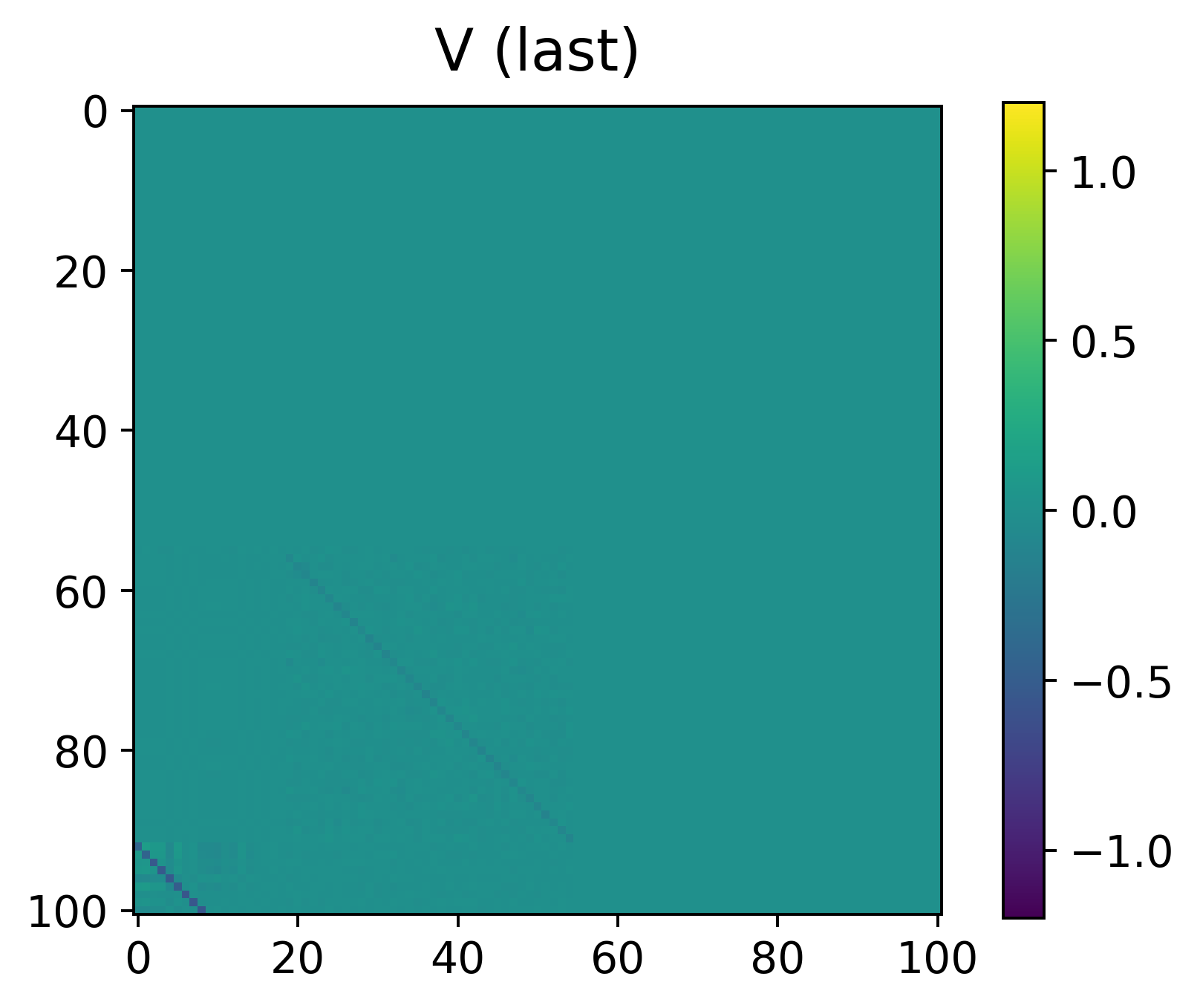}
            \caption{Actor-Critic: learned $\mathbf{V}$.}
            \label{fig: parameter ac V}
        \end{subfigure}
    \end{minipage}
    \caption{Final learned $\mathbf{P}$ and $\mathbf{V}$ matrices after training, for the SARSA transformer (top row) and the Actor-Critic transformer (bottom row). In both settings, the non-zero structure matches the theoretical constructions $(\mathbf{P}^\star,\mathbf{V}^\star)$ in~\eqref{eq:QKV} and \eqref{eq:QKV, actor-critic}.}
    \label{fig: parameter}
\end{figure}
\subsection{Training}\label{subsec: training}
\paragraph{Common Training Setup.}
Both experiments share the same training MDP family. Each MDP has $n_S=9$ states and $n_A=4$ actions, with discount $\gamma=0.5$. For each MDP, the initial-state distribution $P_0$ and transition kernel $P(\cdot\mid s,a)$ are drawn independently from $\mathrm{Dirichlet}(1,\ldots,1)$ (uniform over the probability simplex on $n_S$ states), and the rewards $r(a,s')\sim\mathrm{Unif}(-1,1)$ are sampled i.i.d. per $(a,s')$. Trajectories of window length $n=20$ are rolled out with horizon $T=1000$. In both experiments, the in-context learner is a one-layer transformer with corresponding embedding dimension $D$. Following Proposition~\ref{prop:invariant-set}, the inert blocks $\mathbf{P}{11},\mathbf{P}{21},\mathbf{V}{11},\mathbf{V}{12},\mathbf{P}{22},\mathbf{V}{22}$ are initialized to zero, while the effective sub-blocks $(\mathbf{P}{12},\bar{\mathbf{V}}{21})$ are initialized with Xavier-normal entries (gain $0.1$). Training is performed over $K=10{,}000$ independently sampled MDPs using Adam optimizer \citep{kingma2015adam} with initial learning rate $\eta=10^{-3}$ and exponential decay by a factor of $0.99$ every 10 MDPs.

\paragraph{SARSA experiment specifications.}
For each MDP, the state–action feature map $\boldsymbol{\phi}:\mathcal{S}\times\mathcal{A}\to\mathbb{R}^{d}$ ($d=n_S n_A=36$) and the initial weight $\mathbf{w}^{(0)}\in\mathbb{R}^{d}$ have entries drawn i.i.d.\ from $\mathrm{Unif}(-1,1)$. The behavior policy is $\epsilon$-greedy ($\epsilon=0.1$) with $Q(s,a)=\boldsymbol{\phi}(s,a)^\top\mathbf{w}$, and the SARSA target $\mathbf{w}_{\text{SARSA}}$ is computed from the trajectory window with step size $\alpha=0.2$. The transformer uses embedding dimension $D=3d+2=110$.

\paragraph{Actor-Critic experiment specifications.}
For each MDP, the policy feature $\boldsymbol{\phi}_\pi:\mathcal{S}\times\mathcal{A}\to\mathbb{R}^m$ ($m=36$), value feature $\boldsymbol{\phi}_V:\mathcal{S}\to\mathbb{R}^d$ ($d=9$), and initial weights $\boldsymbol{\lambda}^{(0)}\in\mathbb{R}^m$, $\mathbf{w}^{(0)}\in\mathbb{R}^d$ are drawn i.i.d.\ from $\mathrm{Unif}(-1,1)$. The behavior policy follows $\pi(a\mid s)\propto\exp(\boldsymbol{\phi}_\pi(s,a)^\top\boldsymbol{\lambda})$, with $\epsilon=0.1$ -- random exploration. At each step, the teacher applies a one-step critic update to $\mathbf{w}$ (step size $\beta=0.8$) and an actor update to $\boldsymbol{\lambda}$ (step size $\alpha=0.2$). The transformer uses embedding dimension $D=3d+2m+2=101$.

\paragraph{Results.}
In both experiments, the training loss decays approximately exponentially to zero (see Appendix \ref{app: training loss}). Figure~\ref{fig: parameter} shows the learned $\mathbf{P}$ and $\mathbf{V}$ matrices: the upper-right block of $\mathbf{P}$ and the relevant rows of the lower-left block of $\mathbf{V}$ recover the structure of $(\mathbf{P}^\star,\mathbf{V}^\star)$ in~\eqref{eq:QKV} and \eqref{eq:QKV, actor-critic} (up to scaling), while other blocks remain near zero. This confirms that the ICRL training algorithms \ref{alg: training transformer} and \ref{alg: training transformer - actor critic} drive the parameters toward the optimal manifold.


\subsection{Evaluation}\label{subsec: evaluation}

Beyond parameter-level fidelity, we evaluate whether the trained transformer, when deployed as an in-context update rule on unseen MDPs, induces high-return policies. We randomly sample $K=100$ test MDPs with corresponding features, disjoint from the training set. For each MDP, we run closed-loop in-context updates: at each step, the prompt $\mathbf{H}^{(t)}$ is fed to the transformer to produce $\mathbf{w}^{(t+1)}$ (or $(\boldsymbol{\lambda}^{(t+1)}, \mathbf{w}^{(t+1)})$ in the actor–critic setting), which defines the behavior policy used to generate the next trajectory window. Every 10 update steps, we freeze the iterate and estimate the discounted return via Monte Carlo using 32 trajectories of length 50, with initial state $s_0 \sim P_0^{(k)}$.

We compare against analytical teacher updates (SARSA \eqref{eq:semi-sarsa linear batch} and actor–critic \eqref{eq: actor-critic batch}) executed in the same closed loop, the oracle greedy policy under $Q^\star$ obtained by exact value iteration on $(P^{(k)}, R^{(k)})$, and a uniform random policy. Figure~\ref{fig: control performance} reports the average return across MDPs. In both settings, the transformer closely tracks the teacher: performance improves rapidly from near-random initialization and approaches the oracle as $t$ increases. Crucially, this consistency persists across long in-context horizons, indicating that the transformer faithfully implements the teacher update rule rather than merely mimicking short-horizon behavior. 

Notably, the remaining gap to the oracle reflects limitations of the linear function approximation rather than imperfect imitation, since the same gap is also observed for the analytical teacher. These results show that convergence to the optimal parameter manifold (Theorem~\ref{thm:local-convergence-manifold}) translates into strong policy-level performance, with the trained transformer accurately reproducing the analytical update rules at deployment.
\begin{figure}[t]
    \centering
    \begin{minipage}{0.88\textwidth}
        \begin{subfigure}{0.49\linewidth}
            \centering
            \includegraphics[width=\linewidth]{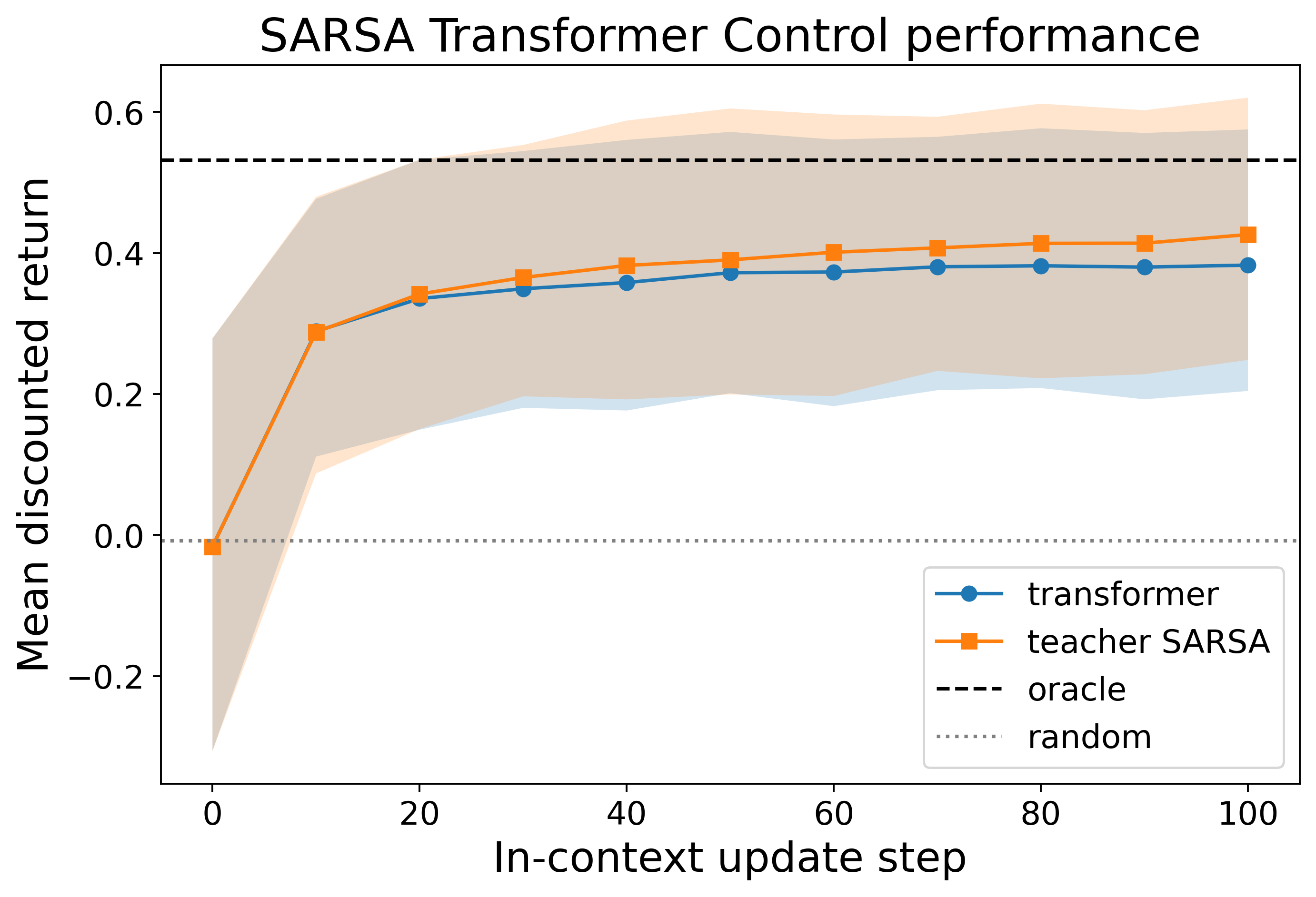}
            \caption{SARSA transformer.}
            \label{fig: control sarsa}
        \end{subfigure}
        \hfill
        \begin{subfigure}{0.49\linewidth}
            \centering
            \includegraphics[width=\linewidth]{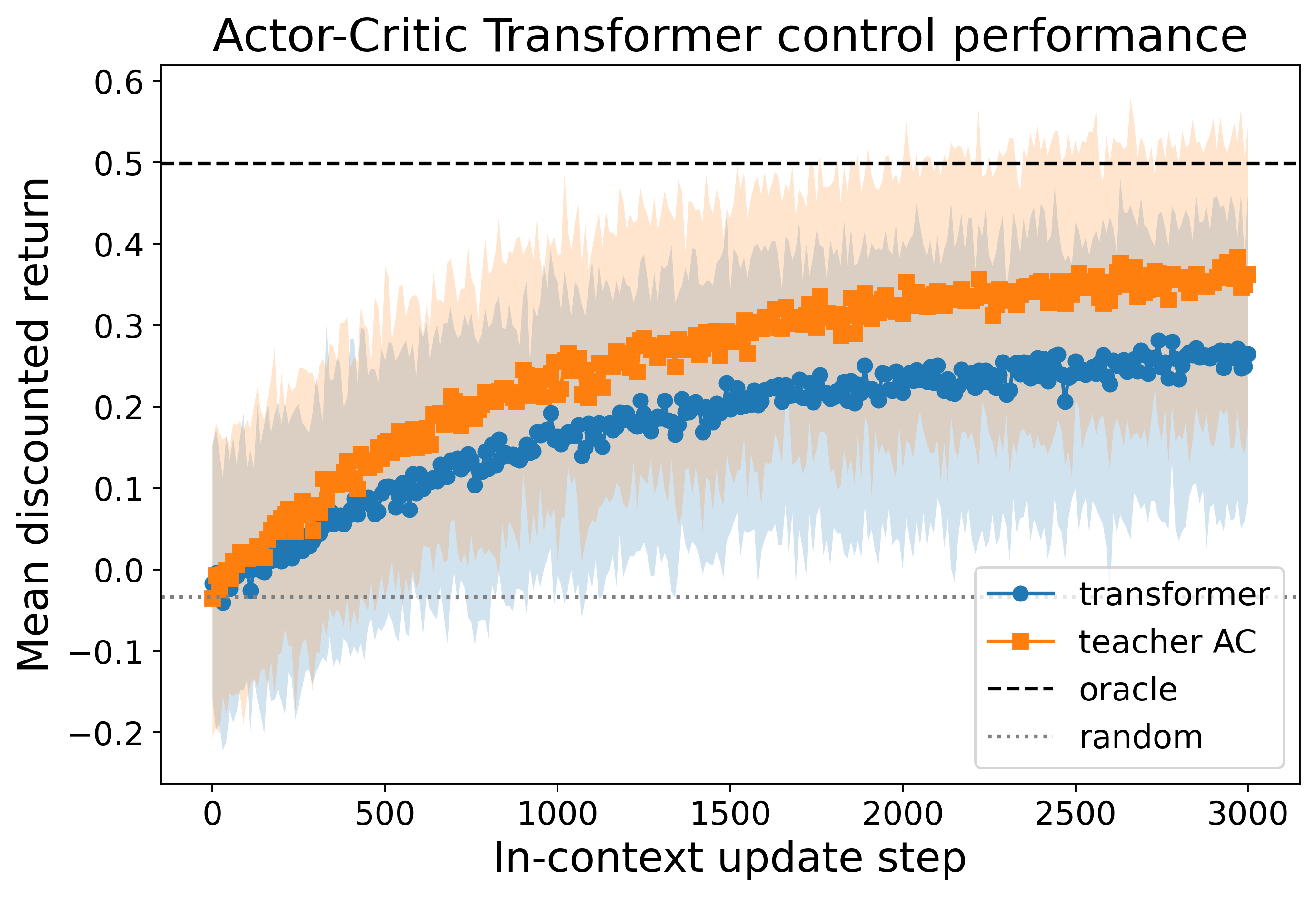}
            \caption{Actor-Critic transformer.}
            \label{fig: control ac}
        \end{subfigure}
    \end{minipage}
    \caption{Closed-loop control performance on $100$ held-out random MDPs versus in-context update step $t$. Curves show the per-MDP return averaged across MDPs (shaded: $25$--$75\%$ interquartile band). The transformer curve closely tracks the teacher curve in both settings, indicating that the trained model approximately implements the analytical update rule at deployment.}
    \label{fig: control performance}
\end{figure}

\section{Conclusion}

In this work, we demonstrate that transformers can provably implement ICRL algorithms that perform policy improvement. Through explicit constructions, we show that a single linear-attention block can realize batch SARSA and actor-critic updates. Furthermore, we provide the first convergence guarantee in the ICRL literature, proving that under suitable richness conditions on the training MDPs, gradient-flow training to mimic the teacher update converges locally to the optimal parameter manifold. Our empirical results validate these theoretical findings, showing that trained transformers recover the predicted parameter structures and achieve strong control performance on unseen MDPs. These results help bridge the gap between the theoretical understanding and practical implementation of ICRL, suggesting that the transformer architecture naturally favors learning and implementing reinforcement-learning algorithmic structures. Overall, this work advances our understanding of how transformers perform RL algorithms in context, with implications for both AI safety and the design of more capable learning systems.

\subsection{Limitations}
Our work has several limitations that need to be addressed in future work. First, Assumption~\ref{asp:training-MDP assumptions} imposes richness conditions on the training MDPs that may be difficult to verify in practice. While such conditions are expected to hold generically under random features, rewards, and sufficient exploration, precisely characterizing the required trajectory diversity remains open. Second, our convergence guarantee is local and requires initialization near the optimal manifold; understanding the global optimization landscape is an important direction for future work. Finally, our analysis focuses on deterministic gradient flow in a simplified setting, leaving extensions to stochastic optimization and more complex transformer architectures for future study.




\newpage
\bibliographystyle{unsrtnat}
\bibliography{neurips_2026}


\appendix
\section{Additional Related Works}\label{app: related works}
In this appendix, we discuss additional works that are related to our paper.

\paragraph{In-context learning theory for supervised tasks.}
\citet{garg2023transformers} initiate this discussion on simple function classes, showing empirically that a transformer trained from scratch can in-context learn linear regression, two-layer neural networks, and decision trees. A subsequent line of mechanistic work then asks \emph{how} such behavior arises. \citet{vonoswald2023transformers} and \citet{akyurek2023what} argue that transformers implement gradient descent or closed-form least squares on the in-context examples, and \citet{ahn2023transformers} and \citet{mahankali2024one} prove that, on linear regression, the optimal one-layer linear self-attention block is exactly one step of preconditioned gradient descent. \citet{bai2023transformers} broaden the picture by showing that transformers can implement a wide family of statistical estimators -- including ridge regression, Lasso, and gradient descent on generalized linear models -- and can even perform algorithm selection in context. \citet{liang2025transformers} further extend this line to in-context linear regression with endogenous covariates, constructing a multi-layer transformer that implements two-stage least squares and showing that it consistently estimates the structural parameter under standard instrumental-variable assumptions.

\paragraph{In-context reinforcement learning.}
The view of RL as in-context sequence modeling was popularized by the Decision Transformer \citep{chen2021decision} and the Trajectory Transformer \citep{janner2021trajectory}, which cast policy improvement as autoregressive prediction over offline trajectories. Building on this paradigm, \citet{laskin2023algorithm} introduce \emph{algorithm distillation}, in which a causal transformer trained on RL learning histories reproduces the underlying learning algorithm at inference time, and \citet{lee2023supervised} show that supervised pretraining over offline trajectories suffices to elicit ICRL on new tasks. Subsequent empirical work demonstrates that large language models can in-context execute policy iteration on tabular MDPs \citep{brooks2023large}; that pretraining on sufficiently diverse trajectories enables transformers to generalize to unseen sequential decision-making tasks purely from their context \citep{raparthy2024generalization}; that scalable long-context architectures such as AMAGO learn adaptive policies in context across challenging environments \citep{grigsby2024amago}; and that LLMs can perform nontrivial exploration in bandit and small MDP problems even without external scaffolding \citep{krishnamurthy2024can}. On the theoretical side, \citet{lin2024transformers} prove that supervised pretraining of transformers on offline trajectories yields near-optimal posterior-sampling policies, establishing sample-complexity guarantees for ICRL pretraining. The work most closely related to ours is \citet{TCLTD}, which shows that a linear-attention block admits an invariant set under the expected gradient update of a value-prediction loss, thereby implementing policy evaluation via TD methods. Our work extends this line in two directions: from policy evaluation to policy improvement (SARSA and actor-critic), and from invariance to convergence of gradient training.

\paragraph{Training Dynamics.}
A complementary line of work studies the optimization landscape and training dynamics of attention itself, asking whether the in-context algorithms identified by mechanistic constructions are actually recoverable by gradient-based learning. \citet{zhang2024trained} prove that gradient flow on a single linear self-attention block trained on random linear-regression tasks converges to the global minimum and recovers the preconditioned-gradient-descent solution predicted by \citet{ahn2023transformers,mahankali2024one}. \citet{huang2023context} extend such convergence guarantees to softmax attention under orthogonal feature assumptions, and \citet{chen2024training} analyze multi-head softmax attention in the in-context regression setting, identifying an emergent task-specialization phenomenon en route to the global minimizer. From a different angle, \citet{tian2023scan} dissect token-level dynamics of one-layer attention during language pretraining, while \citet{nichani2024how} show that transformers can recover hidden causal structure via gradient descent on synthetic tasks. These results are confined to supervised regression-style objectives. Our work pushes the training-dynamics analysis into ICRL, where the loss is non-convex and the optima form a one-dimensional scaling manifold rather than an isolated point.

\section{Training Algorithm}\label{app: training transformer}
\subsection{Actor-Critic Transformer}

\begin{algorithm}[H]
    \begin{algorithmic}[1]
        \caption{Actor Critic Transformer Training}\label{alg: training transformer - actor critic}
        \State \textbf{Input:} Transformer parameter $\boldsymbol{\theta}^{(0)}=(\boldsymbol{P}^{(0)}, \boldsymbol{V}^{(0)})$, trajectory window length $n$, number of frames per MDP $T$, number of MDPs $K$, $\epsilon$-greedy parameter $\epsilon$, learning rate $\eta$, actor step size $\alpha$, and critic step size $\beta$
        \For{$k=0,\ldots,K-1$}
        \State Sample a MDP $\mathcal{M}^{(k)}=(\mathcal{S},\mathcal{A},P^{(k)},R^{(k)}, \gamma)$, and generate corresponding feature map for policy $\boldsymbol{\phi}_\pi^{(k)}:\mathcal{S}\times\mathcal{A}\to\mathbb{R}^m$, and feature map for state value function $\boldsymbol{\phi}_V^{(k)}:\mathcal{S}\to\mathbb{R}^d$.Initialize parameters  $\boldsymbol{\lambda}^{(0)}, \mathbf{w}^{(0)}$, and state $s_0^{(0)}$ randomly
        \For{$t=0,\ldots,T-1$}
        \State Calculate policy $\pi_{\boldsymbol{\lambda}^{(t)}}^{(k)}(a|s)=\frac{\exp\{\boldsymbol{\lambda}^{(t)\top}\boldsymbol{\phi}_\pi^{(k)}(s,a)\}}{\sum_{b}\exp\{\boldsymbol{\lambda}^{(t)\top}\boldsymbol{\phi}_\pi^{(k)}(s,b)\}}, \forall s\in\mathcal{S}, a\in\mathcal{A}$
        \State Estimate state value function $v_{\mathbf{w}^{(t)}}^{(k)}(s)=\mathbf{w}^{(t)\top}\boldsymbol{\phi}_V^{(k)}(s),\quad \forall s\in\mathcal{S}$
        \State Compute policy gradient $\mathbf{g}_{\boldsymbol{\lambda}^{(t)}}^{(k)}(s,a)=\boldsymbol{\phi}_\pi^{(k)}(s,a)-\sum_{b}\boldsymbol{\phi}_\pi^{(k)}(s,b)\pi_{\boldsymbol{\lambda}^{(t)}}^{(k)}(b|s), \forall (s,a)\in\mathcal{S}\times\mathcal{A}$
        \State Sample n steps $(s_0^{(t)},a_0^{(t)},r_1^{(t)},s_1^{(t)},a_1^{(t)},\ldots,r_{n}^{(t)},s_{n}^{(t)},a_{n}^{(t)})$ from the MDP $\mathcal{M}^{(k)}$, according to policy $\pi_{\boldsymbol{\lambda}^{(t)}}^{(k)}$ with $\epsilon$-greedy exploration
        \State Formulate $\mathbf{H}^{(t)}$ as in \eqref{eq: ac prompt} with the sampled trajectory and current parameters $\boldsymbol{\lambda}^{(t)}, \mathbf{w}^{(t)}$
        \State Compute $\begin{bmatrix}
        \boldsymbol{\lambda}^{(t+1)}\\
        \mathbf{w}^{(t+1)}
        \end{bmatrix}=\text{TF}_{\boldsymbol{\theta}^{(t)}}(\mathbf{H}^{(t)}):=\begin{bmatrix}
            \boldsymbol{\lambda}^{(t)}\\
            \mathbf{w}^{(t)}
        \end{bmatrix}+\frac{1}{n}\left[\mathbf{V}^{(t)}\mathbf{H}^{(t)}\mathbf{H}^{(t)\top}\mathbf{P}^{(t)}\mathbf{H}^{(t)}\right]_{(d+m):,-1}$
        \State Compute $\mathbf{w}_{\text{AC}}^{(t+1)}$ and $\boldsymbol{\lambda}_{\text{AC}}^{(t+1)}$ according to \eqref{eq: actor-critic batch}
        \State Compute loss $\hat{L}^{(t)}=\frac{1}{2}\|\mathbf{w}^{(t+1)}-\mathbf{w}_{\text{AC}}^{(t+1)}\|^2+\frac{1}{2}\|\boldsymbol{\lambda}^{(t+1)}-\boldsymbol{\lambda}_{\text{AC}}^{(t+1)}\|^2$
        \State Update $\boldsymbol{\theta}^{(t+1)}\leftarrow \boldsymbol{\theta}^{(t)}-\eta\nabla_{\boldsymbol{\theta}}\hat{L}^{(t)}$
        \State Update $s_0^{(t+1)}\leftarrow s_{n}^{(t)}$, $\boldsymbol{\lambda}^{(t+1)}\leftarrow \boldsymbol{\lambda}_{\text{AC}}^{(t+1)}$,  $\mathbf{w}^{(t+1)}\leftarrow \mathbf{w}_{\text{AC}}^{(t+1)}$
        \EndFor
        \State Update $\boldsymbol{\theta}^{(0)}\leftarrow \boldsymbol{\theta}^{(T)}$
        \EndFor
        \State \textbf{Return} Learned transformer parameter $\boldsymbol{\theta}$
    \end{algorithmic}
\end{algorithm}
Algorithm \ref{alg: training transformer - actor critic} extends Algorithm \ref{alg: training transformer} to jointly learn both the value and policy parameters. At each iteration, trajectories are sampled from the MDP, and the corresponding actor-critic updates are computed as teacher signals for both the critic parameter $\mathbf{w}$ and the actor parameter $\boldsymbol{\lambda}$. The transformer parameters are then updated via gradient descent to match these targets. The empirical loss combines the squared errors for both parameter updates, encouraging the transformer to approximate the full actor-critic learning rule. 

\section{Supporting Lemmas}\label{app: supporting lemmas}
\begin{lemma}[Transformer output decomposition]\label{lem: transformer output}
   Consider a transformer block with parameters $\boldsymbol{\theta}=\{(\mathbf{P},\mathbf{V})\}\subseteq\mathbb{R}^{D\times D}$. We decompose the parameters into blocks as follows:
    \begin{align*}
        \mathbf{P}=\begin{bmatrix}
            \mathbf{P}_{11} & \mathbf{P}_{12}\\
            \mathbf{P}_{21} & \mathbf{P}_{22}
        \end{bmatrix},\qquad
        \mathbf{V}=\begin{bmatrix}
            \mathbf{V}_{11} & \mathbf{V}_{12}\\
            \mathbf{V}_{21} & \mathbf{V}_{22}
        \end{bmatrix},
    \end{align*}
    where $\mathbf{P}_{11},\mathbf{V}_{11}\in\mathbb{R}^{(2d+1)\times(2d+1)}$, $\mathbf{P}_{12},\mathbf{V}_{12}\in\mathbb{R}^{(2d+1)\times(d+1)}$, $\mathbf{P}_{21},\mathbf{V}_{21}\in\mathbb{R}^{(d+1)\times(2d+1)}$, and $\mathbf{P}_{22},\mathbf{V}_{22}\in\mathbb{R}^{(d+1)\times(d+1)}$. Let the output projection function be $f_{\text{read}}(\cdot)=\cdot_{-d:,-1}$, which extracts the last $d$ elements of the last column. Denote $\tilde{\mathbf{w}}:=\begin{bmatrix} 1 & \mathbf{w} \end{bmatrix}^\top$. Given formulated input prompt 
    \begin{align*}
    \mathbf{H}=\begin{bmatrix}
    \mathbf{x}_0 & \cdots & \mathbf{x}_{n-1} & \mathbf{0}\\
    \mathbf{0} & \cdots & \mathbf{0} & \tilde{\mathbf{w}}\\
    \end{bmatrix}\in\mathbb{R}^{D\times (n+1)},
    \end{align*}
 the transformer output can then be written as:
    \begin{align}\label{eq:transformer-output}
        \textsf{TF}_{\boldsymbol{\theta}}(\mathbf{H})=\mathbf{w}+\bar{\mathbf{V}}_{21}\hat{\boldsymbol{\Sigma}}\mathbf{P}_{12}\tilde{\mathbf{w}}+ \frac{1}{n}\bar{\mathbf{V}}_{22}\tilde{\mathbf{w}}\tilde{\mathbf{w}}^\top\mathbf{P}_{22}\tilde{\mathbf{w}},
    \end{align}
    where $\hat{\boldsymbol{\Sigma}}:=\frac{1}{n}\sum_{i=0}^{n-1}\mathbf{x}_i\mathbf{x}_i^\top$, and $\bar{\mathbf{V}}_{21}, \bar{\mathbf{V}}_{22}$ denote the last $d$ rows of $\mathbf{V}_{21}$ and $\mathbf{V}_{22}$, respectively.
\end{lemma}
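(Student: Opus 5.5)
The proof is a direct block-matrix computation, carried out on the last column of $\mathbf{H}_{\text{out}}$ only, since the readout $f_{\text{read}}$ keeps just its last $d$ entries. Write $\mathbf{H}=[\mathbf{X}\ \ \mathbf{e}]$, where $\mathbf{X}\in\mathbb{R}^{D\times n}$ has columns $(\mathbf{x}_i;\mathbf{0})$ and the last column is $\mathbf{e}=(\mathbf{0};\tilde{\mathbf{w}})$. By \eqref{def:hLin}, the last column of $\mathbf{H}_{\text{out}}$ is
\begin{align*}
\mathbf{e}+\frac{1}{n}\mathbf{V}\mathbf{H}\mathbf{H}^\top\mathbf{P}\,\mathbf{e}.
\end{align*}
The last $d$ entries of $\mathbf{e}$ are exactly $\mathbf{w}$, which accounts for the leading term in \eqref{eq:transformer-output}.

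Next I compute $\mathbf{H}\mathbf{H}^\top=\sum_{i}(\mathbf{x}_i;\mathbf{0})(\mathbf{x}_i;\mathbf{0})^\top+\mathbf{e}\mathbf{e}^\top$. This matrix is block diagonal:
\begin{align*}
\frac{1}{n}\mathbf{H}\mathbf{H}^\top=\begin{bmatrix}\hat{\boldsymbol{\Sigma}} & \mathbf{0}\\ \mathbf{0} & \frac{1}{n}\tilde{\mathbf{w}}\tilde{\mathbf{w}}^\top\end{bmatrix}.
\end{align*}
The cross terms vanish because each context column is zero in its bottom $d+1$ coordinates, while the query column is zero in its top $2d+1$ coordinates. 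This disjoint-support observation is the only structural point in the proof.

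For the right factor, $\mathbf{P}\mathbf{e}=(\mathbf{P}_{12}\tilde{\mathbf{w}};\ \mathbf{P}_{22}\tilde{\mathbf{w}})$. Multiplying by the block-diagonal matrix above gives $(\hat{\boldsymbol{\Sigma}}\mathbf{P}_{12}\tilde{\mathbf{w}};\ \frac{1}{n}\tilde{\mathbf{w}}\tilde{\mathbf{w}}^\top\mathbf{P}_{22}\tilde{\mathbf{w}})$.

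Finally I apply $\mathbf{V}$ and keep only its last $d$ rows, which are $[\bar{\mathbf{V}}_{21}\ \ \bar{\mathbf{V}}_{22}]$. This yields
\begin{align*}
\bar{\mathbf{V}}_{21}\hat{\boldsymbol{\Sigma}}\mathbf{P}_{12}\tilde{\mathbf{w}}+\frac{1}{n}\bar{\mathbf{V}}_{22}\tilde{\mathbf{w}}\tilde{\mathbf{w}}^\top\mathbf{P}_{22}\tilde{\mathbf{w}},
\end{align*}
which together with the leading $\mathbf{w}$ is exactly \eqref{eq:transformer-output}.

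I do not expect a genuine obstacle; the only care needed is bookkeeping. Two points to check: the normalization is $1/n$ even though $\mathbf{H}$ has $n+1$ columns, which is why $\hat{\boldsymbol{\Sigma}}$ carries its own $1/n$ and the query term keeps an explicit $1/n$. And the dimensions must fit, with the lower block of size $d+1$ and $\bar{\mathbf{V}}_{21}\in\mathbb{R}^{d\times(2d+1)}$.
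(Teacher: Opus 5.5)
Your proof is correct and is essentially the paper's block-matrix computation. The only difference is cosmetic: you group the product as $\mathbf{V}(\mathbf{H}\mathbf{H}^\top)\mathbf{P}\mathbf{e}$ and use the block-diagonal structure of $\frac{1}{n}\mathbf{H}\mathbf{H}^\top$, whereas the paper expands $(\mathbf{V}\mathbf{H})(\mathbf{H}^\top\mathbf{P}\mathbf{h}_{n+1})$ column by column — the same disjoint-support fact in both cases.
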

\begin{lemma}[Local Polyak--\L{}ojasiewicz inequality]\label{lem:local-pl-ic-sarsa}
Suppose Assumption~\ref{asp:training-MDP assumptions} holds, and let the optimal manifold $\mathfrak{M}_I$, the constants $B_{\tilde{w}},B_\Sigma,C_Q,m_0,M_0$, and the tube $\mathcal{T}_r(I)$ be as defined in Theorem~\ref{thm:local-convergence-manifold}. Then $\mathcal{L}$ attains its minimum value $\mathcal{L}^\star=0$ on $\mathfrak{M}_I$, and for every $0<r<\frac{\sqrt{m_0}}{3C_Q}$ the population loss satisfies
\begin{align}\label{eq: local pl}
    \frac{1}{2}\|\nabla\mathcal{L}(\boldsymbol{\theta}_{\text{eff}})\|^2
    \geq
    \mu_r\bigl(\mathcal{L}(\boldsymbol{\theta}_{\text{eff}})-\mathcal{L}^\star\bigr),
    \qquad\forall\,\boldsymbol{\theta}_{\text{eff}}\in\mathcal{T}_r(I),
\end{align}
where
\begin{align*}
    \mu_r:=\frac{\bigl(m_0-3C_Q\sqrt{m_0}\,r\bigr)^2}{\bigl(\sqrt{M_0}+C_Q r\bigr)^2}.
\end{align*}
\end{lemma}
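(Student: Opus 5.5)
The plan is to expand the residual of the loss around a point of the manifold and reduce the PL inequality to a lower bound on a directional derivative. First, on the invariant set of Proposition~\ref{prop:invariant-set}(ii) we have $\mathbf{P}_{22}=\mathbf{0}$ and $\bar{\mathbf{V}}_{22}=\mathbf{0}$, so Lemma~\ref{lem: transformer output} gives $\textsf{TF}_{\boldsymbol{\theta}}(\mathbf{H}(z))=\mathbf{w}+\bar{\mathbf{V}}_{21}\hat{\boldsymbol{\Sigma}}\mathbf{P}_{12}\tilde{\mathbf{w}}$. Two identities follow directly from the definitions:
\begin{align*}
    \mathbf{x}_i^\top\mathbf{P}_{12}^\star\tilde{\mathbf{w}}=\delta_i,
    \qquad
    \bar{\mathbf{V}}_{21}^\star\mathbf{x}_i=\alpha\boldsymbol{\phi}_i .
\end{align*}
Consequently $\hat{\boldsymbol{\Sigma}}\mathbf{P}_{12}^\star\tilde{\mathbf{w}}=\mathbf{b}(z)$, $\bar{\mathbf{V}}_{21}^\star\hat{\boldsymbol{\Sigma}}=\alpha\mathbf{R}(z)$, and $\mathbf{w}_{\text{SARSA}}(z)=\mathbf{w}+\bar{\mathbf{V}}_{21}^\star\hat{\boldsymbol{\Sigma}}\mathbf{P}_{12}^\star\tilde{\mathbf{w}}$. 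This is exactly Theorem~\ref{thm: semi-sarsa tf}, and it shows $\mathcal{L}=0=\mathcal{L}^\star$ on $\mathfrak{M}_I$.

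Next, write a point of the tube as $\boldsymbol{\theta}_{\text{eff}}=\bar{\boldsymbol{\theta}}_{\text{eff}}(c)+\Delta$ with $\Delta=(\mathbf{U},\mathbf{W})\in N_c\mathfrak{M}_I$ and $\|\Delta\|<r$. The residual then splits as $e(z)=\mathcal{J}_c(z)\Delta+\mathcal{Q}(z)\Delta$, where
\begin{align*}
    \mathcal{J}_c(z)\Delta:=\alpha c^{-1}\mathbf{R}(z)\mathbf{U}\tilde{\mathbf{w}}+c\,\mathbf{W}\mathbf{b}(z),
    \qquad
    \mathcal{Q}(z)\Delta:=\mathbf{W}\hat{\boldsymbol{\Sigma}}\mathbf{U}\tilde{\mathbf{w}} .
\end{align*}
Using $\|\hat{\boldsymbol{\Sigma}}\|\le B_\Sigma$ and $\|\tilde{\mathbf{w}}\|\le B_{\tilde{w}}$ from Assumption~\ref{asp:training-MDP assumptions}(1), together with $\|\mathbf{U}\|\|\mathbf{W}\|\le\frac12\|\Delta\|^2$, we get $\|\mathcal{Q}\Delta\|\le C_Q\|\Delta\|^2$.

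The core step is a two-sided bound $m_0\|\Delta\|^2\le\mathbb{E}\|\mathcal{J}_c\Delta\|^2\le M_0\|\Delta\|^2$ on the normal space.
\begin{itemize}
    \item \emph{Lower bound.} Expanding the square and applying the no-cancellation condition (5) to the cross term gives $\mathbb{E}\|\mathcal{J}_c\Delta\|^2\ge(1-\rho)\bigl(\alpha^2c^{-2}\mathbb{E}\|\mathbf{R}\mathbf{U}\tilde{\mathbf{w}}\|^2+c^2\mathbb{E}\|\mathbf{W}\mathbf{b}\|^2\bigr)$, via $a^2+b^2-2\rho ab\ge(1-\rho)(a^2+b^2)$.
    \item \emph{First term.} Since $\tilde{\mathbf{w}}$ is a function of $\mathbf{w}$, condition on $\mathbf{w}$ and apply (3) to get $\mathbb{E}\|\mathbf{R}\mathbf{U}\tilde{\mathbf{w}}\|^2\ge\kappa_R\,\mathbb{E}\|\mathbf{U}\tilde{\mathbf{w}}\|^2$. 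Then (2) gives $\mathbb{E}\|\mathbf{U}\tilde{\mathbf{w}}\|^2\ge\kappa_{\tilde{w}}\|\mathbf{U}\|_F^2$.
    \item \emph{Second term.} Condition (4) gives $\mathbb{E}\|\mathbf{W}\mathbf{b}\|^2\ge\kappa_q\|\mathbf{W}\|_F^2$.
    \item \emph{Uniformity in $c$.} Using $c\in[c_-,c_+]$ yields the constant $m_0$.
    \item \emph{Upper bound.} This is a crude boundedness estimate, giving $M_0$.
\end{itemize}
I expect the conditioning step to be the delicate point. Assumption (3) is stated conditionally on $\mathbf{w}$, and one must check that the scaling-direction constraint defining $N_c\mathfrak{M}_I$ is exactly what (5) requires. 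Without that constraint the tangent direction $(\mathbf{P}_{12}^\star,-c^{-2}\bar{\mathbf{V}}_{21}^\star)$ would give $\mathcal{J}_c\Delta=0$.

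Finally, convert this into the PL inequality via the directional derivative along $\Delta$. Because $\mathcal{Q}$ is quadratic, $\frac{d}{ds}e\big|_{\bar{\boldsymbol{\theta}}(c)+s\Delta,\,s=1}=\mathcal{J}_c\Delta+2\mathcal{Q}\Delta$, so
\begin{align*}
    \langle\nabla\mathcal{L},\Delta\rangle=\mathbb{E}\langle\mathcal{J}_c\Delta+2\mathcal{Q}\Delta,\;\mathcal{J}_c\Delta+\mathcal{Q}\Delta\rangle .
\end{align*}
Set $s:=(\mathbb{E}\|\mathcal{J}_c\Delta\|^2)^{1/2}$, so $s\ge\sqrt{m_0}\|\Delta\|$. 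Cauchy--Schwarz gives a lower bound of the form $s^2-3sC_Q\|\Delta\|^2-2C_Q^2\|\Delta\|^4$. Using $\|\Delta\|<r$ and the smallness condition $r<\sqrt{m_0}/(3C_Q)$, a short calculation should give $\|\nabla\mathcal{L}\|\,\|\Delta\|\ge(m_0-3C_Q\sqrt{m_0}\,r)\|\Delta\|^2$. In parallel, $\sqrt{2\mathcal{L}}\le s+C_Q\|\Delta\|^2\le(\sqrt{M_0}+C_Qr)\|\Delta\|$. Dividing out $\|\Delta\|$ and squaring gives $\tfrac12\|\nabla\mathcal{L}\|^2\ge\mu_r\mathcal{L}$; the case $\Delta=0$ is trivial. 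The gradient with respect to the full $\boldsymbol{\theta}$ dominates the effective-block gradient, so the inequality holds as stated.
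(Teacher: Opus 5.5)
Your route is the paper's proof step for step. Your $\mathcal{J}_c\Delta$ and $\mathcal{Q}\Delta$ are the paper's $\mathbf{X}(z)$ and $\mathbf{Y}(z)$, and the lower bound $\mathbb{E}\|\mathcal{J}_c\Delta\|^2\ge m_0\|\Delta\|^2$ uses Assumptions (2)--(5) in the same way. The bound $\sqrt{2\mathcal{L}}\le(\sqrt{M_0}+C_Qr)\|\Delta\|$ is the paper's Step~4, and the directional-derivative argument along $\Delta$ is Steps~5--6.

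One step, however, does not deliver the stated constant as you wrote it. Expanding,
\[
\langle\nabla\mathcal{L},\Delta\rangle=s^2+3\,\mathbb{E}\langle\mathcal{J}_c\Delta,\mathcal{Q}\Delta\rangle+2\,\mathbb{E}\|\mathcal{Q}\Delta\|^2,
\]
so the quartic term enters with a \emph{plus} sign. Your bound $s^2-3sC_Q\|\Delta\|^2-2C_Q^2\|\Delta\|^4$ is true but needlessly lossy, and $(m_0-3C_Q\sqrt{m_0}\,r)\|\Delta\|^2$ does not follow from it. Even at the worst case $s=\sqrt{m_0}\|\Delta\|$ you would need $3\sqrt{m_0}\,(r-\|\Delta\|)\ge 2C_Q\|\Delta\|^2$, which fails as $\|\Delta\|\to r$. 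Uniformly over the tube you would only get $m_0-3C_Q\sqrt{m_0}\,r-2C_Q^2r^2$, hence a constant strictly smaller than $\mu_r$.

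The repair is what the paper does. Drop the nonnegative term $2\,\mathbb{E}\|\mathcal{Q}\Delta\|^2$ and use Cauchy--Schwarz to get $\langle\nabla\mathcal{L},\Delta\rangle\ge s^2-3sC_Q\|\Delta\|^2$. The map $s\mapsto s^2-3sC_Q\|\Delta\|^2$ is increasing for $s\ge\tfrac32C_Q\|\Delta\|^2$. Since $\|\Delta\|<r<\sqrt{m_0}/(3C_Q)$, you have $s\ge\sqrt{m_0}\|\Delta\|>3C_Q\|\Delta\|^2$. This is where the smallness condition is used, together with positivity of $m_0-3C_Q\sqrt{m_0}\,r$ for the later squaring. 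It lets you substitute $s=\sqrt{m_0}\|\Delta\|$ and obtain $(m_0-3C_Q\sqrt{m_0}\|\Delta\|)\|\Delta\|^2\ge(m_0-3C_Q\sqrt{m_0}\,r)\|\Delta\|^2$.

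With that correction, your remaining steps reproduce the paper's proof exactly: Cauchy--Schwarz on $\langle\nabla\mathcal{L},\Delta\rangle$, dividing by $\|\Delta\|$, squaring, and combining with your upper bound on $\mathcal{L}$.
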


\section{Proof of Supporting Lemmas}\label{app: proof of supporting lemmas}
\subsection{Proof of Lemma \ref{lem: transformer output}}
\begin{proof}
From \eqref{eq:input-matrix}, the input matrix is 
\begin{align*}
     \mathbf{H}=\begin{bmatrix}
    \mathbf{x}_0 & \cdots & \mathbf{x}_{n-1} & \mathbf{0}\\
    \mathbf{0} & \cdots & \mathbf{0} & \tilde{\mathbf{w}}\\
    \end{bmatrix}\in\mathbb{R}^{D\times (n+1)}.    
\end{align*}
By the definition of the linear self-attention block in \eqref{def:hLin},
\begin{align*}
    \mathbf{H}_{\text{out}}=\mathbf{H}+\frac{1}{n}(\mathbf{V}\mathbf{H})(\mathbf{H}^\top\mathbf{P}\mathbf{H})\in\mathbb{R}^{D\times(n+1)},
\end{align*}
and the readout extracts the last $d$ entries of the last column of $\mathbf{H}_{\text{out}}$. 
Denote the last column of the input by
\begin{align*}
    \mathbf{h}_{n+1}:=\begin{bmatrix}\mathbf{0}_{2d+1}\\ \tilde{\mathbf{w}}\end{bmatrix}\in\mathbb{R}^{D}.
\end{align*}
Then we have
\begin{align}\label{eq:tf-last-col}
    \mathbf{H}_{\text{out}}[:,n+1]=\mathbf{h}_{n+1}+\frac{1}{n}(\mathbf{V}\mathbf{H})(\mathbf{H}^\top\mathbf{P}\mathbf{h}_{n+1}).
\end{align}
Using the block partition \eqref{eq:block partition},
\begin{align*}
    \mathbf{P}\mathbf{h}_{n+1}=\begin{bmatrix}
        \mathbf{P}_{11} & \mathbf{P}_{12}\\
        \mathbf{P}_{21} & \mathbf{P}_{22}
    \end{bmatrix}\begin{bmatrix}\mathbf{0}\\ \tilde{\mathbf{w}}\end{bmatrix}=\begin{bmatrix}\mathbf{P}_{12}\tilde{\mathbf{w}}\\ \mathbf{P}_{22}\tilde{\mathbf{w}}\end{bmatrix}.
\end{align*}
Hence
\begin{equation}\label{eq:tf-lastcol-P}
\begin{gathered}
    \bigl(\mathbf{H}^\top\mathbf{P}\mathbf{h}_{n+1}\bigr)_{i+1}=\begin{bmatrix}\mathbf{x}_i^\top & \mathbf{0}\end{bmatrix}\begin{bmatrix}\mathbf{P}_{12}\tilde{\mathbf{w}}\\ \mathbf{P}_{22}\tilde{\mathbf{w}}\end{bmatrix}=\mathbf{x}_i^\top\mathbf{P}_{12}\tilde{\mathbf{w}},\quad i=0,\ldots,n-1,\\
    \bigl(\mathbf{H}^\top\mathbf{P}\mathbf{h}_{n+1}\bigr)_{n+1}=\begin{bmatrix}\mathbf{0} & \tilde{\mathbf{w}}^\top\end{bmatrix}\begin{bmatrix}\mathbf{P}_{12}\tilde{\mathbf{w}}\\ \mathbf{P}_{22}\tilde{\mathbf{w}}\end{bmatrix}=\tilde{\mathbf{w}}^\top\mathbf{P}_{22}\tilde{\mathbf{w}}.
\end{gathered}
\end{equation}
Similarly,
\begin{equation}\label{eq:tf-lastcol-V}
\begin{gathered}
    (\mathbf{V}\mathbf{H})[:,i+1]=\mathbf{V}\begin{bmatrix}\mathbf{x}_i\\ \mathbf{0}\end{bmatrix}=\begin{bmatrix}\mathbf{V}_{11}\mathbf{x}_i\\ \mathbf{V}_{21}\mathbf{x}_i\end{bmatrix},\quad i=0,\ldots,n-1,\\
    (\mathbf{V}\mathbf{H})[:,n+1]=\mathbf{V}\begin{bmatrix}\mathbf{0}\\ \tilde{\mathbf{w}}\end{bmatrix}=\begin{bmatrix}\mathbf{V}_{12}\tilde{\mathbf{w}}\\ \mathbf{V}_{22}\tilde{\mathbf{w}}\end{bmatrix}.
\end{gathered}
\end{equation}
Expanding the matrix-vector product as a sum over columns and combining \eqref{eq:tf-lastcol-P} and \eqref{eq:tf-lastcol-V}, we have
\begin{align*}
    (\mathbf{V}\mathbf{H})\bigl(\mathbf{H}^\top\mathbf{P}\mathbf{h}_{n+1}\bigr)
    &=\sum_{i=0}^{n-1}\begin{bmatrix}\mathbf{V}_{11}\mathbf{x}_i\\ \mathbf{V}_{21}\mathbf{x}_i\end{bmatrix}\mathbf{x}_i^\top\mathbf{P}_{12}\tilde{\mathbf{w}}+\begin{bmatrix}\mathbf{V}_{12}\tilde{\mathbf{w}}\\ \mathbf{V}_{22}\tilde{\mathbf{w}}\end{bmatrix}\tilde{\mathbf{w}}^\top\mathbf{P}_{22}\tilde{\mathbf{w}}\\
    &=\begin{bmatrix}
        \mathbf{V}_{11}\bigl(\sum_{i=0}^{n-1}\mathbf{x}_i\mathbf{x}_i^\top\bigr)\mathbf{P}_{12}\tilde{\mathbf{w}}+\mathbf{V}_{12}\tilde{\mathbf{w}}\tilde{\mathbf{w}}^\top\mathbf{P}_{22}\tilde{\mathbf{w}}\\
        \mathbf{V}_{21}\bigl(\sum_{i=0}^{n-1}\mathbf{x}_i\mathbf{x}_i^\top\bigr)\mathbf{P}_{12}\tilde{\mathbf{w}}+\mathbf{V}_{22}\tilde{\mathbf{w}}\tilde{\mathbf{w}}^\top\mathbf{P}_{22}\tilde{\mathbf{w}}
    \end{bmatrix}.
\end{align*}
Thus, rescaling by $\frac{1}{n}$, the lower $(d+1)$-dimensional block of \eqref{eq:tf-last-col} reads
\begin{align*}
    \tilde{\mathbf{w}}+\mathbf{V}_{21}\hat{\boldsymbol{\Sigma}}\mathbf{P}_{12}\tilde{\mathbf{w}}+\frac{1}{n}\mathbf{V}_{22}\tilde{\mathbf{w}}\tilde{\mathbf{w}}^\top\mathbf{P}_{22}\tilde{\mathbf{w}}.
\end{align*}
Since $\tilde{\mathbf{w}}=[1,\mathbf{w}^\top]^\top$, the last $d$ entries of $\tilde{\mathbf{w}}$ equal $\mathbf{w}$. The readout $f_{\text{read}}(\cdot)=\cdot_{-d:,-1}$ extracts the last $d$ rows of the lower block, which selects the last $d$ rows of $\mathbf{V}_{21}$ and $\mathbf{V}_{22}$, namely $\bar{\mathbf{V}}_{21}$ and $\bar{\mathbf{V}}_{22}$. Therefore,
\begin{align*}
    \textsf{TF}_{\boldsymbol{\theta}}(\mathbf{H})=\mathbf{w}+\bar{\mathbf{V}}_{21}\hat{\boldsymbol{\Sigma}}\mathbf{P}_{12}\tilde{\mathbf{w}}+\frac{1}{n}\bar{\mathbf{V}}_{22}\tilde{\mathbf{w}}\tilde{\mathbf{w}}^\top\mathbf{P}_{22}\tilde{\mathbf{w}},
\end{align*}
which completes the proof.
\end{proof}
\subsection{Proof of Lemma \ref{lem:local-pl-ic-sarsa}}
\begin{proof}
For any effective parameter $\boldsymbol{\theta}_{\text{eff}}=(\mathbf{P}_{12},\bar{\mathbf{V}}_{21})$, define the auxiliary predictor
\begin{align*}
    f_{\boldsymbol{\theta}_{\text{eff}}}(z):=\bar{\mathbf{V}}_{21}\hat{\boldsymbol{\Sigma}}(z)\mathbf{P}_{12}\tilde{\mathbf{w}}\in\mathbb{R}^d.
\end{align*}
By Theorem~\ref{thm: semi-sarsa tf}, $(\mathbf{P}_{12}^\star,\bar{\mathbf{V}}_{21}^\star)$ is a global minimizer of $\mathcal{L}$ with $\mathcal{L}^\star=0$, and the scaling identity $f_{(c\mathbf{P}_{12}^\star,c^{-1}\bar{\mathbf{V}}_{21}^\star)}=f_{(\mathbf{P}_{12}^\star,\bar{\mathbf{V}}_{21}^\star)}$ for every $c>0$ implies that every point of $\mathfrak{M}_I$ is a global minimizer. Consequently,
\begin{align}\label{eq: excess-loss-as-prediction-error}
    \mathcal{L}(\boldsymbol{\theta}_{\text{eff}})-\mathcal{L}^\star=\tfrac{1}{2}\mathbb{E}_z\bigl\|f_{\boldsymbol{\theta}_{\text{eff}}}(z)-f_{\bar{\boldsymbol{\theta}}_{\text{eff}}(c)}(z)\bigr\|^2,\qquad\forall c\in I.
\end{align}
Throughout the proof, we denote $\Delta=(\mathbf{U},\mathbf{W})\in N_c\mathfrak{M}_I$ with $\|\Delta\|\leq r$, where $\|\Delta\|^2:=\|\mathbf{U}\|_F^2+\|\mathbf{W}\|_F^2$, and set $\boldsymbol{\theta}_{\text{eff}}:=\bar{\boldsymbol{\theta}}_{\text{eff}}(c)+\Delta$. Write $\|\cdot\|_{L^2}:=\sqrt{\mathbb{E}_z\|\cdot\|^2}$.

\paragraph{Step 1: Decomposition of the prediction error.}
By the definition of $f_{\boldsymbol{\theta}_{\text{eff}}}$ together with $\boldsymbol{\theta}_{\text{eff}}=(c\mathbf{P}_{12}^\star+\mathbf{U},\,c^{-1}\bar{\mathbf{V}}_{21}^\star+\mathbf{W})$,
\begin{align*}
    f_{\boldsymbol{\theta}_{\text{eff}}}(z)
    &=(c^{-1}\bar{\mathbf{V}}_{21}^\star+\mathbf{W})\hat{\boldsymbol{\Sigma}}(z)(c\mathbf{P}_{12}^\star+\mathbf{U})\tilde{\mathbf{w}},\\
    f_{\bar{\boldsymbol{\theta}}_{\text{eff}}(c)}(z)
    &=c^{-1}\bar{\mathbf{V}}_{21}^\star\hat{\boldsymbol{\Sigma}}(z)\,c\mathbf{P}_{12}^\star\tilde{\mathbf{w}}=\bar{\mathbf{V}}_{21}^\star\hat{\boldsymbol{\Sigma}}(z)\mathbf{P}_{12}^\star\tilde{\mathbf{w}}.
\end{align*}
Expanding the product and cancelling the matched leading term,
\begin{align*}
    f_{\boldsymbol{\theta}_{\text{eff}}}(z)-f_{\bar{\boldsymbol{\theta}}_{\text{eff}}(c)}(z)
    =c^{-1}\bar{\mathbf{V}}_{21}^\star\hat{\boldsymbol{\Sigma}}(z)\mathbf{U}\tilde{\mathbf{w}}+c\mathbf{W}\hat{\boldsymbol{\Sigma}}(z)\mathbf{P}_{12}^\star\tilde{\mathbf{w}}+\mathbf{W}\hat{\boldsymbol{\Sigma}}(z)\mathbf{U}\tilde{\mathbf{w}}.
\end{align*}
We now identify the first two terms with the trajectory statistics $\mathbf{R}(z)$ and $\mathbf{b}(z)$. Since $\bar{\mathbf{V}}_{21}^\star=[\alpha\mathbf{I}_d\;\mathbf{0}_{d\times d}\;\mathbf{0}_{d\times 1}]$, we have $\bar{\mathbf{V}}_{21}^\star\mathbf{x}_i=\alpha\boldsymbol{\phi}_i$, so
\begin{align*}
    \bar{\mathbf{V}}_{21}^\star\hat{\boldsymbol{\Sigma}}(z)
    =\frac{1}{n}\sum_{i=0}^{n-1}(\bar{\mathbf{V}}_{21}^\star\mathbf{x}_i)\mathbf{x}_i^\top
    =\frac{\alpha}{n}\sum_{i=0}^{n-1}\boldsymbol{\phi}_i\mathbf{x}_i^\top
    =\alpha\,\mathbf{R}(z).
\end{align*}
Similarly, $\mathbf{P}_{12}^\star\tilde{\mathbf{w}}=[-\mathbf{w};\,\mathbf{w};\,1]$, hence $\mathbf{x}_i^\top\mathbf{P}_{12}^\star\tilde{\mathbf{w}}=-\boldsymbol{\phi}_i^\top\mathbf{w}+\gamma(\boldsymbol{\phi}_i^+)^\top\mathbf{w}+r_{i+1}=\delta_i$, so
\begin{align*}
    \hat{\boldsymbol{\Sigma}}(z)\mathbf{P}_{12}^\star\tilde{\mathbf{w}}
    =\frac{1}{n}\sum_{i=0}^{n-1}\mathbf{x}_i(\mathbf{x}_i^\top\mathbf{P}_{12}^\star\tilde{\mathbf{w}})
    =\frac{1}{n}\sum_{i=0}^{n-1}\mathbf{x}_i\delta_i
    =\mathbf{b}(z).
\end{align*}
Substituting these identities,
\begin{align}\label{eq:X Y decomposition}
    f_{\boldsymbol{\theta}_{\text{eff}}}(z)-f_{\bar{\boldsymbol{\theta}}_{\text{eff}}(c)}(z)
    &=c^{-1}\alpha\,\mathbf{R}(z)\mathbf{U}\tilde{\mathbf{w}}+c\,\mathbf{W}\mathbf{b}(z)+\mathbf{W}\hat{\boldsymbol{\Sigma}}(z)\mathbf{U}\tilde{\mathbf{w}}\nonumber\\
    &=:\mathbf{X}(z)+\mathbf{Y}(z),
\end{align}
where
\begin{align*}
    \mathbf{X}(z):=c^{-1}\alpha\,\mathbf{R}(z)\mathbf{U}\tilde{\mathbf{w}}+c\,\mathbf{W}\mathbf{b}(z),\qquad\mathbf{Y}(z):=\mathbf{W}\hat{\boldsymbol{\Sigma}}(z)\mathbf{U}\tilde{\mathbf{w}}.
\end{align*}

\paragraph{Step 2: Lower bound on $\|\mathbf{X}\|_{L^2}$.}
We first lower-bound the two summands of $\mathbf{X}(z)$ separately. Expanding the squared norm and conditioning on $\tilde{\mathbf{w}}$,
\begin{align}\label{eq: lower bound term 1}
    \mathbb{E}_z\|\mathbf{R}(z)\mathbf{U}\tilde{\mathbf{w}}\|^2
    &=\mathbb{E}_z\bigl[\tilde{\mathbf{w}}^\top\mathbf{U}^\top\mathbf{R}(z)^\top\mathbf{R}(z)\,\mathbf{U}\tilde{\mathbf{w}}\bigr]\nonumber\\
    &=\mathbb{E}_z\bigl[\tilde{\mathbf{w}}^\top\mathbf{U}^\top\mathbb{E}[\mathbf{R}(z)^\top\mathbf{R}(z)\mid\tilde{\mathbf{w}}]\,\mathbf{U}\tilde{\mathbf{w}}\bigr]\nonumber\\
    &\geq\kappa_R\,\mathbb{E}_z\|\mathbf{U}\tilde{\mathbf{w}}\|^2\nonumber\\
    &=\kappa_R\,\mathrm{tr}\bigl(\mathbf{U}\,\mathbb{E}_z[\tilde{\mathbf{w}}\tilde{\mathbf{w}}^\top]\,\mathbf{U}^\top\bigr)\nonumber\\
    &\geq\kappa_R\kappa_{\tilde{w}}\|\mathbf{U}\|_F^2,
\end{align}
where the third line uses Assumption~\ref{asp:training-MDP assumptions} (3), and the last inequality uses Assumption~\ref{asp:training-MDP assumptions} (2). Similarly,
\begin{align}\label{eq: lower bound term 2}
    \mathbb{E}_z\|\mathbf{W}\mathbf{b}(z)\|^2
    &=\mathbb{E}_z\bigl[\mathbf{b}(z)^\top\mathbf{W}^\top\mathbf{W}\mathbf{b}(z)\bigr]\nonumber\\
    &=\mathbb{E}_z\bigl[\mathrm{tr}\bigl(\mathbf{W}\mathbf{b}(z)\mathbf{b}(z)^\top\mathbf{W}^\top\bigr)\bigr]\nonumber\\
    &=\mathrm{tr}\bigl(\mathbf{W}\,\mathbb{E}_z[\mathbf{b}(z)\mathbf{b}(z)^\top]\,\mathbf{W}^\top\bigr)\nonumber\\
    &\geq\kappa_q\,\mathrm{tr}(\mathbf{W}\mathbf{W}^\top)\nonumber\\
    &=\kappa_q\|\mathbf{W}\|_F^2,
\end{align}
where the fourth line uses Assumption~\ref{asp:training-MDP assumptions} (4).
Set $A:=\mathbb{E}_z\|c^{-1}\alpha\mathbf{R}(z)\mathbf{U}\tilde{\mathbf{w}}\|^2$ and $B:=\mathbb{E}_z\|c\mathbf{W}\mathbf{b}(z)\|^2$. Assumption~\ref{asp:training-MDP assumptions} (5) bounds the cross term as
\begin{align*}
    \bigl|2\mathbb{E}_z\langle c^{-1}\alpha\mathbf{R}(z)\mathbf{U}\tilde{\mathbf{w}},\,c\mathbf{W}\mathbf{b}(z)\rangle\bigr|\leq 2\rho\sqrt{AB},
\end{align*}
so expanding $\|\mathbf{X}(z)\|^2$ and applying the cross-term bound,
\begin{align}\label{eq: lower bound X intermediate}
    \mathbb{E}_z\|\mathbf{X}(z)\|^2
    &=A+B+2\mathbb{E}_z\bigl\langle c^{-1}\alpha\mathbf{R}(z)\mathbf{U}\tilde{\mathbf{w}},\,c\mathbf{W}\mathbf{b}(z)\bigr\rangle\nonumber\\
    &\geq A+B-2\rho\sqrt{AB}\nonumber\\
    &=(1-\rho)(A+B)+\rho(\sqrt{A}-\sqrt{B})^2\nonumber\\
    &\geq(1-\rho)(A+B),
\end{align}
The two summands $A,B$ are bounded below using the bounds \eqref{eq: lower bound term 1} \eqref{eq: lower bound term 2} and $|c|\in[c_-,c_+]$:
\begin{align*}
    A&=c^{-2}\alpha^2\,\mathbb{E}_z\|\mathbf{R}(z)\mathbf{U}\tilde{\mathbf{w}}\|^2\geq c^{-2}\alpha^2\kappa_R\kappa_{\tilde{w}}\|\mathbf{U}\|_F^2\geq c_+^{-2}\alpha^2\kappa_R\kappa_{\tilde{w}}\|\mathbf{U}\|_F^2,\\
    B&=c^2\,\mathbb{E}_z\|\mathbf{W}\mathbf{b}(z)\|^2\geq c^2\kappa_q\|\mathbf{W}\|_F^2\geq c_-^2\kappa_q\|\mathbf{W}\|_F^2.
\end{align*}
Plugging into \eqref{eq: lower bound X intermediate}, with $\|\Delta\|^2=\|\mathbf{U}\|_F^2+\|\mathbf{W}\|_F^2$,
\begin{align*}
    \mathbb{E}_z\|\mathbf{X}(z)\|^2
    &\geq(1-\rho)\bigl(c_+^{-2}\alpha^2\kappa_R\kappa_{\tilde{w}}\|\mathbf{U}\|_F^2+c_-^2\kappa_q\|\mathbf{W}\|_F^2\bigr)\\
    &\geq(1-\rho)\min\bigl\{c_+^{-2}\alpha^2\kappa_R\kappa_{\tilde{w}},\,c_-^2\kappa_q\bigr\}\bigl(\|\mathbf{U}\|_F^2+\|\mathbf{W}\|_F^2\bigr)\\
    &=m_0\|\Delta\|^2.
\end{align*}
Hence
\begin{align}\label{eq: lower bound X}
    \|\mathbf{X}\|_{L^2}\geq\sqrt{m_0}\,\|\Delta\|.
\end{align}

\paragraph{Step 3: Upper bounds on $\|\mathbf{X}\|_{L^2}$ and $\|\mathbf{Y}\|_{L^2}$.}
By Assumption~\ref{asp:training-MDP assumptions} (1), 
\begin{align*}
    \|\mathbf{x}_i\|^2=\|\boldsymbol{\phi}_i\|^2+\gamma^2\|\boldsymbol{\phi}_i^+\|^2+r_{i+1}^2\leq 2B_\phi^2+B_r^2=B_\Sigma,
\end{align*}
hence, applying the triangle inequality to $\hat{\boldsymbol{\Sigma}}(z)=\tfrac{1}{n}\sum_{i=0}^{n-1}\mathbf{x}_i\mathbf{x}_i^\top$,
\begin{align*}
    \|\hat{\boldsymbol{\Sigma}}(z)\|_{\mathrm{op}}\leq\tfrac{1}{n}\sum_{i=0}^{n-1}\|\mathbf{x}_i\mathbf{x}_i^\top\|_{\mathrm{op}}=\tfrac{1}{n}\sum_{i=0}^{n-1}\|\mathbf{x}_i\|^2\leq B_\Sigma.
\end{align*}
Writing $\boldsymbol{\phi}_i=\mathbf{S}\mathbf{x}_i$ with $\mathbf{S}=[\mathbf{I}_d,\,\mathbf{0}_{d\times d},\,\mathbf{0}_{d\times 1}]\in\mathbb{R}^{d\times(2d+1)}$, we have 
\begin{align*}
    \mathbf{R}(z)=\tfrac{1}{n}\sum_i\mathbf{S}\mathbf{x}_i\mathbf{x}_i^\top=\mathbf{S}\hat{\boldsymbol{\Sigma}}(z),
\end{align*}
and $\mathbf{S}\mathbf{S}^\top=\mathbf{I}_d$ gives $\|\mathbf{S}\|_{\mathrm{op}}=1$. Submultiplicativity then yields
\begin{align*}
    \|\mathbf{R}(z)\|_{\mathrm{op}}&=\|\mathbf{S}\hat{\boldsymbol{\Sigma}}(z)\|_{\mathrm{op}}\\
    &\leq\|\mathbf{S}\|_{\mathrm{op}}\,\|\hat{\boldsymbol{\Sigma}}(z)\|_{\mathrm{op}}\\
    &\leq B_\Sigma.
\end{align*} 
Also $\|\tilde{\mathbf{w}}\|\leq B_{\tilde{w}}$. Since $\mathbf{P}_{12}^{\star\top}\mathbf{P}_{12}^\star=\mathrm{diag}(1,\,2\mathbf{I}_d)$, $\|\mathbf{P}_{12}^\star\|_{\mathrm{op}}=\sqrt{2}$, hence 
\begin{align*}
    \|\mathbf{b}(z)\|&\leq\|\mathbf{P}_{12}^\star\|_{\mathrm{op}}\|\hat{\boldsymbol{\Sigma}}(z)\|_{\mathrm{op}}\|\tilde{\mathbf{w}}\|\\
    &\leq\sqrt{2}\,B_\Sigma B_{\tilde{w}}.
\end{align*} 
By the triangle inequality and submultiplicativity, 
\begin{align*}
    \|\mathbf{X}\|_{L^2}
    &\leq\bigl\|c^{-1}\alpha\mathbf{R}(z)\mathbf{U}\tilde{\mathbf{w}}\bigr\|_{L^2}+\bigl\|c\,\mathbf{W}\mathbf{b}(z)\bigr\|_{L^2}\\
    &\leq|c|^{-1}\alpha\,\|\mathbf{R}(z)\|_{\mathrm{op}}\|\mathbf{U}\|_{\mathrm{op}}\|\tilde{\mathbf{w}}\|+|c|\,\|\mathbf{W}\|_{\mathrm{op}}\|\mathbf{b}(z)\|\\
    &\leq c_-^{-1}\alpha B_\Sigma B_{\tilde{w}}\|\mathbf{U}\|_F+\sqrt{2}\,c_+B_\Sigma B_{\tilde{w}}\|\mathbf{W}\|_F.
\end{align*}
Applying Cauchy--Schwarz to the right-hand side,
\begin{align}\label{eq: upper bound X}
    \|\mathbf{X}\|_{L^2}&\leq B_\Sigma B_{\tilde{w}}\sqrt{c_-^{-2}\alpha^2+2c_+^2}\,\sqrt{\|\mathbf{U}\|_F^2+\|\mathbf{W}\|_F^2}\\
    &=\sqrt{M_0}\,\|\Delta\|.
\end{align}
For the quadratic term, 
\begin{align*}
    \|\mathbf{Y}(z)\|&=\|\mathbf{W}\hat{\boldsymbol{\Sigma}}(z)\mathbf{U}\tilde{\mathbf{w}}\|\\
    &\leq\|\mathbf{W}\|_{\mathrm{op}}\|\hat{\boldsymbol{\Sigma}}(z)\|_{\mathrm{op}}\|\mathbf{U}\|_{\mathrm{op}}\|\tilde{\mathbf{w}}\|\\
    &\leq B_\Sigma B_{\tilde{w}}\|\mathbf{U}\|_F\|\mathbf{W}\|_F\\
    &\leq B_\Sigma B_{\tilde{w}}\bigl(\tfrac{1}{2}\|\mathbf{U}\|_F^2+\tfrac{1}{2}\|\mathbf{W}\|_F^2\bigr)\\
    &=C_Q\|\Delta\|^2.
\end{align*}
Thus
\begin{align}\label{eq: upper bound Y}
    \|\mathbf{Y}\|_{L^2}\leq C_Q\|\Delta\|^2.
\end{align}

\paragraph{Step 4: Excess-loss upper bound.}
By \eqref{eq: excess-loss-as-prediction-error}, \eqref{eq:X Y decomposition}, and \eqref{eq: upper bound X}--\eqref{eq: upper bound Y},
\begin{align}\label{eq: loss-upper-bound}
    \mathcal{L}(\boldsymbol{\theta}_{\text{eff}})-\mathcal{L}^\star
    &=\tfrac{1}{2}\|\mathbf{X}+\mathbf{Y}\|_{L^2}^2\nonumber\\
    &\leq\tfrac{1}{2}\bigl(\sqrt{M_0}\,\|\Delta\|+C_Q\|\Delta\|^2\bigr)^2\nonumber\\
    &\leq\tfrac{1}{2}\bigl(\sqrt{M_0}+C_Q r\bigr)^2\|\Delta\|^2,
\end{align}
where the last line uses $\|\Delta\|\leq r$.

\paragraph{Step 5: Directional derivative lower bound.}
Define 
\begin{align*}
    g(t):=\mathcal{L}(\bar{\boldsymbol{\theta}}_{\text{eff}}(c)+t\Delta). 
\end{align*}
By the chain rule, 
\begin{align*}
    g'(1)=\langle\nabla\mathcal{L}(\boldsymbol{\theta}_{\text{eff}}),\Delta\rangle, 
\end{align*}
so Cauchy--Schwarz gives
\begin{align}\label{eq: directional-derivative-lower-bound}
    \|\nabla\mathcal{L}(\boldsymbol{\theta}_{\text{eff}})\|\,\|\Delta\|\geq g'(1).
\end{align}
Thus, to establish the PL inequality, it suffices to lower-bound $g'(1)$ with respect to $\|\Delta\|^2$. We now compute $g'(1)$.
Replacing $\Delta=(\mathbf{U},\mathbf{W})$ by $t\Delta=(t\mathbf{U},\,t\mathbf{W})$ in the decomposition~\eqref{eq:X Y decomposition},
\begin{align*}
    f_{\bar{\boldsymbol{\theta}}_{\text{eff}}(c)+t\Delta}(z)-f_{\bar{\boldsymbol{\theta}}_{\text{eff}}(c)}(z)
    &=c^{-1}\alpha\,\mathbf{R}(z)(t\mathbf{U})\tilde{\mathbf{w}}+c\,(t\mathbf{W})\mathbf{b}(z)+(t\mathbf{W})\hat{\boldsymbol{\Sigma}}(z)(t\mathbf{U})\tilde{\mathbf{w}}\\
    &=t\bigl[c^{-1}\alpha\,\mathbf{R}(z)\mathbf{U}\tilde{\mathbf{w}}+c\,\mathbf{W}\mathbf{b}(z)\bigr]+t^2\,\mathbf{W}\hat{\boldsymbol{\Sigma}}(z)\mathbf{U}\tilde{\mathbf{w}}\\
    &=t\mathbf{X}(z)+t^2\mathbf{Y}(z),
\end{align*}
hence
\begin{align*}
    g(t)=\tfrac{1}{2}\mathbb{E}_z\|t\mathbf{X}(z)+t^2\mathbf{Y}(z)\|^2.
\end{align*}
Differentiating and evaluating at $t=1$,
\begin{align*}
    g'(1)&=\|\mathbf{X}\|_{L^2}^2+3\mathbb{E}_z\langle\mathbf{X}(z),\mathbf{Y}(z)\rangle+2\|\mathbf{Y}\|_{L^2}^2\\
    &\geq\|\mathbf{X}\|_{L^2}^2-3\|\mathbf{X}\|_{L^2}\|\mathbf{Y}\|_{L^2}.
\end{align*}
Set $u:=\|\mathbf{X}\|_{L^2}$ and $v:=\|\mathbf{Y}\|_{L^2}$. The hypothesis $r<\sqrt{m_0}/(3C_Q)$ together with \eqref{eq: lower bound X} and \eqref{eq: upper bound Y} gives
\begin{align*}
    u\geq\sqrt{m_0}\,\|\Delta\|>3C_Q\|\Delta\|^2\geq 3v,
\end{align*}
so $u\mapsto u^2-3uv$ is increasing at $u$. Substituting the lower bound on $\mathbf{u}$ \eqref{eq: lower bound X} and upper bound on $\mathbf{v}$ \eqref{eq: upper bound Y},
\begin{align}\label{eq: lower bound g(1)}
    g'(1)&\geq u^2-3uv\nonumber\\
    &\geq m_0\|\Delta\|^2-3C_Q\sqrt{m_0}\,\|\Delta\|^3\nonumber\\
    &\geq m_0\|\Delta\|^2-3C_Q\sqrt{m_0}r\,\|\Delta\|^2\nonumber\\
    &=\bigl(m_0-3C_Q\sqrt{m_0}\,r\bigr)\|\Delta\|^2.
\end{align}

\paragraph{Step 6: PL inequality.}
Combining~\eqref{eq: directional-derivative-lower-bound} with~\eqref{eq: lower bound g(1)},
\begin{align*}
    \|\nabla\mathcal{L}(\boldsymbol{\theta}_{\text{eff}})\|\,\|\Delta\|\geq g'(1)\geq\bigl(m_0-3C_Q\sqrt{m_0}\,r\bigr)\|\Delta\|^2.
\end{align*}
Squaring (and using \eqref{eq: loss-upper-bound} to convert $\|\Delta\|^2$ into the excess loss),
\begin{align*}
    \tfrac{1}{2}\|\nabla\mathcal{L}(\boldsymbol{\theta}_{\text{eff}})\|^2
    \geq\frac{\bigl(m_0-3C_Q\sqrt{m_0}\,r\bigr)^2}{\bigl(\sqrt{M_0}+C_Q r\bigr)^2}\bigl(\mathcal{L}(\boldsymbol{\theta}_{\text{eff}})-\mathcal{L}^\star\bigr)=\mu_r\bigl(\mathcal{L}(\boldsymbol{\theta}_{\text{eff}})-\mathcal{L}^\star\bigr),
\end{align*}
which is the claimed PL inequality on $\mathcal{T}_r(I)$.
\end{proof}
\section{Proof of Theorem \ref{thm: semi-sarsa tf}}\label{proof of thm: semi-sarsa tf}
\begin{proof}
We construct $\boldsymbol{\theta}^\star=(\mathbf{P}^\star,\mathbf{V}^\star)$ explicitly in four steps: (i) reduce the readout to four blocks of $\boldsymbol{\theta}$; (ii) cancel the cubic-in-$\mathbf{w}$ contribution; (iii) match the affine contribution to the SARSA target; (iv) assemble $\boldsymbol{\theta}^\star$ and verify its scaling invariance.

\paragraph{Step 1: Reduction to $\bar{\mathbf{V}}_{21},\bar{\mathbf{V}}_{22},\mathbf{P}_{12},\mathbf{P}_{22}$.}
Partition $\mathbf{V},\mathbf{P}$ as in \eqref{eq:block partition}, conformal with $D=(2d+1)+(d+1)$, and write $\tilde{\mathbf{w}}=[1;\mathbf{w}]\in\mathbb{R}^{d+1}$. Let $\bar{\mathbf{V}}_{21},\bar{\mathbf{V}}_{22}$ denote the last $d$ rows of $\mathbf{V}_{21},\mathbf{V}_{22}$, respectively. By Lemma~\ref{lem: transformer output}, the transformer update increment simplifies to
\begin{equation}\label{eq: dw_tf}
    \Delta\mathbf{w}_{\text{TF}}(\boldsymbol{\theta})
    \;:=\;\textsf{TF}_{\boldsymbol{\theta}}(\mathbf{H})-\mathbf{w}
    \;=\;\bar{\mathbf{V}}_{21}\hat{\boldsymbol{\Sigma}}\mathbf{P}_{12}\tilde{\mathbf{w}}+\tfrac{1}{n}\bar{\mathbf{V}}_{22}\tilde{\mathbf{w}}\tilde{\mathbf{w}}^\top\mathbf{P}_{22}\tilde{\mathbf{w}},
\end{equation}
where $\hat{\boldsymbol{\Sigma}}\in\mathbb{R}^{(2d+1)\times(2d+1)}$ is the empirical second-moment matrix of the trajectory features, with the block decomposition
\begin{align*}
    \hat{\boldsymbol{\Sigma}}&=\begin{bmatrix}
        \frac{\sum_{i=0}^{n-1}\boldsymbol{\phi}(s_i,a_i)\boldsymbol{\phi}(s_i,a_i)^\top}{n} & \frac{\sum_{i=0}^{n-1}\gamma\boldsymbol{\phi}(s_i,a_i)\boldsymbol{\phi}(s_{i+1},a_{i+1})^\top}{n} & \frac{\sum_{i=0}^{n-1} r_{i+1}\boldsymbol{\phi}(s_i,a_i)}{n}\\
        \frac{\sum_{i=0}^{n-1}\gamma\boldsymbol{\phi}(s_{i+1},a_{i+1})\boldsymbol{\phi}(s_i,a_i)^\top}{n} & \frac{\sum_{i=0}^{n-1}\gamma^2\boldsymbol{\phi}(s_{i+1},a_{i+1})\boldsymbol{\phi}(s_{i+1},a_{i+1})^\top}{n} & \frac{\sum_{i=0}^{n-1}\gamma r_{i+1}\boldsymbol{\phi}(s_{i+1},a_{i+1})}{n}\\
        \frac{\sum_{i=0}^{n-1} r_{i+1}\boldsymbol{\phi}(s_i,a_i)^\top}{n} & \frac{\sum_{i=0}^{n-1}\gamma r_{i+1}\boldsymbol{\phi}(s_{i+1},a_{i+1})^\top}{n} & \frac{\sum_{i=0}^{n-1} r_{i+1}^2}{n}
    \end{bmatrix}\\
    &:=\begin{bmatrix}
        \hat{\boldsymbol{\Sigma}}_{\phi\phi} & \gamma\hat{\boldsymbol{\Sigma}}_{\phi\phi_+} & \hat{\boldsymbol{\Sigma}}_{\phi r}\\
        \gamma\hat{\boldsymbol{\Sigma}}_{\phi_+\phi} & \gamma^2\hat{\boldsymbol{\Sigma}}_{\phi_+\phi_+} & \gamma\hat{\boldsymbol{\Sigma}}_{\phi_+ r}\\
        \hat{\boldsymbol{\Sigma}}_{r\phi} & \gamma\hat{\boldsymbol{\Sigma}}_{r\phi_+} & \hat{\boldsymbol{\Sigma}}_{rr}
    \end{bmatrix}.
\end{align*}
In particular, the blocks $\mathbf{V}_{11},\mathbf{V}_{12},\mathbf{P}_{11},\mathbf{P}_{21}$ and the first rows of $\mathbf{V}_{21},\mathbf{V}_{22}$ do not enter the readout, so they are unconstrained.

\paragraph{Step 2: Cancel the cubic term.}
The batch SARSA update increment \eqref{eq:semi-sarsa linear batch} can be written compactly as
\begin{equation}\label{eq: dw_sarsa}
    \Delta\mathbf{w}_{\text{SARSA}}=\alpha\bigl[\hat{\boldsymbol{\Sigma}}_{\phi r}+\gamma\hat{\boldsymbol{\Sigma}}_{\phi\phi_+}\mathbf{w}-\hat{\boldsymbol{\Sigma}}_{\phi\phi}\mathbf{w}\bigr],
\end{equation}
which is affine in $\mathbf{w}$. The first term in \eqref{eq: dw_tf} is also affine via $\tilde{\mathbf{w}}=[1;\mathbf{w}]$, while the second is cubic in $\mathbf{w}$. Enforcing $\Delta\mathbf{w}_{\text{TF}}\equiv\Delta\mathbf{w}_{\text{SARSA}}$ for every $\mathbf{w}$ therefore requires the cubic term to vanish, for which we set
\begin{equation*}
    \bar{\mathbf{V}}_{22}^\star=\mathbf{0},\quad \mathbf{P}_{22}^\star=\mathbf{0}.
\end{equation*}

\paragraph{Step 3: Match the affine term.}
It remains to choose $\bar{\mathbf{V}}_{21}^\star,\mathbf{P}_{12}^\star$ so that
\begin{equation*}
    \bar{\mathbf{V}}_{21}^\star\hat{\boldsymbol{\Sigma}}\mathbf{P}_{12}^\star\tilde{\mathbf{w}}=\alpha\bigl[\hat{\boldsymbol{\Sigma}}_{\phi r}+\gamma\hat{\boldsymbol{\Sigma}}_{\phi\phi_+}\mathbf{w}-\hat{\boldsymbol{\Sigma}}_{\phi\phi}\mathbf{w}\bigr].
\end{equation*}
Take $\bar{\mathbf{V}}_{21}^\star$ to extract the first $d$ rows of $\hat{\boldsymbol{\Sigma}}$ with prefactor $\alpha$:
\begin{equation*}
    \bar{\mathbf{V}}_{21}^\star=\bigl[\alpha\mathbf{I}_d,\ \mathbf{0}_{d\times d},\ \mathbf{0}_{d\times 1}\bigr]
    \quad\Rightarrow\quad
    \bar{\mathbf{V}}_{21}^\star\hat{\boldsymbol{\Sigma}}=\alpha\bigl[\hat{\boldsymbol{\Sigma}}_{\phi\phi},\ \gamma\hat{\boldsymbol{\Sigma}}_{\phi\phi_+},\ \hat{\boldsymbol{\Sigma}}_{\phi r}\bigr].
\end{equation*}
The matching condition then reduces to $\mathbf{P}_{12}^\star\tilde{\mathbf{w}}=[-\mathbf{w};\,\mathbf{w};\,1]$, which holds for every $\mathbf{w}\in\mathbb{R}^d$ when
\begin{equation*}
    \mathbf{P}_{12}^\star=\begin{bmatrix}\mathbf{0}_{d\times 1} & -\mathbf{I}_d\\ \mathbf{0}_{d\times 1} & \mathbf{I}_d\\ 1 & \mathbf{0}_{1\times d}\end{bmatrix}.
\end{equation*}

\paragraph{Step 4: Assembly and scaling.}
Leave the unconstrained blocks identified in Step~1 as arbitrary entries (denoted $*$). This yields the structure $(\mathbf{P}^\star,\mathbf{V}^\star)$ in \eqref{eq:QKV}, and by construction
\begin{equation*}
    \textsf{TF}_{\boldsymbol{\theta}^\star}(\mathbf{H})=\mathbf{w}+\Delta\mathbf{w}_{\text{SARSA}}.
\end{equation*}
Finally, both terms in \eqref{eq: dw_tf} are bilinear in $(\mathbf{V},\mathbf{P})$, so the rescaled pair $(c\mathbf{P}^\star,c^{-1}\mathbf{V}^\star)$ produces the same readout for every $c\neq 0$, completing the construction.
\end{proof}
\section{Proof of Proposition \ref{prop:invariant-set}}\label{proof of prop:invariant-set}
\begin{proof}
Let $\boldsymbol{\delta}(\boldsymbol{\theta},z):=\textsf{TF}_{\boldsymbol{\theta}}(\mathbf{H}(z))-\mathbf{w}_{\text{SARSA}}(z)$ denote the per-sample residual, so that 
\begin{align*}
    \mathcal{L}(\boldsymbol{\theta})=\tfrac{1}{2}\mathbb{E}_z\|\boldsymbol{\delta}(\boldsymbol{\theta},z)\|^2.
\end{align*} 
By Lemma~\ref{lem: transformer output},
\begin{align}\label{eq: delta decomp}
    \boldsymbol{\delta}(\boldsymbol{\theta},z)=\bar{\mathbf{V}}_{21}\hat{\boldsymbol{\Sigma}}(z)\mathbf{P}_{12}\tilde{\mathbf{w}}(z)+\tfrac{1}{n}\bar{\mathbf{V}}_{22}\tilde{\mathbf{w}}(z)\tilde{\mathbf{w}}(z)^{\!\top}\mathbf{P}_{22}\tilde{\mathbf{w}}(z)+\bigl(\mathbf{w}(z)-\mathbf{w}_{\text{SARSA}}(z)\bigr),
\end{align}
so $\mathcal{L}$ depends on $(\mathbf{V},\mathbf{P})$ only through the four sub-blocks $\bar{\mathbf{V}}_{21},\bar{\mathbf{V}}_{22},\mathbf{P}_{12},\mathbf{P}_{22}$.

\paragraph{Part (i).}
Since $\mathbf{V}_{11},\mathbf{V}_{12},\mathbf{P}_{11},\mathbf{P}_{21}$ do not appear in~\eqref{eq: delta decomp},
\begin{align*}
    \nabla_{\mathbf{V}_{11}}\mathcal{L}=\nabla_{\mathbf{V}_{12}}\mathcal{L}=\nabla_{\mathbf{P}_{11}}\mathcal{L}=\nabla_{\mathbf{P}_{21}}\mathcal{L}=\mathbf{0},
\end{align*}
so each is stationary under the gradient flow~\eqref{eq: gradient flow}. For $\mathbf{V}_{21}$, the loss factors through $\bar{\mathbf{V}}_{21}=\mathbf{S}\mathbf{V}_{21}$ with $\mathbf{S}=[\mathbf{0}_{d\times 1},\mathbf{I}_d]$, and the chain rule yields \begin{align*}
    \nabla_{\mathbf{V}_{21}}\mathcal{L}=\mathbf{S}^{\!\top}\nabla_{\bar{\mathbf{V}}_{21}}\mathcal{L}.
\end{align*} 
Since $\mathbf{S}^{\!\top}=[\mathbf{0}_{1\times d};\mathbf{I}_d]$ has zero first row, so does $\nabla_{\mathbf{V}_{21}}\mathcal{L}$, and the first row of $\mathbf{V}_{21}$ remains unchanged. The same argument applies to $\mathbf{V}_{22}$.

\paragraph{Part (ii).}
A direct computation from~\eqref{eq: delta decomp} gives
\begin{align}
    \nabla_{\bar{\mathbf{V}}_{22}}\mathcal{L}(\boldsymbol{\theta})&=\tfrac{1}{n}\,\mathbb{E}_z\!\left[\boldsymbol{\delta}(\boldsymbol{\theta},z)\bigl(\tilde{\mathbf{w}}\tilde{\mathbf{w}}^{\!\top}\mathbf{P}_{22}\tilde{\mathbf{w}}\bigr)^{\!\top}\right],\label{eq: grad V22bar}\\
    \nabla_{\mathbf{P}_{22}}\mathcal{L}(\boldsymbol{\theta})&=\tfrac{1}{n}\,\mathbb{E}_z\!\left[\bigl(\boldsymbol{\delta}(\boldsymbol{\theta},z)^{\!\top}\bar{\mathbf{V}}_{22}\tilde{\mathbf{w}}\bigr)\,\tilde{\mathbf{w}}\tilde{\mathbf{w}}^{\!\top}\right].\label{eq: grad P22}
\end{align}
Both~\eqref{eq: grad V22bar} and~\eqref{eq: grad P22} vanish whenever $\mathbf{P}_{22}=\mathbf{0}$ and $\bar{\mathbf{V}}_{22}=\mathbf{0}$. Hence the gradient flow~\eqref{eq: gradient flow} admits a trajectory in which $\mathbf{P}_{22}^{(t)}\equiv\mathbf{0}$ and $\bar{\mathbf{V}}_{22}^{(t)}\equiv\mathbf{0}$, with the remaining blocks evolving freely. 
\end{proof}
\section{Proof of Theorem \ref{thm:local-convergence-manifold}}\label{proof of thm:local-convergence-manifold}
\begin{proof}
We recall the local PL inequality \eqref{eq: local pl} from Lemma~\ref{lem:local-pl-ic-sarsa}: for every $\boldsymbol{\theta}_{\text{eff}}\in\mathcal{T}_r(I)$,
\begin{align*}
    \frac{1}{2}\|\nabla\mathcal{L}(\boldsymbol{\theta}_{\text{eff}})\|^2
    \geq\mu_r\bigl(\mathcal{L}(\boldsymbol{\theta}_{\text{eff}})-\mathcal{L}^\star\bigr).
\end{align*}
We keep the $\mathcal{L}^\star$ term explicit for clarity, but it can be dropped since $\mathcal{L}^\star=0$.

\paragraph{Step 1: Bounding the parameter distance to the optimum manifold.}
Fix any $\boldsymbol{\theta}_{\text{eff}}\in\mathcal{T}_r(I)$ and write its normal decomposition as
\begin{align*}
    \boldsymbol{\theta}_{\text{eff}}=\bar{\boldsymbol{\theta}}_{\text{eff}}(c)+\Delta,
    \quad
    \Delta=(\mathbf{U},\mathbf{W})\in N_c\mathfrak{M}_I,
    \quad
    \|\Delta\|<r.
\end{align*}
As in the proof of Lemma~\ref{lem:local-pl-ic-sarsa}, the residual decomposes as
\begin{align*}
    f_{\boldsymbol{\theta}_{\text{eff}}}(z)-f_{\bar{\boldsymbol{\theta}}_{\text{eff}}(c)}(z)
    =\mathbf{X}(z)+\mathbf{Y}(z),
\end{align*}
where, by~\eqref{eq:X Y decomposition},
\begin{align*}
    \mathbf{X}(z)
    &:=c^{-1}\alpha\,\mathbf{R}(z)\mathbf{U}\tilde{\mathbf{w}}+c\,\mathbf{W}\mathbf{b}(z),\\
    \mathbf{Y}(z)
    &:=\mathbf{W}\hat{\boldsymbol{\Sigma}}(z)\mathbf{U}\tilde{\mathbf{w}}.
\end{align*}
By~\eqref{eq: lower bound X},
\begin{align*}
    \|\mathbf{X}\|_{L^2}\geq\sqrt{m_0}\,\|\Delta\|,
\end{align*}
and by~\eqref{eq: upper bound Y},
\begin{align*}
    \|\mathbf{Y}\|_{L^2}\leq C_Q\,\|\Delta\|^2.
\end{align*}
Therefore, applying the reverse triangle inequality,
\begin{align*}
    \sqrt{2\bigl(\mathcal{L}(\boldsymbol{\theta}_{\text{eff}})-\mathcal{L}^\star\bigr)}
    &=\|\mathbf{X}+\mathbf{Y}\|_{L^2}\\
    &\geq\|\mathbf{X}\|_{L^2}-\|\mathbf{Y}\|_{L^2}\\
    &\geq\bigl(\sqrt{m_0}-C_Q\|\Delta\|\bigr)\|\Delta\|\\
    &\geq\bigl(\sqrt{m_0}-C_Q r\bigr)\|\Delta\|,
\end{align*}
where the last inequality uses $\|\Delta\|<r$. Squaring both sides,
\begin{align}\label{eq: excess loss lower bound}
    \mathcal{L}(\boldsymbol{\theta}_{\text{eff}})-\mathcal{L}^\star
    \geq\frac{1}{2}\bigl(\sqrt{m_0}-C_Q r\bigr)^2\|\Delta\|^2
    =\lambda_r\,\|\Delta\|^2,
\end{align}
where we set
\begin{align*}
    \lambda_r:=\tfrac{1}{2}\bigl(\sqrt{m_0}-C_Q r\bigr)^2.
\end{align*}
By definition of the tube~\eqref{def:tube}, every point $\boldsymbol{\theta}_{\text{eff}}\in\mathcal{T}_r(I)$ admits a unique normal decomposition
\begin{align*}
    \boldsymbol{\theta}_{\text{eff}}=\bar{\boldsymbol{\theta}}_{\text{eff}}(c)+\Delta,
    \qquad\Delta\in N_c\mathfrak{M}_I,
\end{align*}
so the normal displacement equals the distance to the manifold patch:
\begin{align*}
    \|\Delta\|=\mathrm{dist}(\boldsymbol{\theta}_{\text{eff}},\mathfrak{M}_I).
\end{align*}
Combining this identity with~\eqref{eq: excess loss lower bound}, for all $\boldsymbol{\theta}_{\text{eff}}\in\mathcal{T}_r(I)$,
\begin{align*}
    \mathcal{L}(\boldsymbol{\theta}_{\text{eff}})-\mathcal{L}^\star
    \geq\lambda_r\,\mathrm{dist}(\boldsymbol{\theta}_{\text{eff}},\mathfrak{M}_I)^2,
\end{align*}
and hence
\begin{align}\label{eq: distance bound}
    \mathrm{dist}(\boldsymbol{\theta}_{\text{eff}},\mathfrak{M}_I)
    \leq\frac{1}{\sqrt{\lambda_r}}\sqrt{\mathcal{L}(\boldsymbol{\theta}_{\text{eff}})-\mathcal{L}^\star}.
\end{align}

\paragraph{Step 2: Local upper bound on the gradient norm.}
Define the prediction error
\begin{align*}
    \mathbf{e}_{\boldsymbol{\theta}_{\text{eff}}}(z)
    &:=f_{\mathbf{P}_{12},\bar{\mathbf{V}}_{21}}(z)
    -f_{\mathbf{P}_{12}^\star,\bar{\mathbf{V}}_{21}^\star}(z)\\
    &=\bar{\mathbf{V}}_{21}\hat{\boldsymbol{\Sigma}}(z)\mathbf{P}_{12}\tilde{\mathbf{w}}
    -\bar{\mathbf{V}}_{21}^\star\hat{\boldsymbol{\Sigma}}(z)\mathbf{P}_{12}^\star\tilde{\mathbf{w}}.
\end{align*}
Then
\begin{align*}
    \mathcal{L}(\boldsymbol{\theta}_{\text{eff}})-\mathcal{L}^\star
    =\frac{1}{2}\,\mathbb{E}_z\|\mathbf{e}_{\boldsymbol{\theta}_{\text{eff}}}(z)\|^2,
\end{align*}
so that
\begin{align*}
    \mathbb{E}_z\|\mathbf{e}_{\boldsymbol{\theta}_{\text{eff}}}(z)\|^2
    =2\bigl(\mathcal{L}(\boldsymbol{\theta}_{\text{eff}})-\mathcal{L}^\star\bigr).
\end{align*}
Differentiating the bilinear predictor with respect to $\bar{\mathbf{V}}_{21}$ and $\mathbf{P}_{12}$ gives
\begin{align*}
    \nabla_{\bar{\mathbf{V}}_{21}}\mathcal{L}(\boldsymbol{\theta}_{\text{eff}})
    &=\mathbb{E}_z\!\left[\mathbf{e}_{\boldsymbol{\theta}_{\text{eff}}}(z)\,
    \bigl(\hat{\boldsymbol{\Sigma}}(z)\mathbf{P}_{12}\tilde{\mathbf{w}}\bigr)^\top\right],\\
    \nabla_{\mathbf{P}_{12}}\mathcal{L}(\boldsymbol{\theta}_{\text{eff}})
    &=\mathbb{E}_z\!\left[\hat{\boldsymbol{\Sigma}}(z)^\top\,\bar{\mathbf{V}}_{21}^\top\,
    \mathbf{e}_{\boldsymbol{\theta}_{\text{eff}}}(z)\,\tilde{\mathbf{w}}^\top\right].
\end{align*}
For every $\boldsymbol{\theta}_{\text{eff}}\in\mathcal{T}_r(I)$, the triangle inequality together with $\|\Delta\|<r$ yields
\begin{align*}
    \|\mathbf{P}_{12}\|_F
    &\leq c_+\|\mathbf{P}_{12}^\star\|_F+r\\
    &=c_+\sqrt{2d+1}+r\\
    &=:P_{\max,r},
\end{align*}
and
\begin{align*}
    \|\bar{\mathbf{V}}_{21}\|_F
    &\leq c_-^{-1}\|\bar{\mathbf{V}}_{21}^\star\|_F+r\\
    &=c_-^{-1}\sqrt{d}+r\\
    &=:V_{\max,r}.
\end{align*}
Moreover, by Assumption~\ref{asp:training-MDP assumptions}~(1),
\begin{align*}
    \|\hat{\boldsymbol{\Sigma}}(z)\|_{\mathrm{op}}\leq B_\Sigma,
\end{align*}
and
\begin{align*}
    \|\tilde{\mathbf{w}}\|\leq B_{\tilde{w}}.
\end{align*}
Therefore, by Cauchy--Schwarz and submultiplicativity,
\begin{align*}
    \|\nabla_{\bar{\mathbf{V}}_{21}}\mathcal{L}(\boldsymbol{\theta}_{\text{eff}})\|_F
    &\leq B_\Sigma B_{\tilde{w}}\,P_{\max,r}\,
    \bigl(\mathbb{E}_z\|\mathbf{e}_{\boldsymbol{\theta}_{\text{eff}}}(z)\|^2\bigr)^{1/2}\\
    &=\sqrt{2}\,B_\Sigma B_{\tilde{w}}\,P_{\max,r}\,
    \sqrt{\mathcal{L}(\boldsymbol{\theta}_{\text{eff}})-\mathcal{L}^\star},
\end{align*}
and similarly,
\begin{align*}
    \|\nabla_{\mathbf{P}_{12}}\mathcal{L}(\boldsymbol{\theta}_{\text{eff}})\|_F
    &\leq\sqrt{2}\,B_\Sigma B_{\tilde{w}}\,V_{\max,r}\,
    \sqrt{\mathcal{L}(\boldsymbol{\theta}_{\text{eff}})-\mathcal{L}^\star}.
\end{align*}
Combining the two estimates and recalling that
\begin{align*}
    K_r=\sqrt{2}\,B_\Sigma B_{\tilde{w}}\sqrt{P_{\max,r}^2+V_{\max,r}^2},
\end{align*}
we obtain the gradient upper bound
\begin{align}\label{eq: gradient upper bound}
    \|\nabla\mathcal{L}(\boldsymbol{\theta}_{\text{eff}})\|
    \leq K_r\,\sqrt{\mathcal{L}(\boldsymbol{\theta}_{\text{eff}})-\mathcal{L}^\star},
    \qquad\forall\,\boldsymbol{\theta}_{\text{eff}}\in\mathcal{T}_r(I).
\end{align}

\paragraph{Step 3: Invariance of the tube.}
Let $\boldsymbol{\theta}_{\text{eff}}(t)$ denote the maximal solution of the gradient flow~\eqref{eq: gradient flow} starting from $\boldsymbol{\theta}_{\text{eff}}(0)$. Since $\nabla\mathcal{L}$ is locally Lipschitz, a unique local solution exists. Define the exit time
\begin{align*}
    \tau:=\inf\bigl\{t>0:\boldsymbol{\theta}_{\text{eff}}(t)\notin\mathcal{T}_r(I)\bigr\}.
\end{align*}
We will show that $\tau=\infty$. For all $t<\tau$, the local PL inequality~\eqref{eq: local pl} holds, and along the gradient flow,
\begin{align*}
    \frac{d}{dt}\mathcal{L}(\boldsymbol{\theta}_{\text{eff}}(t))
    &=\bigl\langle\nabla\mathcal{L}(\boldsymbol{\theta}_{\text{eff}}(t)),
    \dot{\boldsymbol{\theta}}_{\text{eff}}(t)\bigr\rangle\\
    &=-\|\nabla\mathcal{L}(\boldsymbol{\theta}_{\text{eff}}(t))\|^2\\
    &\leq-2\mu_r\bigl(\mathcal{L}(\boldsymbol{\theta}_{\text{eff}}(t))-\mathcal{L}^\star\bigr).
\end{align*}
Since $\mathcal{L}^\star=0$, Gronwall's inequality yields the loss decay bound
\begin{align}\label{eq: loss decay}
    \mathcal{L}(\boldsymbol{\theta}_{\text{eff}}(t))-\mathcal{L}^\star
    \leq e^{-2\mu_r t}\bigl(\mathcal{L}(\boldsymbol{\theta}_{\text{eff}}(0))-\mathcal{L}^\star\bigr),
    \qquad\forall\,t<\tau.
\end{align}
Combining~\eqref{eq: gradient upper bound} with~\eqref{eq: loss decay}, for every $t<\tau$,
\begin{align*}
    \|\dot{\boldsymbol{\theta}}_{\text{eff}}(t)\|
    &=\|\nabla\mathcal{L}(\boldsymbol{\theta}_{\text{eff}}(t))\|\\
    &\leq K_r\,\sqrt{\mathcal{L}(\boldsymbol{\theta}_{\text{eff}}(t))-\mathcal{L}^\star}\\
    &\leq K_r\,\sqrt{\mathcal{L}(\boldsymbol{\theta}_{\text{eff}}(0))-\mathcal{L}^\star}\,e^{-\mu_r t}.
\end{align*}
Integrating, for every $t<\tau$,
\begin{align}\label{eq: parameter upper bound along trajectory}
    \|\boldsymbol{\theta}_{\text{eff}}(t)-\boldsymbol{\theta}_{\text{eff}}(0)\|
    &\leq\int_0^t\|\dot{\boldsymbol{\theta}}_{\text{eff}}(s)\|\,ds\nonumber\\
    &\leq K_r\,\sqrt{\mathcal{L}(\boldsymbol{\theta}_{\text{eff}}(0))-\mathcal{L}^\star}
    \int_0^t e^{-\mu_r s}\,ds\nonumber\\
    &\leq\frac{K_r}{\mu_r}\,\sqrt{\mathcal{L}(\boldsymbol{\theta}_{\text{eff}}(0))-\mathcal{L}^\star}.
\end{align}
We next upper-bound the initial excess loss in terms of the distance to $\mathfrak{M}_{I_0}$. From~\eqref{eq: loss-upper-bound}, since $\|\Delta\|=\mathrm{dist}(\boldsymbol{\theta}_{\text{eff}},\mathfrak{M}_I)$ and $\mathfrak{M}_{I_0}\subset\mathfrak{M}_I$,
\begin{align}\label{eq: excess loss upper bound 2}
    \mathcal{L}(\boldsymbol{\theta}_{\text{eff}})-\mathcal{L}^\star
    &\leq\tfrac{1}{2}\bigl(\sqrt{M_0}+C_Q r\bigr)^2\|\Delta\|^2\nonumber\\
    &=\tfrac{1}{2}\bigl(\sqrt{M_0}+C_Q r\bigr)^2\,
    \mathrm{dist}(\boldsymbol{\theta}_{\text{eff}},\mathfrak{M}_I)^2\nonumber\\
    &\leq\tfrac{1}{2}\bigl(\sqrt{M_0}+C_Q r\bigr)^2\,
    \mathrm{dist}(\boldsymbol{\theta}_{\text{eff}},\mathfrak{M}_{I_0})^2.
\end{align}
Combining~\eqref{eq: parameter upper bound along trajectory} and~\eqref{eq: excess loss upper bound 2}, for every $t<\tau$,
\begin{align*}
    \mathrm{dist}(\boldsymbol{\theta}_{\text{eff}}(t),\mathfrak{M}_{I_0})
    &\leq\mathrm{dist}(\boldsymbol{\theta}_{\text{eff}}(0),\mathfrak{M}_{I_0})
    +\|\boldsymbol{\theta}_{\text{eff}}(t)-\boldsymbol{\theta}_{\text{eff}}(0)\|\\
    &\leq\mathrm{dist}(\boldsymbol{\theta}_{\text{eff}}(0),\mathfrak{M}_{I_0})
    +\frac{K_r}{\mu_r}\,\sqrt{\mathcal{L}(\boldsymbol{\theta}_{\text{eff}}(0))-\mathcal{L}^\star}\\
    &\leq\mathrm{dist}(\boldsymbol{\theta}_{\text{eff}}(0),\mathfrak{M}_{I_0})
    \left(1+\frac{K_r\bigl(\sqrt{M_0}+C_Q r\bigr)}{\sqrt{2}\,\mu_r}\right)\\
    &<\eta_{I_0,r},
\end{align*}
where the last inequality follows from the initialization condition~\eqref{eq:initialization condition}. By the definition of $\eta_{I_0,r}$ in~\eqref{eq: def eta}, this implies
\begin{align*}
    \boldsymbol{\theta}_{\text{eff}}(t)\in\mathcal{T}_r(I),
    \qquad\forall\,t<\tau.
\end{align*}
Hence the solution cannot exit $\mathcal{T}_r(I)$ at any finite time, so $\tau=\infty$.

\paragraph{Step 4: Convergence to the optimum manifold.}
Since the trajectory is confined to the bounded neighborhood
\begin{align*}
    \bigl\{\boldsymbol{\theta}_{\text{eff}}\in\boldsymbol{\Theta}_{\text{eff}}:
    \mathrm{dist}(\boldsymbol{\theta}_{\text{eff}},\mathfrak{M}_{I_0})\leq\eta_{I_0,r}\bigr\},
\end{align*}
no finite-time blow-up occurs and the gradient flow is well defined for all $t\geq 0$. The exponential loss decay~\eqref{eq: loss decay} therefore holds globally:
\begin{align}\label{eq: loss decay 2}
    \mathcal{L}(\boldsymbol{\theta}_{\text{eff}}(t))-\mathcal{L}^\star
    \leq e^{-2\mu_r t}\bigl(\mathcal{L}(\boldsymbol{\theta}_{\text{eff}}(0))-\mathcal{L}^\star\bigr),
    \qquad\forall\,t\geq 0.
\end{align}
Combining~\eqref{eq: distance bound} with~\eqref{eq: loss decay 2},
\begin{align}\label{eq: parameter distance bound 2}
    \mathrm{dist}(\boldsymbol{\theta}_{\text{eff}}(t),\mathfrak{M}_I)^2
    &\leq\frac{1}{\lambda_r}\bigl(\mathcal{L}(\boldsymbol{\theta}_{\text{eff}}(t))-\mathcal{L}^\star\bigr)\nonumber\\
    &\leq\frac{1}{\lambda_r}\,e^{-2\mu_r t}
    \bigl(\mathcal{L}(\boldsymbol{\theta}_{\text{eff}}(0))-\mathcal{L}^\star\bigr).
\end{align}
It remains to show that the parameters themselves converge to a point on $\mathfrak{M}_I$. By the same argument as in~\eqref{eq: parameter upper bound along trajectory}, for any $0\leq t_1<t_2$,
\begin{align}\label{eq: parameter upper bound along trajectory 2}
    \|\boldsymbol{\theta}_{\text{eff}}(t_2)-\boldsymbol{\theta}_{\text{eff}}(t_1)\|
    &\leq\int_{t_1}^{t_2}\|\dot{\boldsymbol{\theta}}_{\text{eff}}(s)\|\,ds\nonumber\\
    &\leq K_r\,\sqrt{\mathcal{L}(\boldsymbol{\theta}_{\text{eff}}(0))-\mathcal{L}^\star}
    \int_{t_1}^{t_2} e^{-\mu_r s}\,ds\nonumber\\
    &\leq\frac{K_r}{\mu_r}\,\sqrt{\mathcal{L}(\boldsymbol{\theta}_{\text{eff}}(0))-\mathcal{L}^\star}\,e^{-\mu_r t_1}.
\end{align}
Hence $\boldsymbol{\theta}_{\text{eff}}(t)$ is a Cauchy curve as $t\to\infty$. By completeness of $\boldsymbol{\Theta}_{\text{eff}}$, there exists $\boldsymbol{\theta}_{\text{eff}}(\infty)\in\boldsymbol{\Theta}_{\text{eff}}$ such that
\begin{align*}
    \lim_{t\to\infty}\boldsymbol{\theta}_{\text{eff}}(t)=\boldsymbol{\theta}_{\text{eff}}(\infty).
\end{align*}
Letting $t_2\to\infty$ in~\eqref{eq: parameter upper bound along trajectory 2},
\begin{align*}
    \|\boldsymbol{\theta}_{\text{eff}}(t)-\boldsymbol{\theta}_{\text{eff}}(\infty)\|
    \leq\frac{K_r}{\mu_r}\,\sqrt{\mathcal{L}(\boldsymbol{\theta}_{\text{eff}}(0))-\mathcal{L}^\star}\,
    e^{-\mu_r t}.
\end{align*}
By continuity of $\mathcal{L}$ together with~\eqref{eq: loss decay 2},
\begin{align*}
    \mathcal{L}(\boldsymbol{\theta}_{\text{eff}}(\infty))=\mathcal{L}^\star.
\end{align*}
Moreover, by~\eqref{eq: parameter distance bound 2},
\begin{align*}
    \mathrm{dist}(\boldsymbol{\theta}_{\text{eff}}(t),\mathfrak{M}_I)\to 0
    \qquad\text{as }t\to\infty,
\end{align*}
and since $\mathfrak{M}_I$ is closed, we conclude that
\begin{align*}
    \boldsymbol{\theta}_{\text{eff}}(\infty)\in\mathfrak{M}_I.
\end{align*}
This completes the proof.
\end{proof}

\section{Proof of Theorem \ref{thm: policy gradient tf}}\label{app: proof of thm: policy gradient tf}
\begin{proof}
The argument follows the same four-step structure as the proof of Theorem~\ref{thm: semi-sarsa tf}; only the dimensions and the affine target change.

\paragraph{Step 1: Reduction.}
The actor-critic prompt~\eqref{eq: ac prompt} has the same block-diagonal form as the SARSA prompt~\eqref{eq:input-matrix}, except the trajectory columns $\mathbf{h}_{i+1}:=[\mathbf{x}_{i};\, \mathbf{0}],\, i=0,\ldots,n-1$ occupy the upper $(2d+m+1)$-block, while the parameter column $\mathbf{h}_{n+1}=[\mathbf{0}_{2d+m+1};\,\tilde{\mathbf{w}}]$ occupies the lower $(d+m+1)$-block, with 
\begin{align*}
    \mathbf{x}_i=\begin{bmatrix} \boldsymbol{\phi}_V(s_i)\\ \gamma \boldsymbol{\phi}_V(s_{i+1})\\ r_{i+1}\\\gamma^i \mathbf{g}_{\boldsymbol{\lambda}}(s_i,a_i) \end{bmatrix}\in\mathbb{R}^{2d+m+1}, \,\text{and }\tilde{\mathbf{w}}=\begin{bmatrix}1\\\boldsymbol{\lambda}\\\mathbf{w}\end{bmatrix}\in\mathbb{R}^{d+m+1}.
\end{align*} 
Partitioning $\mathbf{V},\mathbf{P}$ conformally with $D=(2d+m+1)+(d+m+1)$ as in \eqref{eq:block partition}, and letting $\bar{\mathbf{V}}_{21},\bar{\mathbf{V}}_{22}$ now denote the \emph{last $d+m$ rows} of $\mathbf{V}_{21},\mathbf{V}_{22}$, the same block computation as in Lemma~\ref{lem: transformer output} yields
\begin{equation}\label{eq: dw_tf ac}
    \begin{bmatrix}\Delta\boldsymbol{\lambda}_{\textsf{TF}}(\boldsymbol{\theta})\\\Delta\mathbf{w}_{\textsf{TF}}(\boldsymbol{\theta})\end{bmatrix}
    :=\textsf{TF}_{\boldsymbol{\theta}}(\mathbf{H})-\begin{bmatrix}\boldsymbol{\lambda}\\\mathbf{w}\end{bmatrix}
    \;=\;\bar{\mathbf{V}}_{21}\hat{\boldsymbol{\Sigma}}\mathbf{P}_{12}\tilde{\mathbf{w}}+\tfrac{1}{n}\bar{\mathbf{V}}_{22}\tilde{\mathbf{w}}\tilde{\mathbf{w}}^\top\mathbf{P}_{22}\tilde{\mathbf{w}},
\end{equation}
where $\hat{\boldsymbol{\Sigma}}\in\mathbb{R}^{(2d+m+1)\times(2d+m+1)}$ is the empirical second-moment matrix of the trajectory features:
\begin{equation*}
    \hat{\boldsymbol{\Sigma}}=\frac{1}{n}\sum_{i=0}^{n-1}\mathbf{x}_i\mathbf{x}_i^\top:=\begin{bmatrix}
        \hat{\boldsymbol{\Sigma}}_{\phi_V\phi_V} & \gamma\hat{\boldsymbol{\Sigma}}_{\phi_V\phi_{V+}} & \hat{\boldsymbol{\Sigma}}_{\phi_V r} & \hat{\boldsymbol{\Sigma}}_{\phi_V g}\\
        \gamma\hat{\boldsymbol{\Sigma}}_{\phi_{V+}\phi_V} & \gamma^2\hat{\boldsymbol{\Sigma}}_{\phi_{V+}\phi_{V+}} & \gamma\hat{\boldsymbol{\Sigma}}_{\phi_{V+} r} & \gamma\hat{\boldsymbol{\Sigma}}_{\phi_{V+} g}\\
        \hat{\boldsymbol{\Sigma}}_{r\phi_V} & \gamma\hat{\boldsymbol{\Sigma}}_{r\phi_{V+}} & \hat{\boldsymbol{\Sigma}}_{rr} & \hat{\boldsymbol{\Sigma}}_{rg}\\
        \hat{\boldsymbol{\Sigma}}_{g\phi_V} & \gamma\hat{\boldsymbol{\Sigma}}_{g\phi_{V+}} & \hat{\boldsymbol{\Sigma}}_{gr} & \hat{\boldsymbol{\Sigma}}_{gg}
    \end{bmatrix}.
\end{equation*}
The blocks $\mathbf{V}_{11},\mathbf{V}_{12},\mathbf{P}_{11},\mathbf{P}_{21}$ and the first row of $\mathbf{V}_{21},\mathbf{V}_{22}$ are unconstrained.

\paragraph{Step 2: Cancel the cubic term.}
The actor-critic update~\eqref{eq: actor-critic batch} can be written as
\begin{equation}\label{eq: dw_ac}
    \begin{bmatrix}\Delta\boldsymbol{\lambda}_{\text{AC}}\\\Delta\mathbf{w}_{\text{AC}}\end{bmatrix}
    =\begin{bmatrix}\alpha\bigl[\hat{\boldsymbol{\Sigma}}_{gr}+\gamma\hat{\boldsymbol{\Sigma}}_{g\phi_{V+}}\mathbf{w}-\hat{\boldsymbol{\Sigma}}_{g\phi_V}\mathbf{w}\bigr]\\ \beta\bigl[\hat{\boldsymbol{\Sigma}}_{\phi_V r}+\gamma\hat{\boldsymbol{\Sigma}}_{\phi_V\phi_{V+}}\mathbf{w}-\hat{\boldsymbol{\Sigma}}_{\phi_V\phi_V}\mathbf{w}\bigr]\end{bmatrix},
\end{equation}
which is affine in $(\boldsymbol{\lambda},\mathbf{w})$. Setting $\bar{\mathbf{V}}_{22}^\star=\mathbf{P}_{22}^\star=\mathbf{0}$ removes the cubic term in \eqref{eq: dw_tf ac}.

\paragraph{Step 3: Match the affine term.}
We require $\bar{\mathbf{V}}_{21}^\star\hat{\boldsymbol{\Sigma}}\mathbf{P}_{12}^\star\tilde{\mathbf{w}}=\bigl[\Delta\boldsymbol{\lambda}_{\text{AC}};\,\Delta\mathbf{w}_{\text{AC}}\bigr]$ for every $(\boldsymbol{\lambda},\mathbf{w})$. Take
\begin{equation*}
    \bar{\mathbf{V}}_{21}^\star=\begin{bmatrix}\mathbf{0}_{m\times d} & \mathbf{0}_{m\times d} & \mathbf{0}_{m\times 1} & \alpha\mathbf{I}_m\\ \beta\mathbf{I}_d & \mathbf{0}_{d\times d} & \mathbf{0}_{d\times 1} & \mathbf{0}_{d\times m}\end{bmatrix},
\end{equation*}
which extracts the bottom $m$ rows (prefactor by $\alpha$), and the top $d$ rows (prefactor by $\beta$) of $\hat{\boldsymbol{\Sigma}}$:
\begin{equation*}
    \bar{\mathbf{V}}_{21}^\star\hat{\boldsymbol{\Sigma}}=\begin{bmatrix}\alpha\hat{\boldsymbol{\Sigma}}_{g\phi_V} & \alpha\gamma\hat{\boldsymbol{\Sigma}}_{g\phi_{V+}} & \alpha\hat{\boldsymbol{\Sigma}}_{gr} & \alpha\hat{\boldsymbol{\Sigma}}_{gg}\\ \beta\hat{\boldsymbol{\Sigma}}_{\phi_V\phi_V} & \beta\gamma\hat{\boldsymbol{\Sigma}}_{\phi_V\phi_{V+}} & \beta\hat{\boldsymbol{\Sigma}}_{\phi_V r} & \beta\hat{\boldsymbol{\Sigma}}_{\phi_V g}\end{bmatrix}.
\end{equation*}
The matching condition then reduces to $\mathbf{P}_{12}^\star\tilde{\mathbf{w}}=[-\mathbf{w};\,\mathbf{w};\,1;\,\mathbf{0}_m]$. With $\tilde{\mathbf{w}}=[1;\boldsymbol{\lambda};\mathbf{w}]$, this holds for every $(\boldsymbol{\lambda},\mathbf{w})$ when
\begin{equation*}
    \mathbf{P}_{12}^\star=\begin{bmatrix}\mathbf{0}_{d\times 1} & \mathbf{0}_{d\times m} & -\mathbf{I}_d\\ \mathbf{0}_{d\times 1} & \mathbf{0}_{d\times m} & \mathbf{I}_d\\ 1 & \mathbf{0}_{1\times m} & \mathbf{0}_{1\times d}\\ \mathbf{0}_{m\times 1} & \mathbf{0}_{m\times m} & \mathbf{0}_{m\times d}\end{bmatrix}.
\end{equation*}

\paragraph{Step 4: Assembly and scaling.}
Leaving the unconstrained blocks as $*$, we recover $(\mathbf{P}^\star,\mathbf{V}^\star)$ as displayed in \eqref{eq:QKV, actor-critic}. Bilinearity of \eqref{eq: dw_tf ac} in $(\mathbf{V},\mathbf{P})$ gives the same scaling invariance under $(\mathbf{P}^\star,\mathbf{V}^\star)\mapsto(c\mathbf{P}^\star,c^{-1}\mathbf{V}^\star)$ for any $c\neq 0$, completing the construction.
\end{proof}

\section{Additional Experiments}\label{app:experiments}
\subsection{Training loss curves}\label{app: training loss}
We plot the training loss $\hat{\mathcal{L}}(\boldsymbol{\theta}^{(t)})$ over training iterations for both the SARSA transformer and the Actor-Critic transformer in our experiments presented in Section \ref{sec: experiments}. The results are shown in Figure \ref{fig: training loss}.
\begin{figure}[H]
    \centering
    \begin{subfigure}{0.45\textwidth}
        \centering
        \includegraphics[width=\linewidth]{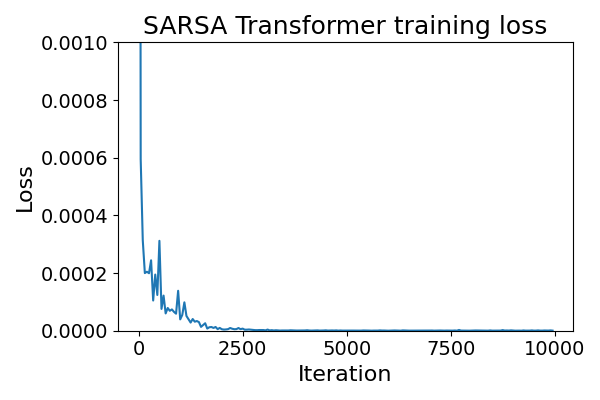}
        \caption{SARSA transformer.}
        \label{fig: training loss sarsa}        
    \end{subfigure}
    \hfill
    \begin{subfigure}{0.45\textwidth}
        \centering
        \includegraphics[width=\linewidth]{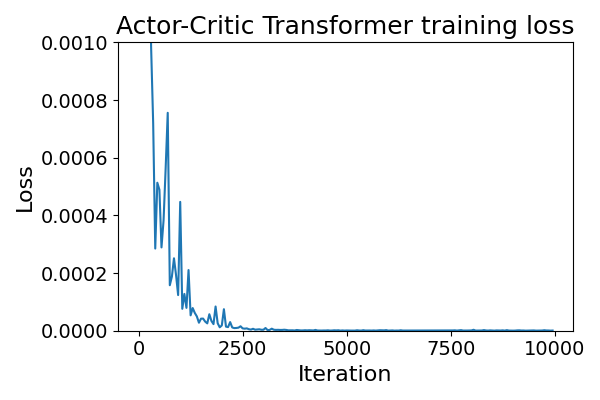}
        \caption{Actor-Critic transformer.}
        \label{fig: training loss ac}
    \end{subfigure}
    \caption{Training loss $\hat{\mathcal{L}}(\boldsymbol{\theta}^{(t)})$ over training iterations for the SARSA transformer (left) and the Actor-Critic transformer (right). In both settings, the loss decays approximately exponentially to near zero, consistent with the local convergence guarantee in Theorem~\ref{thm:local-convergence-manifold}.}
    \label{fig: training loss}
\end{figure}

\subsection{Experimental Detail}
The trainings and evaluations of the transformers used in our experiment were conducted on a Windows 11 machine with the following specifications:

\begin{itemize}
    \item GPU: NVIDIA GeForce RTX 4090
    \item CPU: Intel Core i9-14900KF
    \item Memory: 32 GB DDR5, 5600MHz
\end{itemize}

The training procedure takes approximately 20 minutes for the IC-SARSA transformer and 30 minutes for the IC-AC transformer; evaluation takes approximately 5 and 10 minutes, respectively.
\newpage

\end{document}